\documentclass[11pt]{article}
\usepackage[margin=1in]{geometry}
\usepackage[T1]{fontenc}
\usepackage{placeins,titlesec}
\titlespacing*{\section}{0pt}{2.4ex plus .4ex minus .2ex}{1.0ex plus .2ex}
\titlespacing*{\subsection}{0pt}{1.8ex plus .3ex minus .2ex}{.8ex plus .1ex}
\usepackage{lmodern,microtype,amsmath,amssymb,amsthm,bm,booktabs,array,enumitem,xcolor,float,tikz,multicol,graphicx}
\usetikzlibrary{positioning,arrows.meta,fit}

\usepackage{etoolbox}
\apptocmd{\thebibliography}{\small\setlength{\itemsep}{3pt}\setlength{\parskip}{0pt}}{}{}
\usepackage[colorlinks=true,allcolors=blue!45!black]{hyperref}
\floatstyle{ruled}\newfloat{algorithm}{tbp}{loa}\floatname{algorithm}{Algorithm}
\newtheorem{theorem}{Theorem}\newtheorem{proposition}{Proposition}\newtheorem{corollary}{Corollary}
\newcommand{\E}{\mathbb E}\newcommand{\R}{\mathbb R}\newcommand{\HH}{\mathcal H}
\newcommand{\A}{\mathcal A}\newcommand{\Sset}{\mathcal S}\newcommand{\norm}[1]{\lVert#1\rVert}
\newcommand{\ind}{\mathbf 1}\newcommand{\eps}{\varepsilon}
\AtBeginDocument{\setlength{\abovedisplayskip}{6.5pt}\setlength{\belowdisplayskip}{6.5pt}\setlength{\abovedisplayshortskip}{4pt}\setlength{\belowdisplayshortskip}{4pt}}
\setlist[itemize]{leftmargin=*,itemsep=1.5pt,topsep=2pt}
\title{Gaussian Process Bandits for AI System Optimization with Selectively Verified Feedback}
\author{Shuang Wu\\UCLA\\\texttt{shuangwu222@ucla.edu}}
\date{}

\begin{document}
% Register references chronologically before the first citation.
% With unsrt this preserves the publication-year order in reference.bib.
\nocite{nitinawarat,javdani,garivierkaufmann,cheneced,baranysoberon,altschuler,fiez,leevaliant,prouhetreview,chaosnli,kirschnerpm,raychowdhurycme,bogunovic,camilleri,cbm,reda,linreboot,mukherjee,rangi,takemorikme,caikandasamy,mikkola,verma,wangpost,ares,chandak,gnnts,hbbops,aolucb,asb,cabs,fc2fb,gong,harding,impatient,laplacianbandit,mlaucb,naod,pagp,panel32,postaction,ppai,reyzin,schur,weillm,zeng}
\maketitle

\begin{abstract}
Gaussian process bandits exploit similarities among configurations to optimize prompts, retrieval systems, and agent workflows. In these applications, however, inexpensive model-based evaluations can disagree with trusted task outcomes. We study selective verification when an action is chosen before a diagnostic is observed and trusted feedback is informative on an unknown set of diagnostic states. A finite reliability class and a finite RKHS signal dictionary describe these two sources of uncertainty. We construct a fixed audit design that combines reliability coverage with reward separation and establish a finite-time regret bound that accounts for a hard audit budget and the rounds needed to obtain informative observations. A safe adaptive extension mixes this robust design with predictable Chernoff allocations while preserving a uniform identification floor and exact expected acquisition-cost accounting. For signed-coordinate kernels, structured and diffuse reliability classes require different audit scales despite sharing the same reward geometry. Synthetic experiments distinguish these certified designs from an adaptive posterior heuristic and GP controls. Safe adaptive allocation reduces aggregate regret relative to the fixed design under the exact dictionary, but its finite-budget error certificate is numerically loose. Conditional-feedback errors can remove the heuristic's advantage and target perturbations can reverse it. Separate AI-task replays assess feedback correction and fitted-dictionary policies without establishing end-to-end validation of a learned selective-verification method.
\end{abstract}

\section{Introduction}

Language-model judges make repeated evaluation inexpensive, but they can reverse the ranking of prompts, retrieval pipelines, or agent configurations. A Gaussian process (GP) fitted to such scores can estimate the wrong objective precisely. Trusted evaluation---human adjudication, execution in a protected environment, or a high-quality test suite---repairs the target but can dominate total cost. This motivates a bandit that sees cheap post-action information and selectively purchases the trusted outcome.

Existing methods cover selective audits, prediction-powered inference, auxiliary-feedback kernel bandits, and post-surrogate acquisition \cite{verma,aolucb,asb,ppai}. We ask how many trusted outcomes are required when the learner must discover \emph{where} feedback is informative. For unrestricted outcome laws, target-kernel rank and a fixed finite collection of conditional statistics do not determine that information.

We introduce a finite conditional-reliability class. A known dictionary $\{C_h\}$ contains possible informative diagnostic sets, but $h$ is unknown. Its fractional hitting number $\tau^\star$ measures the audit mass needed to touch every candidate reliable set. We couple this uncertainty to a finite RKHS signal dictionary through a max--min experimental design. The resulting rate depends on reliability coverage and on the action energy needed to distinguish reward models with different optima.

For example, RAG configurations differ in their prompts and retrieval pipelines. A cheap diagnostic can reveal request or output categories where trusted correctness distinguishes configurations. A selective policy must discover these categories and acquire informative observations while paying for every intervening deployment. This separates reward geometry, conditional reliability, and acquisition time.

Our contributions are:
\begin{itemize}
\item We prove a finite-time bound with audit scale $\tau^\star\mathfrak D_+$ and characterize the exact expected acquisition cost of a joint action--state design. A safe adaptive mixture preserves a uniform Hellinger-information floor under a pathwise audit cap. A prior-mixture converse permits fully adaptive auditing and infinite signal classes.
\item For signed rank-$d$ signals, we establish matching polynomial audit scales of $d\tau^\star/\Delta^2$ for structured reliability and $d(\tau^\star)^2/\Delta^2$ for a diffuse reliability class. The deterministic structured model admits an exact asymptotic regret coefficient expressed by a coverage problem.
\item We distinguish feedback-law error from target approximation. Two rank-one dictionaries match any prescribed finite collection of bounded feedback summaries and have identical targets, yet their regret differs by a factor growing as $K/\log K$ at a common $\log_2 K$ audit budget. Sequential audit-only controls provide reliability-free comparisons.
\item We evaluate the proved designs, an adaptive-action posterior heuristic, and GP controls across kernel geometries, overlapping reliability sets, rare states, and model perturbations. AI-task replays examine calibration and acquisition costs without establishing end-to-end validation of a learned reliability model.
\end{itemize}

\begin{figure}[t]
\centering
\includegraphics[width=\linewidth]{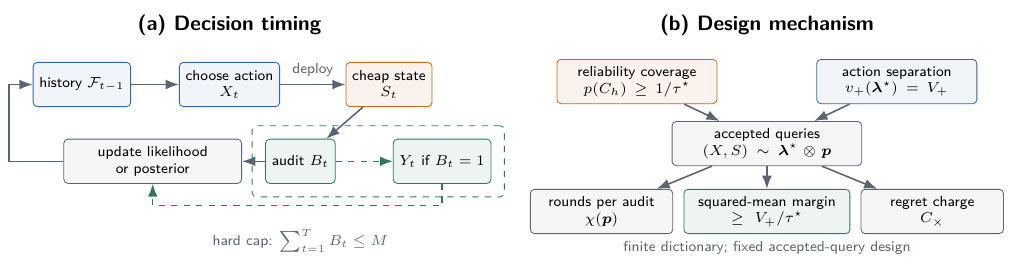}
\caption{Selective-verification protocol. (a) Given the observed history $\mathcal F_{t-1}$, actions precede diagnostics, which inform the audit decision. Dashed arrows show when a purchased outcome enters the update. (b) Reliability coverage and action separation bound the squared-mean margin per audit. The factors $\chi(\bm p)$ and $C_\times$ charge acquisition time and regret. Proposition~\ref{prop:safeadaptive} adds predictable adaptive allocation.}
\label{fig:model}
\end{figure}

\section{Related work}

\textbf{Selective verification and surrogate feedback.} Prediction-powered best-arm identification and adaptive inference combine inexpensive predictions with selectively acquired trusted labels \cite{aolucb,ppai}. Limited-verification bandits and surrogate-reward bandits study trusted feedback under attacks or machine-learning-generated rewards \cite{rangi,mlaucb}. ASB permits acquisition after observing a surrogate and accounts for learned refinements and hard budgets \cite{asb}. Confidence--budget matching and costly-observation bandits provide reward-only controls \cite{cbm,schur}. Our question concerns the additional cost of identifying an unknown reliable-state set while optimizing cumulative true reward.

\textbf{Kernel bandits and AI evaluation.} Auxiliary-feedback \texttt{IGP-UCB}, prediction-augmented \texttt{GP-UCB}, and robust multi-fidelity optimization exploit additional feedback under structural assumptions \cite{mikkola,verma,pagp}. Kernel experimental design and misspecified GP bandits characterize reward geometry and target approximation \cite{bogunovic,camilleri}. Related exploration methods include residual-bootstrap \texttt{LinReBoot}, graph-neural \texttt{GNN-TS}, and Laplacian kernelized bandits \cite{linreboot,gnnts,laplacianbandit}. ARES and HbBoPs connect trusted evaluation and kernel optimization to RAG and prompt selection \cite{ares,hbbops}. We study how conditional reliability interacts with kernel decision geometry, rather than a new kernel posterior or a generic correction estimator.

\textbf{Controlled sensing and decision-focused learning.} Controlled sensing, decision-region active learning, and transductive design allocate measurements to distinguish consequential alternatives \cite{nitinawarat,javdani,cheneced,fiez,mukherjee}. Best-arm identification with post-action context provides context-tracking and Track-and-Stop rules \cite{postaction}. These ideas motivate our likelihood decoder and adaptive allocation. Our analysis separates reliability coverage, action separation, and the deployment cost imposed by choosing an action before observing the diagnostic.

\textbf{Partial monitoring and identifiability.} Kernelized partial monitoring, conditional mean embeddings, and cost-aware streaming labels provide broader sequential abstractions \cite{kirschnerpm,raychowdhurycme,takemorikme,caikandasamy}. Exact rejection sampling implements the desired accepted-query law \cite{chandak}. Classical best-arm and mixture testing explain the polynomial rates in our structured--diffuse comparison \cite{garivierkaufmann,leevaliant,fc2fb}, while contaminated and misspecified identification establish related identifiability failures \cite{altschuler,reda}. Our results specialize these tools to a finite reliability--signal dictionary, with an adaptive converse and matched-summary counterexamples. Appendix~\ref{app:comparators} compares the assumptions and guarantees in detail.

\section{Problem formulation}\label{sec:problem}
\subsection{AI evaluation protocol and objective}
Each action is a fixed prompt, retrieval pipeline, or agent configuration used on a fresh request. A cheap diagnostic is observed after deployment, and a trusted assessment is available only if purchased. We optimize expected trusted quality under a stationary action set and kernel. Request-dependent routing and within-agent-step optimization are outside this model.

Let $\A$ and $\Sset$ be finite, with integer horizon $T\ge1$ and audit cap $M\ge1$. At round $t$, the learner chooses $X_t\in\A$, observes $S_t\in\Sset$, and then chooses $B_t\in\{0,1\}$. If $B_t=1$, it spends one audit and observes $Y_t\in\{0,1\}$. The policy satisfies $\sum_{t=1}^T B_t\le M$ almost surely. The audit decision may use $S_t$, so the lower bounds permit post-state acquisition.

\subsection{Conditional reliability and kernel signals}
Write $\Delta(\mathcal X)$ for the probability simplex on a finite set $\mathcal X$ and $[x]_+=\max\{x,0\}$. States are i.i.d. from a known action-independent law $\mu$, with $\mu_{\min}=\min_s\mu(s)>0$. A known finite family $\{C_h:h\in\mathcal H\}$ has $\mu(C_h)=\rho>0$. Its fractional hitting number is
\begin{equation}\label{eq:tau}
\tau^\star=\min_{\bm x\ge0}\left\{\sum_{s\in\Sset}x_s:\sum_{s\in C_h}x_s\ge1\ \forall h\right\}.
\end{equation}
For an optimizer, $p_s=x_s/\tau^\star$ obeys $p(C_h)\ge1/\tau^\star$.

Let $k$ be a known positive-semidefinite kernel on $\A$, with RKHS $\HH_k$. A finite signal dictionary $\mathcal U=\{u_z:z\in\mathcal Z\}\subset\HH_k$ satisfies
\[
\norm{u_z}_{\HH_k}\le B,\qquad \norm{u_z}_\infty\le U\le\tfrac14,
\qquad a_z^\star\in\arg\max_{a\in\A}u_z(a).
\]
Under environment $(h,z)$, audited outcomes are conditionally independent with
\begin{equation}\label{eq:generalmodel}
Y_t\mid(S_t=s,X_t=a)\sim\operatorname{Bern}\!\left(\tfrac12+\ind\{s\in C_h\}u_z(a)\right).
\end{equation}
The target is $f_z(a)=\tfrac12+\rho u_z(a)$ and
\[
R_T=\sum_{t=1}^T [f_z(a_z^\star)-f_z(X_t)],\qquad R_T^\star=\inf_{\text{causal policies}}\sup_{h,z}\E_{h,z}[R_T].
\]
The known centering constant is $1/2$, and the kernel models the action-dependent target component. Trusted outcomes are informative about the best action within $C_h$ and uninformative outside it. The diagnostic $S_t$ need not be a reward prediction, and the model does not assume an arbitrary judge is unbiased.

Fix one maximizing action $a_z^\star$ for each signal, including a deterministic tie rule. The learner knows the dictionaries but neither $h$ nor $z$. Equal reliable-set mass and action-independent $\mu$ make the target depend only on $z$, while $h$ controls information acquisition. This factorization need not hold for action-dependent diagnostics. Appendix~\ref{app:actionstate} extends only the acquisition rule to a known action-dependent law. Neither result learns or validates the outcome dictionary.

\subsection{Decision geometry and scope}

For $\bm{\lambda}\in\Delta(\A)$, define the action-separation margin
\begin{align}
v_+(\bm{\lambda})=\min\bigg\{&\min_z\E_{a\sim\bm{\lambda}}[u_z(a)^2],\nonumber\\[-2pt]
&\min_{a_z^\star\ne a_{z'}^\star}
\E_{a\sim\bm{\lambda}}[(u_z(a)-u_{z'}(a))^2]\bigg\},
\qquad
V_+=\max_{\bm{\lambda}} v_+(\bm{\lambda}),\quad \mathfrak D_+=V_+^{-1}.
\label{eq:dplus}
\end{align}
Writing $u_z(a)=\langle\bm\phi(a),\bm\vartheta_z\rangle$, every constraint is linear in $\bm{\lambda}$ through $\bm V(\bm{\lambda})=\sum_a\lambda_a\bm\phi(a)\bm\phi(a)^\top$. Hence $V_+$ is a pre-outcome linear program, not an oracle posterior quantity. We assume $V_+>0$ and at least two different designated optima. The features $\bm\phi(a)$ and coefficients $\bm\vartheta_z$ are vectors in a feature representation of $k$.

\section{Selective Verification with Kernel Decision Geometry}\label{sec:method}
\subsection{Reliability coverage and kernel action selection}
Two pre-outcome linear programs determine the policy. State allocation covers every candidate reliable set. Action allocation separates weak individual signals and reward pairs with different optima. Both use the known dictionaries, without knowledge of the true environment.

Let $\bm{\lambda}^\star$ optimize \eqref{eq:dplus}. For an optimizer $\bm p$ of \eqref{eq:tau}, define its action-before-state realization factor
\[
\chi(\bm p)=\max_{s\in\Sset}\frac{p_s}{\mu(s)}.
\]
Let $m_{h,z}(a,s)=\tfrac12+\ind\{s\in C_h\}u_z(a)$ and write $p_{h,z}(y\mid a,s)=m_{h,z}(a,s)^y[1-m_{h,z}(a,s)]^{1-y}$. Algorithm~\ref{alg:design} compiles the product law $\bm{\lambda}^\star\otimes\bm p$ by rejection sampling after the state arrives. The audit coin is drawn only after observing the state, independently of the unobserved outcome conditional on the action, state, and past.
For $\Lambda\ge1$, define the exact expected number of design rounds before $M$ accepts or horizon end,
\[
\mathsf N_{T,M}(\Lambda)=
\sum_{t=0}^{T-1}\Pr\{\operatorname{Bin}(t,\Lambda^{-1})<M\}.
\]

\begin{algorithm}[t]
\caption{\texttt{Reliability-Kernel-Design}}
\label{alg:design}
\begin{enumerate}[leftmargin=*,label=\arabic*.,itemsep=1pt]
\item \textbf{Input:} $T,M,\mu,\{C_h\},\{u_z\}$. Solve \eqref{eq:tau} and \eqref{eq:dplus} for $\bm p,\bm{\lambda}^\star$.
\item \textbf{Initialize:} $n\gets0$, $\ell_{h,z}\gets0$ for every $(h,z)$.
\item \textbf{For} $t=1,\ldots,T$ \textbf{do}
\item \quad\textbf{If} $n<M$ \textbf{then} draw $X_t\sim\bm{\lambda}^\star$ and observe $S_t=s$.
\item \qquad Draw $B_t\sim\operatorname{Bern}\!\left(p_s/[\chi(\bm p)\mu(s)]\right)$.
\item \qquad\textbf{If} $B_t=1$ \textbf{then} purchase $Y_t$ and set
\[
 n\gets n+1,\qquad
 \ell_{h,z}\gets\ell_{h,z}+\log p_{h,z}(Y_t\mid X_t,s)\quad\text{for all }(h,z).
\]
\item \qquad\textbf{If} $n=M$ \textbf{then} set $(\widehat h,\widehat z)\in\arg\max_{h,z}\ell_{h,z}$.
\item \quad\textbf{Else} play $X_t=a_{\widehat z}^\star$, observe $S_t$, and set $B_t=0$.
\item \quad\textbf{End if.}
\item \textbf{End for.}
\item \textbf{Return:} $a_{\widehat z}^\star$ if $n=M$ and ``insufficient audits'' otherwise.
\end{enumerate}
\end{algorithm}

\noindent\begin{minipage}{\linewidth}
\begin{proposition}[Product separation]\label{prop:separation}
For any $(h,z)$ and $(h',z')$ with $a_z^\star\ne a_{z'}^\star$,
\[
\E_{s\sim\bm p,a\sim\bm{\lambda}^\star}
[(m_{h,z}(a,s)-m_{h',z'}(a,s))^2]
\ge\frac{V_+}{\tau^\star},
\]
where $m_{h,z}$ is the Bernoulli mean in \eqref{eq:generalmodel}.
\end{proposition}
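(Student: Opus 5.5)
The plan is to condition on the state and reduce to the two kinds of constraints in the definition \eqref{eq:dplus} of $v_+$. Both $\Sset$ and $\A$ are finite, so the product expectation can be written as $\sum_{s}p_s\,\E_{a\sim\bm\lambda^\star}[(m_{h,z}(a,s)-m_{h',z'}(a,s))^2]$. The constant $\tfrac12$ cancels in the difference, so the inner integrand is
\[
\bigl(\ind\{s\in C_h\}u_z(a)-\ind\{s\in C_{h'}\}u_{z'}(a)\bigr)^2 .
\]
Note also that $a_z^\star\ne a_{z'}^\star$ forces $z\ne z'$. No assumption is placed on $h$ versus $h'$.

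I would split the states into four cases according to membership in $C_h$ and $C_{h'}$.
\begin{itemize}
\item If $s\in C_h\cap C_{h'}$, the integrand is $(u_z(a)-u_{z'}(a))^2$. Its $\bm\lambda^\star$-average is at least the pairwise term of $v_+(\bm\lambda^\star)$, because the designated optima differ.
\item If $s\in C_h\setminus C_{h'}$, the integrand is $u_z(a)^2$. Its average is at least $\min_{z''}\E_{\bm\lambda^\star}[u_{z''}^2]$.
\item If $s\in C_{h'}\setminus C_h$, the integrand is $u_{z'}(a)^2$, and the same bound applies.
\item If $s\notin C_h\cup C_{h'}$, the integrand vanishes.
\end{itemize}
Since $\bm\lambda^\star$ maximizes $v_+$, each inner expectation for $s\in C_h\cup C_{h'}$ is at least $v_+(\bm\lambda^\star)=V_+$.

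Summing over states then gives a lower bound of $V_+\,p(C_h\cup C_{h'})\ge V_+\,p(C_h)$. The coverage property of the normalized optimizer of \eqref{eq:tau}, namely $p(C_h)\ge1/\tau^\star$, finishes the proof.

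I do not expect a real obstacle. The one point needing care is the mixed case $s\in C_h\setminus C_{h'}$ (and its mirror). Here the two environments disagree about reliability rather than about the signal, and the pairwise separation constraint alone would not suffice. This is exactly why \eqref{eq:dplus} also includes the individual-energy constraint $\min_z\E[u_z^2]$. I would highlight this to explain why both terms appear in $v_+$.
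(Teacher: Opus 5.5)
Your proposal is correct and follows essentially the same route as the paper's proof. The paper uses the same partition into $C_h\cap C_{h'}$, $C_h\setminus C_{h'}$, and $C_{h'}\setminus C_h$, applies the pairwise and individual-energy constraints of $v_+(\bm\lambda^\star)$ on each piece, and concludes via $V_+\,p(C_h\cup C_{h'})\ge V_+\,p(C_h)\ge V_+/\tau^\star$.
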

\end{minipage}\par
The proof partitions the two reliable sets into their overlap and two differences. See Appendix~\ref{proof:separation}.

The product value is also controlled from above for every fixed accepted-query design. Define
\begin{align*}
V_\Delta&=\max_{\bm{\lambda}\in\Delta(\A)}
\min_{a_z^\star\ne a_{z'}^\star}
\E_{a\sim\bm{\lambda}}[(u_z(a)-u_{z'}(a))^2],\nonumber\\[-2pt]
J^\star&=\max_{\bm{\nu}\in\Delta(\A\times\Sset)}
\min_{a_z^\star\ne a_{z'}^\star}
\E_{(a,s)\sim\bm{\nu}}[(m_{h,z}(a,s)-m_{h',z'}(a,s))^2],
\end{align*}
where the second minimum ranges over all $h,h',z,z'$ with different designated optima.

\noindent\begin{minipage}{\linewidth}
\begin{proposition}[Fixed-design sandwich]\label{prop:fixedsandwich}
The exact squared-mean fixed-design value obeys
\[
\frac{V_+}{\tau^\star}\le J^\star\le\frac{V_\Delta}{\tau^\star}.
\]
Consequently the product design is exact whenever its individual-energy constraints are nonbinding, $V_+=V_\Delta$.
\end{proposition}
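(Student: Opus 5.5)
The lower bound $J^\star\ge V_+/\tau^\star$ is immediate from Proposition~\ref{prop:separation}. The product law $\bm\nu=\bm{\lambda}^\star\otimes\bm p$ is a feasible point of the outer maximum defining $J^\star$. That proposition shows its inner minimum, over all $(h,z),(h',z')$ with $a_z^\star\ne a_{z'}^\star$, is at least $V_+/\tau^\star$. So the substance is the upper bound $J^\star\le V_\Delta/\tau^\star$, which must hold for every joint design $\bm\nu\in\Delta(\A\times\Sset)$, including non-product ones.

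For the upper bound, I fix an arbitrary $\bm\nu$ and restrict the adversary to pairs sharing the same reliability index, $h=h'$. Then
\[
m_{h,z}(a,s)-m_{h,z'}(a,s)=\ind\{s\in C_h\}\,(u_z(a)-u_{z'}(a)),
\]
so the objective becomes $\sum_{s\in C_h}\sum_a\nu(a,s)(u_z(a)-u_{z'}(a))^2$. Write $q_h=\nu(\A\times C_h)$. When $q_h>0$, define the conditional action law $\bm\lambda^{(h)}(a)=\nu(a,C_h)/q_h$. The objective then equals $q_h\,\E_{a\sim\bm\lambda^{(h)}}[(u_z(a)-u_{z'}(a))^2]$, and minimizing over pairs $z,z'$ with different optima gives at most $q_hV_\Delta$, by the definition of $V_\Delta$. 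If $q_h=0$, the value is $0$.

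It remains to find an $h$ with $q_h\le1/\tau^\star$. This follows from LP duality for \eqref{eq:tau}. The dual is the fractional packing problem $\max\{\sum_h y_h: y\ge0,\ \sum_{h\ni s}y_h\le1\ \forall s\}$, and its optimal value is also $\tau^\star$. Take an optimal dual $y$ and normalize, $w_h=y_h/\tau^\star$. Let $\pi(s)=\nu(\A\times\{s\})$ be the state marginal. Then
\[
\sum_h w_h q_h=\frac1{\tau^\star}\sum_s\pi(s)\sum_{h\ni s}y_h\le\frac1{\tau^\star},
\]
so some $h$ has $q_h\le1/\tau^\star$. Combining the two steps bounds the inner minimum by $V_\Delta/\tau^\star$ for every $\bm\nu$, which gives the upper bound. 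The main subtlety is this averaging step: it must use the packing dual of the covering LP and be applied to the state marginal of an arbitrary, possibly correlated, $\bm\nu$. The conditioning trick handles that correlation.

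For the consequence, note that $V_+\le V_\Delta$ always holds, since $v_+$ is a minimum that includes the pairwise constraint. So $V_+=V_\Delta$ collapses the sandwich to $J^\star=V_+/\tau^\star$. The product design then attains the optimum, because by Proposition~\ref{prop:separation} it achieves at least $V_+/\tau^\star$.
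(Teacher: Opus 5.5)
Your proof is correct. It differs from the paper's only in how the upper bound handles the action side.

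The paper also restricts to $h=h'$. It then averages the objective over a product dual certificate $\bm q\otimes\bm\xi$. Here $\bm q$ is the normalized packing dual of \eqref{eq:tau}, exactly your $w$. The second factor $\bm\xi$ is a minimax-dual mixture over different-optimum pairs satisfying $\sum_{(z,z')}\xi_{z,z'}[u_z(a)-u_{z'}(a)]^2\le V_\Delta$ for every $a$. Both bounds hold pointwise, in $s$ and in $a$ respectively, so any correlation in $\bm\nu$ is irrelevant and the bound follows in one line.

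You instead pick a single $h$ whose set $\A\times C_h$ has $\bm\nu$-mass at most $1/\tau^\star$, and condition $\bm\nu$ on that set. This needs only the covering/packing duality for $\tau^\star$ and the elementary inequality $\min_{(z,z')}\E_{\bm\lambda^{(h)}}[(u_z-u_{z'})^2]\le V_\Delta$, which is immediate from the definition of $V_\Delta$ as a maximum. No minimax theorem for the action game is required.

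The paper's route gives one adversarial mixed strategy that certifies the bound simultaneously for every $\bm\nu$. Yours gives an explicit $\bm\nu$-dependent best response: the least-massive reliable set, paired with the weakest contrast under the conditional action law on it. Your remark that $V_+\le V_\Delta$ always holds, so the product design attains $J^\star$ when the two coincide, correctly spells out the paper's one-line conclusion.
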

\end{minipage}\par
A product dual certificate gives the upper bound (Appendix~\ref{proof:fixedsandwich}).

The individual energy constraints in \eqref{eq:dplus} are essential. Pairwise action separation alone can be hidden outside the overlap of two reliability sets. The method remains finite-dictionary and exact-kernel: $\mathcal H$, $\mathcal U$, and both designs must be frozen before outcomes are opened.
Maximum-likelihood decoding uses the margin $V_+/\tau^\star$ within controlled testing \cite{nitinawarat,fiez,mukherjee}. Adaptive sensing can outperform fixed designs in general multihypothesis problems \cite{nitinawarat}, so this comparison does not establish adaptive optimality.

\subsection{Finite-time regret with a hard audit budget}
The pathwise audit cap and expected-regret bound account for all deployed actions. Independence of accepted samples and stationary interarrival counts gives exact finite-horizon accounting. Full proofs appear in Appendix~\ref{app:proofs}.

\noindent\begin{minipage}{\linewidth}
\begin{theorem}[Finite-dictionary RKHS achievable rate]\label{thm:generalupper}
Under \eqref{eq:generalmodel} with $V_+>0$, Algorithm~\ref{alg:design} uses at most $M$ audits and satisfies
\[
\E[R_T]\le \frac{C_\times}{\chi(\bm p)}
\mathsf N_{T,M}\bigl(\chi(\bm p)\bigr)+
2\rho U\bigl[T-\mathsf N_{T,M}\bigl(\chi(\bm p)\bigr)\bigr]
|\mathcal H||\mathcal Z|
\exp\!\left(-\frac{M}{2\tau^\star\mathfrak D_+}\right).
\]
Here
\[
C_\times=\rho\chi(\bm p)\max_z
\E_{a\sim\bm{\lambda}^\star}[u_z(a_z^\star)-u_z(a)]
\le2\rho U\chi(\bm p).
\]
Since $\mathsf N_{T,M}(\Lambda)\le\min\{T,M\Lambda\}$, this refines the simpler untruncated charge whenever the horizon binds.
Thus $M\ge2\tau^\star\mathfrak D_+\log(2|\mathcal H||\mathcal Z|T)$ makes the exploitation-error term at most $\rho U$.
\end{theorem}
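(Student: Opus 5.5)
The plan is to split the regret into an exploration phase, the rounds with $n<M$, and an exploitation phase, the rounds after the $M$-th accept. First, the audit cap is immediate. Audits occur only in branch 4--6, which is entered only while $n<M$, and every audit increments $n$. Next, the acceptance law in each exploration round is computed exactly. Since $\Pr(B_t=1\mid X_t,S_t=s)=p_s/[\chi(\bm p)\mu(s)]\le1$ by definition of $\chi$, and $S_t\sim\mu$ independently of $X_t$, each round is accepted with probability $\sum_s\mu(s)p_s/(\chi\mu(s))=1/\chi(\bm p)$, independently across rounds. Conditional on acceptance, $(X_t,S_t)\sim\bm\lambda^\star\otimes\bm p$ exactly. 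The number of exploration rounds is $\min\{T,\text{time of the }M\text{th success}\}$ in a Bernoulli$(1/\chi)$ stream. Round $t+1$ is an exploration round iff fewer than $M$ successes have occurred in the first $t$ rounds. Summing these probabilities gives the expected count $\mathsf N_{T,M}(\chi(\bm p))$.

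\emph{Exploration regret.} In each exploration round, $X_t\sim\bm\lambda^\star$ is drawn independently of the past. The per-round expected regret is therefore $\rho\,\E_{a\sim\bm\lambda^\star}[u_z(a_z^\star)-u_z(a)]\le C_\times/\chi(\bm p)$. The event that round $t$ explores depends only on past coins, so Wald-type summation gives the first term. The bound $C_\times\le 2\rho U\chi$ follows from $\|u_z\|_\infty\le U$.

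\emph{Exploitation regret.} Each exploitation round costs at most $2\rho U$. In expectation there are $T-\mathsf N_{T,M}$ such rounds, and they carry nonzero regret only if $a^\star_{\widehat z}\ne a^\star_z$. The main step, and the main obstacle, is bounding the decoding error by $|\mathcal H||\mathcal Z|\exp(-M/(2\tau^\star\mathfrak D_+))$ independently of when the $M$-th accept occurs. The accepted samples are $M$ i.i.d.\ draws $(a,s)\sim\bm\lambda^\star\otimes\bm p$ with $Y$ from the true law. The error event is independent of the stopping time, because the accepted-sample sequence is independent of the interarrival gaps; this factorization yields the product $[T-\mathsf N]\cdot P(\text{error})$.

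\emph{Decoding error.} For any competitor $(h',z')$ with a different designated optimum, a Chernoff/Bhattacharyya bound gives
\[
\Pr(\ell_{h',z'}\ge\ell_{h,z})\le\bigl(\E\textstyle\sum_y\sqrt{p_{h,z}p_{h',z'}}\bigr)^M=(1-H^2)^M\le e^{-M\E H^2},
\]
where $H^2$ is the squared Hellinger distance between the two Bernoulli laws at $(a,s)$. The next step is to lower bound $H^2$ between $\operatorname{Bern}(m)$ and $\operatorname{Bern}(m')$ with $m,m'\in[1/4,3/4]$ (using $U\le1/4$). The bound $(\sqrt m-\sqrt{m'})^2\ge (m-m')^2/(4\max(m,m'))$ and its analogue for $1-m$ give $H^2\ge c(m-m')^2$ with a constant $c$. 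I expect the constant can be made $\ge 1/2$; this exact arithmetic is the delicate part, and if it falls short the exponent constant must be adjusted. Combining with Proposition~\ref{prop:separation}, $\E H^2\ge V_+/(2\tau^\star)=1/(2\tau^\star\mathfrak D_+)$. Ties in the argmax are counted as errors via the $\ge$ in the event. A union bound over at most $|\mathcal H||\mathcal Z|$ competitors then gives the stated factor.

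The final remark follows by substitution. When $M\ge 2\tau^\star\mathfrak D_+\log(2|\mathcal H||\mathcal Z|T)$, the error probability is at most $1/(2T)$. Since $T-\mathsf N\le T$, the second term is at most $\rho U$. The claim $\mathsf N_{T,M}(\Lambda)\le\min\{T,M\Lambda\}$ holds because the expected waiting time of the $M$-th success is $M\Lambda$.
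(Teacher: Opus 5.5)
Your argument has the same structure as the paper's proof. The common steps are:
\begin{enumerate}
\item i.i.d.\ Bernoulli$(1/\chi(\bm p))$ acceptances, with accepted law $\bm{\lambda}^\star\otimes\bm p$;
\item a predictable exploration indicator, which gives the $C_\times\mathsf N_{T,M}/\chi$ term;
\item Markov's inequality applied to the square-rooted likelihood ratio;
\item independence of the accepted triples from the geometric waiting times;
\item a union bound.
\end{enumerate}
For step 4, the paper extends the design phase past $T$, so the decoder's error event is defined even when $\tau_M>T$.

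The one step you leave open is where your sketch falls short of the stated constant. Bounding $(\sqrt m+\sqrt{m'})^2\le4\max(m,m')$, together with its complement analogue, yields only
\[
H^2\ge\frac{(m-m')^2}{2(1+|m-m'|)}.
\]
At $m=1/4$, $m'=3/4$ this gives $c=1/3$. That pair can occur, because $|u_z(a)-u_{z'}(a)|$ can reach $2U=1/2$ on the overlap $C_h\cap C_{h'}$. With $c=1/3$ the exponent weakens to $M/(3\tau^\star\mathfrak D_+)$. The final audit threshold would then need $3\tau^\star\mathfrak D_+\log(2|\mathcal H||\mathcal Z|T)$ instead of $2\tau^\star\mathfrak D_+\log(2|\mathcal H||\mathcal Z|T)$.

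The missing inequality is $H^2(\operatorname{Bern}(x),\operatorname{Bern}(y))\ge(x-y)^2/2$. It holds for all $x,y\in[0,1]$, so the $[1/4,3/4]$ restriction is unnecessary. Write
\[
H^2=\frac{(x-y)^2}{2}\left[\frac{1}{(\sqrt x+\sqrt y)^2}+\frac{1}{(\sqrt{1-x}+\sqrt{1-y})^2}\right]
\]
and use the sharper bound $(\sqrt x+\sqrt y)^2\le2(x+y)$. With $s=x+y$, the bracket is at least $\frac{1}{2s}+\frac{1}{2(2-s)}=\frac{1}{s(2-s)}\ge1$. Equivalently, $|x-y|=\operatorname{TV}\le\sqrt{H^2(2-H^2)}\le\sqrt{2H^2}$. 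With this inequality your proof is complete and coincides with the paper's.
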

\end{minipage}\par
Appendix~\ref{proof:generalupper} proves the result by factoring accepted triples from their geometric waiting times and applying a Hellinger likelihood bound.

\subsection{Joint action--verification policies and acquisition costs}
The audit scale $\tau^\star\mathfrak D_+$ measures information per accepted query. In contrast, $\chi(\boldsymbol p)$ controls how quickly those queries arrive, and $C_\times/\chi(\boldsymbol p)$ is the worst expected regret per design round. Rare informative states can therefore make an audit-efficient design costly in calendar time. If $\boldsymbol p=\mu$, every round is accepted and $\chi(\boldsymbol p)=1$.

The stopped bound charges exploitation only after all $M$ observations arrive. For fixed $M$ and a long horizon, the design charge approaches $MC_\times$.

The product construction is interpretable but can be conservative. To state the exact fixed-design cost, let $\mathcal E$ be a finite environment dictionary with at least two designated optima. Under $e\in\mathcal E$, fresh outcomes at $(a,s)$ have law $P_e^{a,s}$, independently across rounds conditional on actions and states. Define $f_e(a)=\E_{S\sim\mu}[\E_e[Y\mid a,S]]$, choose $a_e^\star\in\arg\max_a f_e(a)$, and put $\Delta_e(a)=f_e(a_e^\star)-f_e(a)$. Put $G=\max_{e,a}\Delta_e(a)$. For $\bm{\nu}\in\Delta(\A\times\Sset)$ define
\begin{align*}
\Lambda(\bm{\nu})&=\sum_{a\in\A}\max_{s\in\Sset}\frac{\nu(a,s)}{\mu(s)},
& C(\bm{\nu})&=\max_e\sum_{a\in\A}\Delta_e(a)\max_s\frac{\nu(a,s)}{\mu(s)},\nonumber\\[-2pt]
I(\bm{\nu})&=\min_{e,e':a_e^\star\ne a_{e'}^\star}
\sum_{a,s}\nu(a,s)H^2(P_e^{a,s},P_{e'}^{a,s}),
\end{align*}
where $H^2(P,Q)=1-\int\sqrt{dP,dQ}$.

\noindent\begin{minipage}{\linewidth}
\begin{proposition}[Action-before-state rejection-sampling identity]\label{prop:jointdesign}
There is a stationary action and post-state audit rule whose accepted pairs are i.i.d. $\bm{\nu}$ and whose acceptance probability per round is $1/\Lambda(\bm{\nu})$. Every causal rule producing one accepted pair with law $\bm{\nu}$ has expected waiting time at least $\Lambda(\bm{\nu})$ and sampling regret at least $C_e(\bm{\nu}):=\sum_a\Delta_e(a)\max_s\nu(a,s)/\mu(s)$ under $e$; the stationary rule attains both. After the $M$th audit, choose a maximum-likelihood environment and play its designated optimum without further audits. If the horizon arrives first, remain in the design phase. Then
\[
\sup_{e\in\mathcal E}\E_e[R_T]
\le \frac{C(\bm{\nu})}{\Lambda(\bm{\nu})}\mathsf N_{T,M}\bigl(\Lambda(\bm{\nu})\bigr)
+G\bigl[T-\mathsf N_{T,M}\bigl(\Lambda(\bm{\nu})\bigr)\bigr]
(|\mathcal E|-1)e^{-MI(\bm{\nu})}.
\]
The bound therefore holds after infimizing its right-hand side over $\bm{\nu}$.
\end{proposition}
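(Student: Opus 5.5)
The proof has four parts: a stationary rule that realizes $\bm\nu$, a lower bound on the waiting time and sampling regret of any rule, an exact count of expected design rounds, and a Hellinger bound on the maximum-likelihood error.

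\textbf{Construction.} Put $w_a=\max_s\nu(a,s)/\mu(s)$, so $\Lambda(\bm\nu)=\sum_a w_a$. In each design round, draw $X_t=a$ with probability $w_a/\Lambda(\bm\nu)$, observe $S_t=s\sim\mu$, and audit with probability $\nu(a,s)/[w_a\mu(s)]\le1$. A round then accepts $(a,s)$ with probability $\nu(a,s)/\Lambda(\bm\nu)$. Summing over $(a,s)$ gives acceptance probability $1/\Lambda(\bm\nu)$, and conditional on acceptance the pair has law $\bm\nu$. The audit coin uses only $(a,s)$ and fresh randomness, so accepted outcomes are independent draws from $P_e^{a,s}$. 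By i.i.d. rounds the accepted triples are i.i.d.\ and independent of the geometric waiting times.

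\textbf{Lower bound for any causal rule.} Take a rule that stops at its first accepted pair. Let $q_a$ be the expected number of rounds in which it plays $a$ before stopping, counting the stopping round. Each play of $a$ reveals a fresh state $s\sim\mu$ independent of the past. Hence the expected number of accepted pairs equal to $(a,s)$ is at most $q_a\mu(s)$, by optional stopping on the martingale of indicators. Since exactly one pair is accepted with law $\bm\nu$, we need $\nu(a,s)\le q_a\mu(s)$ for every $s$, that is, $q_a\ge w_a$. Summing gives an expected waiting time $\sum_a q_a\ge\Lambda(\bm\nu)$. Weighting by $\Delta_e(a)$ gives sampling regret at least $C_e(\bm\nu)$. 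The stationary rule has $q_a=\Lambda\cdot w_a/\Lambda=w_a$, so it attains both bounds.

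\textbf{Regret accounting.} Each design round has expected regret $\sum_a(w_a/\Lambda)\Delta_e(a)=C_e(\bm\nu)/\Lambda(\bm\nu)\le C(\bm\nu)/\Lambda(\bm\nu)$. Acceptances occur i.i.d.\ with probability $\Lambda^{-1}$. So round $t+1$ is a design round iff fewer than $M$ accepts occurred among the first $t$ rounds, an event of probability $\Pr\{\operatorname{Bin}(t,\Lambda^{-1})<M\}$. The expected number of design rounds is therefore exactly $\mathsf N_{T,M}(\Lambda)$, and the design contribution is $C(\bm\nu)\mathsf N_{T,M}/\Lambda$, by Wald / the tower property.

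For exploitation, regret in round $t$ is at most $G\,\Pr\{\text{round $t$ is post-design}\}\Pr\{\text{decoder wrong}\}$. The factorization holds because the accepted triples are independent of the waiting times. Summing over $t$ gives $G[T-\mathsf N_{T,M}]\Pr\{a^\star_{\hat e}\ne a^\star_e\}$.

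\textbf{Decoder error (main obstacle).} For each wrong $e'$, bound the error by a Chernoff/Bhattacharyya argument: $\Pr_e\{\ell_{e'}\ge\ell_e\}\le\E_e\prod_i\sqrt{p_{e'}/p_e}$. On i.i.d.\ accepted pairs from $\bm\nu$ this product equals
\[
\Bigl(1-\textstyle\sum_{a,s}\nu(a,s)H^2(P_e^{a,s},P_{e'}^{a,s})\Bigr)^M\le e^{-MI(\bm\nu)},
\]
using $1-x\le e^{-x}$. A union over the at most $|\mathcal E|-1$ environments with a different optimum finishes the bound. Environments sharing $a_e^\star$ cause no regret even if selected, so decoding the correct optimum suffices.

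The delicate points are the conditional independence of triples from the stopping times, and the tie-breaking in the argmax; ties are covered because the test uses $\ge$. The bound holds for every $\bm\nu$, which gives the final infimum claim.
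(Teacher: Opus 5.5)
Your proposal is correct and follows the paper's proof essentially step for step. The shared ingredients are the row-max compiler, the visit-count inequality $\nu(a,s)\le\mu(s)\E[N_a]$ for any causal rule, the binomial count $\mathsf N_{T,M}(\Lambda)$ of design rounds, and the Bhattacharyya and union-bound decoder argument, which uses the independence of accepted triples from the geometric waiting times. The one cosmetic difference is that the paper gets the visit-count identity from Tonelli's theorem on nonnegative indicators, which avoids the integrability condition your ``optional stopping'' wording would need; you would have to truncate at $n\wedge\tau$ and pass to the limit by monotone convergence.
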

\end{minipage}\par
Appendix~\ref{proof:jointdesign} proves both the causal cost identity and the finite-horizon decoding bound.

\subsection{Adaptive verification with an information guarantee}
The fixed design can be adapted without losing worst-case identifiability. Given a fixed $\bm\nu_0$ with $I_0:=I(\bm\nu_0)>0$, fix $0<\eps\le1$ and mix any predictable design as $\bm\nu_m=\eps\bm\nu_0+(1-\eps)\widetilde{\bm\nu}_m$. Our implementation uses the Chernoff max--min law against different-optimum alternatives to the current maximum-likelihood environment.

\noindent\begin{minipage}{\linewidth}
\begin{proposition}[Safe adaptive joint design]\label{prop:safeadaptive}
Assume an exact finite observation dictionary, a known action-independent state law, and the same predictable rule under every environment. At stage $m$, hold $\bm\nu_m$ fixed until the next accepted audit and apply the row-max compiler of Proposition~\ref{prop:jointdesign}. After $M$ audits, exploit the designated optimum of a maximum-likelihood environment. Define the stage sequence by continuing the same rule beyond $T$ if necessary. Then at most $M$ audits are used pathwise and, for every $e\in\mathcal E$,
\[
\E_e[R_T]\le \E_e\!\left[\sum_{m=1}^M C_e(\bm\nu_m)\right]
+GT\min\!\left\{1,(|\mathcal E|-1)e^{-\eps M I_0}\right\}.
\]
The conditional Hellinger information is at least $\eps I_0$ for every different-optimum pair, and the process extended until $M$ audits arrive has $\E_e[\tau_M]=\E_e[\sum_{m=1}^M\Lambda(\bm\nu_m)]$.
\end{proposition}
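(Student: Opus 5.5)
The plan has three parts: the stage-wise compiler and cost accounting, an information floor that holds along every history, and a change-of-measure argument for the decoding error. Stages are indexed by accepted audits $m=1,\dots,M$. Let $\mathcal G_{m-1}$ be the sigma-field generated by the first $m-1$ accepted triples together with all rounds up to the $(m-1)$th acceptance. Since $\widetilde{\bm\nu}_m$ is predictable, $\bm\nu_m$ is $\mathcal G_{m-1}$-measurable. Conditional on $\mathcal G_{m-1}$, the row-max compiler of Proposition~\ref{prop:jointdesign} runs with the frozen law $\bm\nu_m$. I would apply that proposition conditionally to obtain three facts at stage $m$:
\begin{itemize}
\item the accepted pair $(X,S)$ has law $\bm\nu_m$;
\item the number of rounds in stage $m$ is geometric with mean $\Lambda(\bm\nu_m)$;
\item the expected regret incurred during stage $m$ equals $C_e(\bm\nu_m)$.
\end{itemize}
The accepted outcome is drawn from $P_e^{X,S}$, independently of the waiting rounds, because the audit coin depends only on the action, the state and the past. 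The tower property then gives $\E_e[\tau_M]=\E_e[\sum_m\Lambda(\bm\nu_m)]$ for the process extended until $M$ audits arrive. Audits stop at $M$ by construction, so the pathwise cap holds. The design-phase regret up to $T$ is at most the extended sum, since regret increments are nonnegative. Exploitation rounds number at most $T$ and each costs at most $G$ when the decoded optimum is wrong, which explains the factor $GT$ and the truncation at $1$.

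For the decoding error, fix the true $e$ and an alternative $e'$ with $a_{e'}^\star\ne a_e^\star$. The ML decoder errs toward $e'$ only if the likelihood ratio satisfies $L_{e'}/L_e\ge1$ on the accepted triples. Markov's inequality bounds this by $\E_e[\sqrt{L_{e'}/L_e}]$. The waiting-round factors are identical under $e$ and $e'$: the state law is known and action-independent, the same predictable rule is used in every environment, and unaudited rounds carry no outcome. So only the accepted-outcome factors remain. I would peel off the last stage by conditioning on $\mathcal G_{M-1}$ and the accepted pair. The $m$th factor has conditional expectation
\[
\sum_{a,s}\nu_m(a,s)\bigl(1-H^2(P_e^{a,s},P_{e'}^{a,s})\bigr)\le 1-\eps I_0\le e^{-\eps I_0},
\]
using $\nu_m\ge\eps\nu_0$ entrywise and the definition of $I(\bm\nu_0)$. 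This inequality is the uniform conditional Hellinger floor claimed in the statement. Iterating backwards over stages gives $\E_e[\sqrt{L_{e'}/L_e}]\le e^{-\eps MI_0}$. A union bound over the $|\mathcal E|-1$ alternatives then completes the bound.

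The main obstacle is this backward induction. The design $\bm\nu_m$ depends on past outcomes, so the $m$th likelihood factor is not independent of the earlier ones. The fix is to condition on everything before the $m$th accepted outcome, including $\bm\nu_m$ and the accepted pair, and to use that the floor $1-\eps I_0$ holds for every realization of $\bm\nu_m$. Care is also needed in defining the extended process beyond $T$: the decoding event should be measured on the $M$-audit sigma-field, and on the event $\tau_M>T$ the realized regret is still bounded by the first term.
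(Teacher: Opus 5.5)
Your proposal is correct and follows the paper's proof essentially step for step. Both arguments use the same ingredients:
\begin{itemize}
\item the accepted-history contraction of $\E_e[\sqrt{L_{e'}/L_e}]$, using the mixture floor $\sum_{a,s}\nu_m(a,s)H^2(P_e^{a,s},P_{e'}^{a,s})\ge\eps I_0$, then Markov's inequality and a union bound;
\item conditional application of the row-max compiler, with the tower property giving both $\E_e[\tau_M]$ and the design regret;
\item the extended-process coupling that charges $GT$ only on the extended decoder's error event, without needing independence from the completion time.
\end{itemize}

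One small wording point: if you condition on the accepted pair itself, the per-stage factor is $1-H^2(P_e^{X,S},P_{e'}^{X,S})$. The floor $1-\eps I_0$ appears only after averaging that pair over its conditional law $\bm\nu_m$ given $\mathcal G_{m-1}$, which is exactly what your display does.
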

\end{minipage}\par
Appendix~\ref{proof:safeadaptive} combines accepted-history contraction with the horizon coupling. The result composes controlled sensing \cite{nitinawarat,mukherjee} and density-ratio rejection sampling \cite{chandak} for this protocol. It requires an exact dictionary and differs from ASB's inverse-propensity variance objective \cite{asb}.

\noindent\begin{minipage}{\linewidth}
\begin{corollary}[Independent calibration transfer]\label{cor:caltransfer}
Let independent calibration data $\mathcal C$ return a finite dictionary $\widehat{\mathcal E}(\mathcal C)$ containing the exact true environment with probability at least $1-\delta$. Let $G_{\rm all}$ bound every true one-round gap, including when calibration fails ($G_{\rm all}=1$ is valid for rewards in $[0,1]$). If the instantiated policy has a nonnegative conditional bound $B_T(\mathcal C)$ on the coverage event, then $\E[R_T]\le \E[B_T(\mathcal C)]+\delta G_{\rm all}T$. Calibration with $n_{\rm cal}$ trusted labels gives total trusted-label use at most $n_{\rm cal}+M$.
\end{corollary}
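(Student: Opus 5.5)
The plan is to condition on the calibration data $\mathcal C$ and split on the coverage event $\mathcal G=\{e^{\rm true}\in\widehat{\mathcal E}(\mathcal C)\}$. Independence of $\mathcal C$ from the online interaction is what makes conditioning legitimate. Given $\mathcal C$, the dictionary $\widehat{\mathcal E}(\mathcal C)$ and every quantity the policy freezes in advance are deterministic. Examples are the designs $\bm\nu_0$ and $\bm\nu_m$, the audit budget $M$, and the maximum-likelihood decoder. The online outcomes, states and audit coins still follow the true environment's law, because they are drawn afresh and independently of $\mathcal C$. Hence, conditionally on $\mathcal C$, the instantiated policy is an ordinary policy for a fixed finite dictionary. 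On $\mathcal G$ that dictionary contains the truth exactly, so the hypothesis supplies $\E[R_T\mid\mathcal C]\le B_T(\mathcal C)$ there. In applications, $B_T(\mathcal C)$ would be the right-hand side of Proposition~\ref{prop:jointdesign} or Proposition~\ref{prop:safeadaptive} evaluated at $\widehat{\mathcal E}(\mathcal C)$. Those propositions take the dictionary as fixed and exact, so this conditional reading is precisely what licenses their use.

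Off the coverage event, I will use only the crude bound $R_T\le G_{\rm all}T$, valid pathwise because each one-round gap is at most $G_{\rm all}$. The tower property then gives
\[
\E[R_T]=\E\bigl[\ind_{\mathcal G}\E[R_T\mid\mathcal C]\bigr]+\E\bigl[\ind_{\mathcal G^c}\E[R_T\mid\mathcal C]\bigr]\le \E[\ind_{\mathcal G}B_T(\mathcal C)]+G_{\rm all}T\Pr(\mathcal G^c).
\]
Nonnegativity of $B_T$ lets me drop the indicator in the first term, so it is at most $\E[B_T(\mathcal C)]$. The second term is at most $\delta G_{\rm all}T$ because $\Pr(\mathcal G^c)\le\delta$. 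This requires $B_T(\mathcal C)$ to be measurable in $\mathcal C$, or at least to admit a measurable version, which I take as implicit in the hypothesis. For rewards in $[0,1]$, every gap lies in $[0,1]$, so $G_{\rm all}=1$ is valid.

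The label count is pathwise. Calibration consumes $n_{\rm cal}$ trusted labels. The online policy, whether from Algorithm~\ref{alg:design}, Proposition~\ref{prop:jointdesign} or Proposition~\ref{prop:safeadaptive}, enforces $\sum_tB_t\le M$ almost surely for every dictionary it is handed. In particular, this holds even when calibration fails, so the total is at most $n_{\rm cal}+M$.

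The main obstacle is the conditioning step rather than the arithmetic. The guarantees for fixed dictionaries are stated for designs frozen before outcomes are opened and for an exact dictionary. I must therefore check that data-dependent instantiation does not break their predictability and independence assumptions. This holds because $\mathcal C$ is independent of the online outcomes. Conditioning on $\mathcal C$ thus makes the instantiated policy a deterministic choice that was fixed in advance.
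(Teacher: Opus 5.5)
Your proposal is correct and matches the paper's proof. Both split $\E[R_T]$ on the $\mathcal C$-measurable coverage event, apply the conditional bound $B_T(\mathcal C)$ on it (dropping the indicator by nonnegativity) and the pathwise bound $R_T\le G_{\rm all}T$ off it, and count labels as $n_{\rm cal}$ plus the pathwise online cap $M$; your extra remarks on why independence of $\mathcal C$ licenses the fixed-dictionary guarantees, and on measurability of $B_T$, spell out what the paper leaves implicit.
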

\end{minipage}\par
Appendix~\ref{proof:caltransfer} proves this elementary conditioning transfer. Exact coverage is essential; continuously refit signals, misspecification, and population shift remain outside its scope.

\section{Adaptive limits and reliability sensitivity}\label{sec:limits}
\subsection{An adaptive converse and a matched coordinate family}
The converse permits actions based on the entire history and audits chosen after the current state. It therefore covers adaptive policies, rather than only the fixed designs analyzed above.

For a converse beyond finite signal dictionaries, let $\Pi_0$ contain every probability law $\pi$ on a measurable collection $\mathcal U_0\subset\HH_k$ with $\norm{u}_{\HH_k}\le B$ and $\norm{u}_\infty\le U\le1/4$ and
\begin{equation}\label{eq:zerobary}
\E_{u\sim\pi}[u(a)]=0\quad\forall a\in\A.
\end{equation}
For $\pi\in\Pi_0$, put
\[
G_\pi=\E_{u\sim\pi}[\max_a u(a)],
\qquad
W_\pi=\max_a\E_{u\sim\pi}[u(a)^2].
\]

\noindent\begin{minipage}{\linewidth}
\begin{theorem}[Prior-mixture adaptive lower bound]\label{thm:generallower}
For every $\pi\in\Pi_0$, the minimax regret over $\mathcal H\times\mathcal U_0$ satisfies
\[
R_T^\star\ge\rho T\left[G_\pi-4U\sqrt{\frac{MW_\pi}{3\tau^\star}}\right]_+.
\]
When $W_\pi>0$, $M\le3\tau^\star G_\pi^2/(64U^2W_\pi)$ implies $R_T^\star\ge\rho G_\pi T/2$.
\end{theorem}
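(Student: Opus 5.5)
Assume a Bayesian prior that mixes over both unknowns and bound the Bayes regret from below; minimax regret is at least Bayes regret. Take the optimal fractional dual of \eqref{eq:tau}: a probability vector $\bm q$ on $\mathcal H$ with $\sum_{h:s\in C_h}q_h\le 1/\tau^\star$ for every $s$. This is LP duality for the fractional hitting number. Draw $h\sim\bm q$ and $u\sim\pi$ independently. Because $\E_\pi[u(a)]=0$, the target under the mixture is $\E[f_u(a)]=\tfrac12$ for every action. Hence for any action $X_t$ chosen from the history, the conditional expected regret is
\[
\rho\bigl(\E[\max_a u(a)\mid\mathcal F_{t-1}]-\E[u(X_t)\mid\mathcal F_{t-1}]\bigr)\ge\rho\bigl(G_\pi-\E[\max_a|\E[u(a)\mid\mathcal F_{t-1}]|]\bigr),
\]
where the optimal Bayes action gains at most $\max_a \E[u(a)\mid\mathcal F_{t-1}]$. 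The whole task therefore reduces to showing that the posterior mean of $u$ stays close to its prior mean $0$ whenever the audit budget is small.

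\textbf{Comparison with a reference law.} Let $Q$ be the reference law under which every audited $Y_t\sim\operatorname{Bern}(1/2)$. Under $Q$ the observations carry no information about $u$, so $\E_Q[u(X_t)]=0$ for any policy. Write $P$ for the mixture law over $(h,u)$. For any bounded function $g$ of the history with $|g|\le U$ (take $g=u(X_t)$ after conditioning), a change of measure gives
\[
|\E_P[u(X_t)]-\E_Q[u(X_t)]|\le 2U\cdot \mathrm{TV}(P_{h,u},Q)\ \text{averaged}.
\]
Bound TV by Hellinger or Pinsker and apply the chain rule over the at most $M$ audits. For a single audit at $(a,s)$, $\mathrm{KL}(\operatorname{Bern}(\tfrac12+x)\,\|\,\operatorname{Bern}(\tfrac12))\le 4x^2/(1-4x^2)\le\tfrac{16}{3}x^2$ for $|x|\le1/4$, with $x=\ind\{s\in C_h\}u(a)$. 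Averaging over $h\sim\bm q$ and $u\sim\pi$ at fixed $(a,s)$ gives
\[
\E[x^2]=\Pr_q(s\in C_h)\,\E_\pi[u(a)^2]\le W_\pi/\tau^\star.
\]
Because $(h,u)$ is independent of $Q$-histories, this averaging is legitimate when the KL is computed under $P_{h,u}$ against $Q$ with actions and states drawn from the $Q$-history. Summing over audits and using $\sum_t B_t\le M$ almost surely gives the averaged KL bound $\E_{h,u}\mathrm{KL}\le \tfrac{16}{3}MW_\pi/\tau^\star$. Jensen and Pinsker then give $\E_{h,u}\mathrm{TV}\le\sqrt{\tfrac{8}{3}MW_\pi/\tau^\star}$. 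The constants must be tuned to reach exactly $4U\sqrt{MW_\pi/(3\tau^\star)}$; I expect to use a $\chi^2$/Hellinger variant and a factor of 2 for $\max_a u$ versus $u(X_t)$.

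\textbf{Per-round loss and conclusion.} I will write the per-round loss as $\rho(G_\pi-\E_P[u(X_t)])$. Since $\E_P[u(X_t)]=\E_{h,u}\E_{P_{h,u}}[u(X_t)]$ and $\E_Q[u(X_t)]=0$ for the fixed $Q$-history law, I obtain
\[
|\E_P[u(X_t)]|\le\E_{h,u}\bigl[2U\,\mathrm{TV}(P_{h,u},Q)\bigr]\le 4U\sqrt{MW_\pi/(3\tau^\star)}
\]
with an appropriate inequality. Summing over $t$ gives the first claim, and the positive part is trivial. The second claim is arithmetic: the stated condition on $M$ makes the square-root term at most $G_\pi/2$.

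\textbf{Main obstacle.} The hard step is the change of measure with fully adaptive actions and post-state audits. The divergence must be computed along $Q$-histories, in the direction $\mathrm{KL}(Q\|P_{h,u})$ or its symmetric equivalent, so that the $(a,s)$ occupation does not depend on $(h,u)$ and the averaging over the dual $\bm q$ and over $\pi$ factorizes as $\Pr_q(s\in C_h)\E_\pi u(a)^2$. I would first try $\mathrm{KL}(Q\|P_{h,u})$, whose per-audit term $\mathrm{KL}(\operatorname{Bern}(1/2)\|\operatorname{Bern}(1/2+x))\le \tfrac{4x^2}{1-4x^2}$ is also quadratic.
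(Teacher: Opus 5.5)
You follow the paper's route. The prior is the dual hitting law $\bm q$ times the zero-barycenter $\pi$, and the reference is the fair-outcome null $Q$. You apply the chain rule for $\operatorname{KL}(Q\Vert P_{h,u})$ under $Q$, so the audited $(X_t,S_t)$ are independent of $(h,u)$ and the average factors as $\Pr_{\bm q}(s\in C_h)\,\E_\pi[u(a)^2]\le W_\pi/\tau^\star$. You then use Jensen, Pinsker, and the variational bound. Your direction in the last paragraph is the right one; the direction in your middle paragraph would not factor. Your per-round accounting with range $2U$ matches the paper's use of range $2\rho UT$ for the whole regret functional. It is sound once you note two things: $\E_Q[u(X_t)]$ vanishes only after averaging over $u\sim\pi$, and $\max_a u(a)$ does not depend on the transcript, so it contributes exactly $G_\pi$.

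The gap is the constant, and it is not cosmetic. Your per-audit bound $\tfrac{16}{3}x^2$ gives $\E_{h,u}\operatorname{TV}\le\sqrt{8MW_\pi/(3\tau^\star)}$. The resulting deficit is $4\sqrt2\,U\sqrt{MW_\pi/(3\tau^\star)}$, so the threshold $M\le3\tau^\star G_\pi^2/(64U^2W_\pi)$ only yields $R_T^\star\ge(1-2^{-1/2})\rho G_\pi T$, not $\rho G_\pi T/2$.

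Neither repair you suggest recovers the missing $\sqrt2$. A $\chi^2$ bound upper-bounds KL, so it can only loosen the estimate; $4x^2/(1-4x^2)$ is exactly where your factor of two was lost. There is also nothing to gain between $\max_a u$ and $u(X_t)$: the max term costs no total variation, and the range $2U$ is already the sharp form of the variational inequality.

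What you need is the exact divergence in the direction you settled on: $\operatorname{kl}(\operatorname{Bern}(1/2)\Vert\operatorname{Bern}(1/2+x))=-\tfrac12\log(1-4x^2)\le\tfrac83x^2$ for $|x|\le\tfrac14$, using $-\log(1-y)\le y/(1-y)\le\tfrac43y$ on $[0,\tfrac14]$. Then $\E_{h,u}\operatorname{KL}(Q\Vert P_{h,u})\le 8MW_\pi/(3\tau^\star)$, Jensen and Pinsker give $\E_{h,u}\operatorname{TV}\le2\sqrt{MW_\pi/(3\tau^\star)}$, and multiplying by $2U$ gives exactly $4U\sqrt{MW_\pi/(3\tau^\star)}$. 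This is the paper's computation, and it makes the second claim the arithmetic you describe.
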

\end{minipage}\par
The proof compares every environment with a fair-outcome null law and averages the resulting KL divergence under a zero-barycenter prior. See Appendix~\ref{proof:generallower}.

The prior may have infinite support in the RKHS. The bound respects the hard audit cap and counts every action used while waiting for an informative state. Its applicability depends on the existence of a zero-barycenter prior.

For the finite balanced signed specialization, let $\{g_j:j\in\mathcal J\}\subset\HH_k$ satisfy $\norm{g_j}_{\HH_k}\le B$ and include both signs $u_{j,\theta}=\theta g_j$. Assume $\max_a\theta g_j(a)\ge g_{\min}>0$ for every $(j,\theta)$ and $\max_j\norm{g_j}_\infty=U\le1/4$. Define
\begin{equation}\label{eq:vminus}
V_-=\max_{\bm{\lambda}\in\Delta(\A)}\min_j\E_{a\sim\bm{\lambda}}[g_j(a)^2].
\end{equation}
LP duality supplies a law on $j$ with $\max_a\E[g_j(a)^2]=V_-$. Combining it with an independent uniform sign gives a member of $\Pi_0$ with $G_\pi\ge g_{\min}$ and $W_\pi=V_-$. Therefore Theorem~\ref{thm:generallower} yields
\[
R_T^\star\ge\rho T\left[g_{\min}-4U\sqrt{\frac{MV_-}{3\tau^\star}}\right]_+.
\]
In particular, $M\le3\tau^\star g_{\min}^2/(64U^2V_-)$ implies $R_T^\star\ge\rho g_{\min}T/2$.

The coordinate family makes the two design quantities coincide. For $d\ge1$ and $0<\Delta\le\min\{B,1/4\}$, let $\A=[d]\times\{\pm1\}$,
\begin{equation}\label{eq:coordinate}
k((i,a),(j,b))=\ind\{i=j\}ab,\qquad
u_{J,\theta}(i,a)=\Delta\theta a\ind\{i=J\}.
\end{equation}
Then $V_+=V_-=\Delta^2/d$, $\mathfrak D_+=d/\Delta^2$, and $g_{\min}=U=\Delta$. For $d\ge2$, $V_\Delta=2\Delta^2/d$, so Proposition~\ref{prop:fixedsandwich} also makes the product design two-approximate for the exact squared-mean fixed-design game (the rank-one ratio is four). Appendix~\ref{proof:coordinate} derives these identities. Theorems~\ref{thm:generalupper}--\ref{thm:generallower} match at the polynomial audit scale $d\tau^\star/\Delta^2$, up to logarithmic factors.

\subsection{Structured and diffuse reliability}
The coordinate example separates reward-direction uncertainty from reliability uncertainty. Knowing the active coordinate reduces $\mathfrak D_+$ from $d/\Delta^2$ to $1/\Delta^2$ without changing the kernel. Kernel rank alone therefore does not specify the decision complexity.

The finite-dictionary assumption is consequential. Let $\Sset=[L]$ be uniform, let $1\le k\le L/2$ be an integer and $\rho=k/L$, and take as reliability class every $k$-subset $C\subseteq[L]$. This class has $\tau^\star=L/k=1/\rho$.

The structured and diffuse families share the target kernel, rank, signal family, and hitting number, but their polynomial audit scales differ by $\tau^\star$. Both satisfy \eqref{eq:generalmodel} exactly. In the diffuse family, state labels rarely repeat when $L\ge8T^2$, so their identities provide little reusable information about reliability. This regime is essential to the lower bound. The next theorem shows that selective audits cannot improve its polynomial rate.

\noindent\begin{minipage}{\linewidth}
\begin{theorem}[Diffuse-reliability frontier]\label{thm:diffuse}
Use the signed coordinate kernel and signals in \eqref{eq:coordinate}. If $M\le T$, a balanced pooled policy satisfies
\[
\E[R_T]\le2\rho\Delta M+4\rho\Delta Td
\exp\!\left\{-\frac{\lfloor M/d\rfloor\rho^2\Delta^2}{2}\right\}.
\]
Conversely, if $d\ge32$, $L\ge8T^2$, and
\[
M\le\frac{3d}{4096\rho^2\Delta^2},
\]
every adaptive action-before-state policy with post-state audits obeys
$\sup_{C,J,\theta}\E[R_T]\ge\rho\Delta T/4$.
Hence this family has audit scale $d(\tau^\star)^2/\Delta^2$ up to constants and logarithms.
\end{theorem}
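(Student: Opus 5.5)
The plan is to prove the two halves separately. The upper bound comes from a state-blind pooled design with Hoeffding decoding. The lower bound comes from a Bayesian (prior-mixture) argument in the spirit of Theorem~\ref{thm:generallower}. There, the randomness of the $k$-subset $C$ is integrated out before the information is counted. This step is what turns the per-audit margin from $\rho\Delta^2$-type coverage into a squared factor $\rho^2\Delta^2$.

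\textbf{Upper bound.} The balanced pooled policy audits every one of the first $M$ rounds regardless of $S_t$, which is allowed since $M\le T$. It cycles deterministically through the actions so that each coordinate $i$ receives $n=\lfloor M/d\rfloor$ audits, split between the two signs. Averaging \eqref{eq:generalmodel} over $S_t\sim\mu$, which is independent of everything, gives $\E[Y_t\mid X_t=(i,a)]=\tfrac12+\rho\Delta\theta a\ind\{i=J\}$. Hence the statistics $Z_t=a(2Y_t-1)$ are independent, bounded in $[-1,1]$, and have mean $2\rho\Delta\theta$ on coordinate $J$ and $0$ elsewhere. After the $M$ audits we select the coordinate and sign maximizing the empirical mean of $Z$. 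An error requires some wrong hypothesis among the $2d$ candidates to beat the truth. A Hoeffding bound on the difference of two means, each over $n$ samples, gives probability at most $\exp(-n\rho^2\Delta^2/2)$ per competitor, so the error probability is at most $2d\exp(-n\rho^2\Delta^2/2)$. Each design round costs at most $2\rho\Delta$, which gives the term $2\rho\Delta M$. Each exploitation round costs at most $2\rho\Delta$ times the error probability, which gives $4\rho\Delta Td\exp\{-\lfloor M/d\rfloor\rho^2\Delta^2/2\}$.

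\textbf{Lower bound.} We put a uniform prior on $(C,J,\theta)$ and compare the laws $P_{J,\theta}$ with the fair-outcome null law $P_0$ under which every audited $Y_t$ is $\operatorname{Bern}(1/2)$. The regret claim then reduces to two facts.
\begin{enumerate}[label=(\roman*),leftmargin=*,itemsep=1pt]
\item Under $P_0$, the fraction of rounds spent on $(J,\theta^\star)$-optimal actions averages to at most $1/(2d)$ over the prior on $(J,\theta)$.
\item The averaged total variation $\frac1{2d}\sum_{J,\theta}\mathrm{TV}(P_0,P_{J,\theta})$ is at most a small constant.
\end{enumerate}
Together these give at least a $1/2$ fraction of rounds with gap at least $\rho\Delta/2$, and hence the bound $\rho\Delta T/4$. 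The role of $d\ge32$ is to make the guessing term in (i) negligible.

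For (ii), we integrate out $C$ on the event $E$ that $S_1,\dots,S_T$ are distinct. A union bound gives $\Pr(E^c)\le T^2/(2L)\le1/16$. On $E$, the memberships $\ind\{S_t\in C\}$ of the visited states are the first draws without replacement from an urn containing $k$ ones and $L-k$ zeros. The policy never observes these memberships, so its history law is a function of this membership sequence. By the Diaconis--Freedman bound, this sequence is within total variation $4T/L$ of i.i.d.\ $\operatorname{Bern}(\rho)$, and $4T/L$ is negligible when $L\ge8T^2$. Under the i.i.d.\ surrogate, each audit on coordinate $J$ is an exact $\operatorname{Bern}(\tfrac12+\rho\Delta\theta a)$ coin, independent of the past given the action. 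Each audit elsewhere is fair. By the chain rule for adaptive policies, $\mathrm{KL}(P_0\Vert\widetilde P_{J,\theta})\le c\rho^2\Delta^2\E_0[n_J]$ with $c=16/3$, using $\rho\Delta\le1/16$. Averaging over $J$ and using $\sum_J n_J\le M$ gives an average KL of at most $c\rho^2\Delta^2M/d$. Pinsker's inequality together with Jensen's inequality then bounds the averaged TV by $\sqrt{c\rho^2\Delta^2M/(2d)}$. This is a small constant under $M\le3d/(4096\rho^2\Delta^2)$, and adding the $E^c$ and Diaconis--Freedman errors completes (ii).

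Finally, the diffuse class has $\tau^\star=1/\rho$, so the threshold $M\asymp d/(\rho^2\Delta^2)$ equals $d(\tau^\star)^2/\Delta^2$. The upper bound attains the same scale up to a $\log(dT)$ factor.

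\textbf{Main obstacle.} I expect the hardest step to be the replacement of the exchangeable $k$-subset memberships by i.i.d.\ Bernoulli coins inside an adaptive interaction. The coupling has to be carried out at the level of the whole history, with actions, states, audit coins, and outcomes. It must also be uniform over policies, which is possible because the policy's kernels do not depend on the hidden memberships. If the Diaconis--Freedman route proves awkward, the fallback is a direct computation. One would bound the conditional predictive bias of each audited outcome given the past under the exact hypergeometric law, which differs from $\rho\Delta\theta a$ by $O(\Delta T/L)$, and feed that bias into the same KL chain rule.
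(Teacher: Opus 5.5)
Your proposal is correct. The upper bound is essentially the paper's: the paper audits only sign-$+1$ actions and controls a per-coordinate deviation event of width $\rho\Delta/2$, while you compare the statistic $a(2Y-1)$ pairwise against each of the $2d-1$ competitors. Both give $2d\exp\{-n\rho^2\Delta^2/2\}$.

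Your lower bound takes a different route. The paper never replaces the exchangeable memberships. It defines $Q_j^\dagger$, which returns a fair outcome whenever a state label repeats. Given any adaptive history, the unseen labels stay exchangeable with at most $k$ active ones, so a first-seen label is active with posterior probability at most $k/(L-t+1)\le2\rho$. Every audit on coordinate $j$ is then displaced by at most $2\rho\Delta$, which gives $\sum_j\mathrm{KL}(P_0\Vert Q_j^\dagger)\le\frac{32}{3}\rho^2\Delta^2M$. The paper then picks a single sign-$+1$ coordinate by a double Markov argument (small $\E_0[T_j]$ and small KL; this is where $d\ge32$ is used). It applies Pinsker to $\{T_j\le T/2\}$ and pays the $1/16$ collision probability to return to $Q_j$.

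This is your ``fallback,'' except that it needs only a one-sided bound on the predictive activity, not an $O(\Delta T/L)$ approximation. Your global coupling of $(S_{1:T},\beta_{1:T})$ plus data processing instead reduces the problem exactly to a $2d$-armed Bernoulli bandit with gap $\rho\Delta$. The paper only alludes to this reduction in Appendix~\ref{app:comparators}. Your route avoids any posterior analysis and gains a factor four in the KL constant. Combined with your averaged-TV count, $\rho\Delta T(1-\frac1{2d}-\overline{\mathrm{TV}})$, it does not need $d\ge32$. The price is reliance on an external finite de Finetti lemma.

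One point to fix: under $L\ge8T^2$ the Diaconis--Freedman term only satisfies $4T/L\le1/(2T)$, which is not negligible for small $T$. This is harmless, for either of two reasons. Your final inequality has enough slack to absorb even $1/2$ when $d$ is moderate. Alternatively, you can use the birthday bound $T(T-1)/(2L)\le1/16$, which also controls the with- versus without-replacement comparison of the colored membership sequence.

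Two cosmetic corrections. First, $\rho\Delta\le1/8$, not $1/16$; the bound $\operatorname{kl}\le\frac83x^2$ for $|x|\le1/4$ still applies, so $c=8/3$ suffices. Second, replace the sentence about ``a $1/2$ fraction of rounds with gap $\rho\Delta/2$'' by the direct estimate $\E_{J,\theta}[N_{J,\theta}]\le\E_0[N_{J,\theta}]+T\,\mathrm{TV}(P_0,P_{J,\theta})$. Combine it with the fact that every non-optimal action has gap at least $\rho\Delta$.
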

\end{minipage}\par
Appendix~\ref{proof:diffuse} proves the upper bound by balanced audit-only sampling and the lower bound by an exchangeable unseen-state mixture.

\subsection{Robustness and limits of summary statistics}
Keep the independent Bernoulli protocol, with nominal and true means $m_e(a,s)$ and $\widetilde m_e(a,s)$. The true target is $\widetilde f_e(a)=\E_{S\sim\mu}[\widetilde m_e(a,S)]$, and $\widetilde R_T$ denotes regret against its maximizing action. Define
\[
D(\bm{\nu})=\min_{a_e^\star\ne a_{e'}^\star}
\left\{\sum_{a,s}\nu(a,s)[m_e(a,s)-m_{e'}(a,s)]^2\right\}^{1/2}.
\]

\noindent\begin{minipage}{\linewidth}
\begin{proposition}[Feedback-law misspecification boundary]\label{prop:feedbackmisspec}
Assume
\[
\sup_{e,a,s}|\widetilde m_e(a,s)-m_e(a,s)|\le\delta_{\rm fb},
\qquad
\sup_{e,a}|\widetilde f_e(a)-f_e(a)|\le\eps_{\rm tar}.
\]
Replace maximum likelihood in Proposition~\ref{prop:jointdesign} by the nominal squared-loss decoder. Then
\[
\sup_e\E_e[\widetilde R_T]
\le\frac{C(\bm{\nu})}{\Lambda(\bm{\nu})}\mathsf N_{T,M}\bigl(\Lambda(\bm{\nu})\bigr)
+G\bigl[T-\mathsf N_{T,M}\bigl(\Lambda(\bm{\nu})\bigr)\bigr](|\mathcal E|-1)
\exp\!\left\{-\frac{3M}{40}[D(\bm{\nu})-2\delta_{\rm fb}]_+^2\right\}
+2\eps_{\rm tar}T.
\]
Conversely, for every $0<\rho\le1/2$ and $0<\Delta\le1/4$, there is a two-action rank-one instance with $\eps_{\rm tar}=0$ and $\delta_{\rm fb}=(1-\rho)\Delta$ such that every causal policy obeys
\[
\sup_e\E_e[R_T]\ge\frac{\rho\Delta T}{2}e^{-M\kappa_{\rho\Delta}},
\qquad
\kappa_x=2x\log\frac{1+2x}{1-2x}\le16x^2\quad(0\le x\le1/4).
\]
\end{proposition}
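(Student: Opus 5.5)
For the upper bound I would reuse the architecture of Proposition~\ref{prop:jointdesign} and change only the decoding step. The stationary row-max compiler produces accepted pairs i.i.d.\ from $\bm\nu$. The accepted pairs are independent of the waiting times, and outcomes are drawn from the \emph{true} means $\widetilde m_e$. The cost identity therefore carries over verbatim, giving the design charge $C(\bm\nu)\mathsf N_{T,M}(\Lambda(\bm\nu))/\Lambda(\bm\nu)$. One caveat: $C$ is defined with the nominal gaps, so I would either read $C$ and $G$ with true gaps or absorb the difference into the $2\eps_{\rm tar}T$ term. Exploitation regret is then bounded by $G$ times the exploitation length $T-\mathsf N_{T,M}$ times the probability of decoding an environment with the wrong designated optimum. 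The decoder is the nominal squared-loss rule
\[
\widehat e\in\arg\min_e\sum_{i\le M}(Y_i-m_e(A_i,S_i))^2 .
\]
Finally, the designated optimum of the nominal $f_e$ is $2\eps_{\rm tar}$-suboptimal for $\widetilde f_e$, since $\widetilde f_e(\widetilde a^\star)-\widetilde f_e(a_e^\star)\le f_e(\widetilde a^\star)-f_e(a_e^\star)+2\eps_{\rm tar}\le2\eps_{\rm tar}$. This yields the additive $2\eps_{\rm tar}T$, and the gaps $G$ are handled as above.

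The main step is the pairwise error bound. Fix the true $e$ and an alternative $e'$ with a different optimum. Write $\delta_i=m_e-m_{e'}$ at $(A_i,S_i)$ and $\xi_i=Y_i-\widetilde m_e$. The squared-loss comparison $L_{e'}-L_e$ equals
\[
\sum_i\Bigl[\delta_i^2+2\delta_i(\widetilde m_e-m_e)+2\delta_i\xi_i\Bigr]
\ge\sum_i\bigl[\delta_i^2-2\delta_{\rm fb}|\delta_i|\bigr]+2\sum_i\delta_i\xi_i .
\]
Each term $Z_i=\delta_i^2-2\delta_{\rm fb}|\delta_i|+2\delta_i\xi_i$ is i.i.d.\ and bounded, since $|\delta_i|\le1/2$ under $U\le1/4$ and $|\xi_i|\le1$. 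Its mean is at least $D_2^2-2\delta_{\rm fb}D_1$, where $D_2^2=\E\delta^2\ge D(\bm\nu)^2$ and $D_1=\E|\delta|\le D_2$ by Cauchy--Schwarz. The mean is therefore at least $D_2(D_2-2\delta_{\rm fb})\ge[D-2\delta_{\rm fb}]_+^2$ when $D_2\ge D$.

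I would apply Bernstein's inequality to $\Pr\{\sum Z_i\le0\}$. The variance should be controlled by $O(\E\delta^2)=O(D_2^2)$, using $\operatorname{Var}(\xi)\le1/4$. The range of the centered $Z_i$ is bounded by a constant times $|\delta_i|\le1/2$. Tracking constants through the Bernstein exponent $M\mu^2/(2\sigma^2+\tfrac23 b\mu)$, with mean $\mu\ge D_2(D_2-2\delta_{\rm fb})$ and $\sigma^2\lesssim D_2^2$, should give an exponent of at least $\tfrac{3}{40}M[D-2\delta_{\rm fb}]_+^2$. A union bound over the $|\mathcal E|-1$ alternatives then finishes the upper bound. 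Getting exactly the constant $3/40$, including the monotonicity in $D_2\ge D$, is the step I expect to be the fiddliest; I would first try the crude bounds $\sigma^2\le D_2^2$ and $b\le 3/2\cdot$(range) and then optimize.

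For the converse I would build two environments on two actions and a reliable set of mass $\rho$, with identical targets. In both nominal models the signal is spread uniformly, $m(a,s)=\tfrac12\pm\rho\Delta$ for every $s$, so $\eps_{\rm tar}=0$. The true laws concentrate the same target mass on $C$: they equal $\tfrac12\pm\Delta$ on $C$ and $\tfrac12$ off it. The feedback error is then $(1-\rho)\Delta$ on $C$ and $\rho\Delta$ off it, so $\delta_{\rm fb}=(1-\rho)\Delta$ for $\rho\le1/2$.

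The argument runs as follows. The two true environments $e_\pm$ have opposite optima and a target gap $2\rho\Delta$. Every audit gives KL divergence at most that between $\mathrm{Bern}(\tfrac12+\rho\Delta)$ and $\mathrm{Bern}(\tfrac12-\rho\Delta)$, namely $2\rho\Delta\log\frac{1+2\rho\Delta}{1-2\rho\Delta}=\kappa_{\rho\Delta}$. Unaudited rounds carry no information. The audit cap gives total KL at most $M\kappa_{\rho\Delta}$. The Bretagnolle--Huber inequality applied to the number of rounds playing each action then yields $\sup_e\E_e R_T\ge\rho\Delta T\cdot\tfrac12e^{-M\kappa_{\rho\Delta}}$. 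The bound $\kappa_x\le16x^2$ follows from $\log\frac{1+y}{1-y}\le\frac{2y}{1-y^2}$ with $y=2x\le1/2$.
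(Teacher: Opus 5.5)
Your upper bound is correct and follows the paper's argument. You reuse the design/exploitation split of Proposition~\ref{prop:jointdesign}, apply Bernstein to the squared-loss difference (whose mean is at least $D_{e,e'}(D_{e,e'}-2\delta_{\rm fb})$), take a union bound, and convert nominal to true gaps per action at cost $2\eps_{\rm tar}$ per round. So reading $C$ and $G$ with nominal gaps, your second option, is what the paper does. The paper applies Bernstein to $W$ itself, using $|W|\le2|\delta|$, $\E W^2\le4D_{e,e'}^2$ and $\overline W<2D_{e,e'}^2$, which gives $3/40$ directly. Your pathwise minorant $Z$ also works. It needs only $|\delta|\le1$, since the dictionary there need not have the $U\le1/4$ form: $\operatorname{Var}Z\le2D_2^2$ and $\mu-Z\le3$ give exponent $M\mu^2/(6D_2^2)\ge M(D-2\delta_{\rm fb})^2/6$.

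The converse has a genuine gap: you have the nominal and true laws the wrong way round, and in your orientation the per-audit KL bound is false. $\E_e$ is taken under the data-generating law, which in your instance equals $\tfrac12\pm\Delta$ on $C$. An audit at a state of $C$ therefore contributes $\operatorname{kl}(\operatorname{Bern}(\tfrac12+\Delta),\operatorname{Bern}(\tfrac12-\Delta))=\kappa_\Delta$, not $\kappa_{\rho\Delta}$. Even state-blind auditing contributes $\rho\kappa_\Delta$, which is larger by a factor of order $1/\rho$. Since $C$ is fixed and the audit decision follows the state, consider the policy that audits only on $C$ and takes a majority vote. Its regret is at most $2\Delta M+2\rho\Delta Te^{-2M\Delta^2}$. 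For small $\rho$, $M$ between order $\Delta^{-2}$ and $(\rho\Delta)^{-2}$, and $T\gg M/\rho$, this is below $\frac{\rho\Delta T}{2}e^{-M\kappa_{\rho\Delta}}$. So your instance violates the claimed lower bound. It is also disconnected from the upper bound: your nominal models give $D(\bm\nu)=2\rho\Delta\le2\delta_{\rm fb}$ for every design, so that bound is vacuous there.

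The fix is to swap the roles, as the paper does. Take nominal means $m_\theta(a,s)=\tfrac12+\ind\{s\in C\}\theta a\Delta$ and state-independent true means $\widetilde m_\theta(a,s)=\tfrac12+\rho\theta a\Delta$. The targets, $\eps_{\rm tar}=0$, and $\delta_{\rm fb}=(1-\rho)\Delta$ are unchanged. But every audited outcome, at any state, is now $\operatorname{Bern}(\tfrac12\pm\rho\Delta)$. The chain rule and the pathwise cap then give $\operatorname{KL}(P_+\Vert P_-)\le M\kappa_{\rho\Delta}$, and the rest of your argument goes through. This is also the meaningful orientation. A $C$-supported design has nominal margin $D=2\Delta$, while $D-2\delta_{\rm fb}=2\rho\Delta$ is what the truth actually delivers, matching the upper bound's exponent up to constants. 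To get the stated constant $\tfrac12$, apply Bretagnolle--Huber round by round to $\{X_t=+1\}$ using the pre-action transcript, and sum over $t$. A single threshold event on the play counts loses a factor of two.
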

\end{minipage}\par
Appendix~\ref{proof:feedbackmisspec} gives the squared-loss concentration argument and the target-preserving two-point construction.

Thus target error contributes an additive comparison term, while feedback-law error erodes the testing margin and can destroy identifiability. Related identifiability failures arise in contaminated best-arm identification and misspecified linear identification \cite{altschuler,reda}. For the product design, Proposition~\ref{prop:separation} gives $D(\bm{\lambda}^\star\otimes\bm p)\ge\sqrt{V_+/\tau^\star}$.

Beyond Bernoulli outcomes, identical targets and any prescribed finite feedback summaries can coexist with a regret ratio at least $K/(4\log_2 K)$ under equal $\log_2 K$ audit budgets (Proposition~\ref{prop:momentbarrier}, Appendix~\ref{app:summarybarrier}). Appendix~\ref{app:deterministic} gives an exact deterministic coverage coefficient.

\section{Experiments}\label{sec:experiments}
\subsection{Evaluation protocol and comparison methods}
We evaluate whether conditional reliability improves decisions under a fixed trusted-label budget, whether adaptive verification improves the proved fixed design, and how these benefits depend on model specification. Cumulative target regret $R_T$ charges every deployed action, including rounds without an audit. Terminal simple regret is the gap of the recommended action. Lower values are better. We separately report trusted-label counts and the round of the final audit, since equal label budgets need not imply equal acquisition time.

\textbf{Synthetic tasks.} The primary study uses signed-coordinate kernels of ranks two and eight and a periodic RBF kernel on 16 actions. It crosses one or four reliability groups, candidate-state probabilities $1$ and $.25$, and audit caps $128$, $512$, and $2{,}048$, with horizon $8{,}192$ and 256 paired replications per setting. The independent GP comparison uses 24 settings, $T=4{,}096$, $M\in\{64,256,1{,}024\}$, and 128 paired replications. A separate adaptive-design study uses eight actions, four squared-exponential kernel signals, three reliability families, $(T,M)\in\{(512,32),(2048,64)\}$, and 512 paired replications. Each primary evaluation uses prespecified dictionaries, policies, and seeds.

\textbf{Policies and controls.} \texttt{Reliability-Kernel-Design} is Algorithm~\ref{alg:design}. The safe adaptive policy in Proposition~\ref{prop:safeadaptive} mixes a fixed robust joint law with Chernoff allocation using $\eps=.25$. \texttt{Conditional TS} instead samples a reliability--signal pair from its posterior to select actions and audits all candidate-state outcomes until the cap. Its gate is fixed, so it is an adaptive-action heuristic, not the proved adaptive-verification policy. Controls include marginal-dictionary \texttt{pooled TS}, direct-arm UCB, likelihood decoders, GP policies, and a known-reliability oracle. Actions precede states, and only purchased outcomes update the models. GP controls use a Gaussian working likelihood with variance $1/4$. Appendix~\ref{app:experiments} gives the grids, default parameters, and separately development-selected parameters.

\textbf{Uncertainty and aggregation.} Policies share environment and noise streams. Per-setting paired 95\% intervals are descriptive and unadjusted. Aggregate ratios are geometric means of per-setting mean-regret ratios. Standard errors and bootstrap intervals condition on the fixed task grid, not a population of new tasks.

\subsection{Regret reduction and component comparisons}
\textbf{Conditional feedback and adaptive action selection.} In the 36-setting primary study, \texttt{conditional TS} has the lowest mean cumulative regret in 27 settings. Its paired comparison with \texttt{pooled TS} favors the conditional policy in 22 settings, is unresolved in 14, and never favors the pooled policy. The geometric regret ratios are $.556$, $.339$, and $.369$ as the audit cap increases (Figure~\ref{fig:direct}). Direct-arm UCB has higher regret in 34 settings and no resolved improvement. The overlapping-reliability study gives a ratio of $.569$, supporting the mechanism beyond disjoint reliable states.

\begin{figure}[!htbp]
\centering
\includegraphics[width=\linewidth]{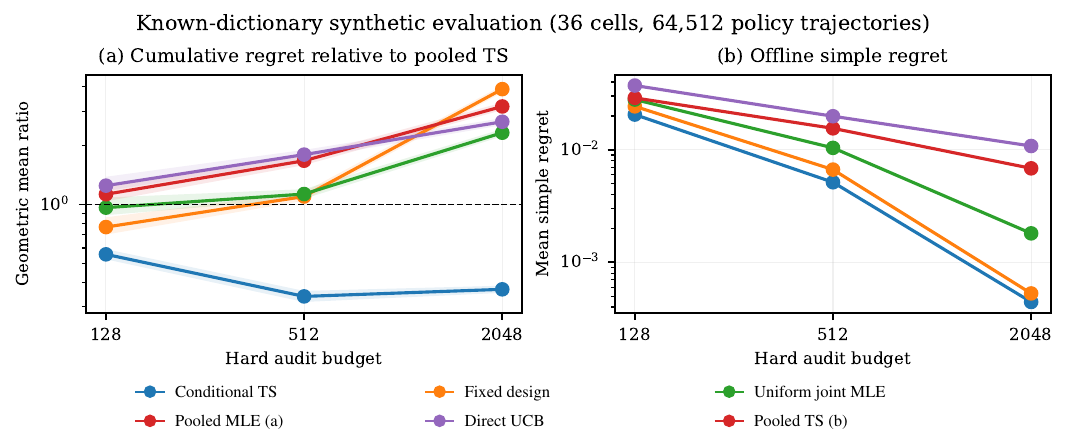}
\caption{Conditional feedback under a hard audit cap, with 256 paired replications in each of 36 settings. Left: geometric mean cumulative-regret ratios relative to \texttt{pooled TS}. Right: mean terminal simple regret. Lower values are better. Shading in (a) is a paired-bootstrap 95\% interval conditional on the fixed settings. The full setting-level comparison and audit-completion panel appear in Figure~\ref{fig:direct-full}.}
\label{fig:direct}
\end{figure}

The fixed design improves terminal identification: mean simple regret falls from $.0243$ to $.00053$ across budgets. Its mean cumulative regret is nevertheless $196.1$, versus $84.0$ for \texttt{conditional TS}$~$(at cap $2{,}048$: $261.5$ versus $24.7$). The information guarantee therefore does not imply adaptive-regret optimality. With 75\% uninformative states, only 51.3\% of either policy's trajectories reach that cap by the horizon.

\begin{table}[!htbp]
\centering\small
\caption{Independent GP comparison on 24 settings and 128 paired replications per setting. Entries average settings equally. Cumulative regret is mean $\pm$ Monte Carlo standard error, computed as $[\sum_{c=1}^{24}s_c^2/128]^{1/2}/24$, where $s_c^2$ is the within-setting sample variance. Bold marks the lowest mean in each regret column, not statistical significance. GP and identity rows use default parameters. Development-selected variants and audit counts appear in Table~\ref{tab:gpcontrols}.}
\label{tab:gpcontrols-main}
\setlength{\tabcolsep}{8pt}
\begin{tabular}{@{}lrr@{}}
\toprule
Policy & Cumulative regret $\downarrow$ & Simple regret $\downarrow$\\\midrule
\texttt{Conditional TS} & $\mathbf{92.24}\pm2.58$ & $\mathbf{.0198}$\\
\texttt{Candidate-gated pooled TS} & $99.92\pm2.58$ & $.0219$\\
\texttt{All-round joint TS} & $106.09\pm2.62$ & $.0241$\\
\texttt{All-round pooled TS} & $124.30\pm2.66$ & $.0287$\\\midrule
\texttt{GP-UCB} & $119.27\pm2.64$ & $.0275$\\
\texttt{GP-TS} & $130.40\pm2.65$ & $.0292$\\
\texttt{Identity-UCB} & $124.83\pm2.78$ & $.0286$\\
\texttt{Identity-TS} & $144.58\pm2.76$ & $.0325$\\\midrule
\texttt{Reliability-Kernel-Design} & $138.46\pm2.74$ & $.0226$\\
\bottomrule
\end{tabular}
\end{table}

\FloatBarrier
\textbf{GP controls and component effects.} Table~\ref{tab:gpcontrols-main} reports absolute regret and uncertainty for the independent comparison. \texttt{Conditional TS} has lower mean cumulative and simple regret than all listed controls. Its geometric regret ratios are $.663$ against default \texttt{GP-UCB} and $.612$ against default \texttt{GP-TS}, although UCB wins two settings. Development-selected GP parameters do not improve the aggregate result. Matched kernel-versus-identity ratios are $.925$ for UCB and $.886$ for TS, with most paired comparisons unresolved. Joint versus pooled likelihood under the same candidate-state gate has ratio $.901$. The full conditional policy therefore benefits from the combination of its model and acquisition rule, while the isolated kernel effect is smaller.

\textbf{Adaptive verification and acquisition time.} On the separately seeded exact-dictionary study, safe adaptive allocation has regret ratio $.933$ relative to the fixed robust joint design. Nine settings favor safe adaptive, 15 are unresolved, and none favor the fixed design. A subsequent comparison on new paired streams evaluates stronger controls (Table~\ref{tab:adaptive-main}). Most comparisons with audit-only TS and posterior greedy are unresolved, and pure adaptive allocation is comparable without the robust floor. The supplementary study is post-measurement evidence, not independent confirmation. Equal $T$ deployments and $M$ labels conceal longer acquisition times under selective verification. The exact-model information floor holds, but its error bound clips to one in all 24 settings.

\begin{table}[!htbp]
\centering\small
\caption{Supplementary safe-adaptive comparisons on 24 settings with 512 paired replications. Regret and time are geometric ratios to the comparator. Time is the round of the $M$th audit. Ratios below one favor safe adaptive. Every policy uses $T$ deployments and $M$ labels. Table~\ref{tab:safeadaptive} gives per-setting uncertainty classifications and decision errors.}
\label{tab:adaptive-main}
\setlength{\tabcolsep}{7pt}
\begin{tabular}{@{}lrr@{}}
\toprule
Comparator & Regret ratio $\downarrow$ & Acquisition-time ratio $\downarrow$\\\midrule
Fixed robust joint design & .916 & 2.554\\
Pure adaptive design & 1.006 & .827\\
Product design & .916 & 4.764\\
Uniform action and verification & .907 & 4.764\\
Optimized action, uniform verification & .921 & 4.764\\
Audit-only dictionary TS & .943 & 4.764\\
Audit-only posterior greedy & .966 & 4.764\\
\bottomrule
\end{tabular}
\end{table}

\subsection{Model sensitivity and AI-task evaluation}
\textbf{Reliability errors and target perturbations.} The robustness study separates changes to the conditional feedback law from changes to the reward target. Target-preserving feedback error gradually erodes the conditional policy's advantage. The conditional-to-pooled regret ratio is $1.027$ at feedback error $.20$ and $1.068$ after exact feedback flattening. Target perturbations of $.05$ and $.10$ reverse the comparison, with ratios $1.398$ and $1.799$. Correct target geometry alone therefore does not ensure informative conditional feedback, as Proposition~\ref{prop:feedbackmisspec} predicts. Appendix~\ref{app:robustness} gives the laws and uncertainty.

\textbf{AI evaluation with fitted dictionaries.} JudgeBench and the 32-judge ChaosNLI study test whether calibration can supply the model required by the online policy. The former uses 144 calibration labels and 1,584 archived judge outputs, while the latter uses 591 labels and 18,912 outputs. Online reveal budgets are additional to these calibration costs. On JudgeBench, fitted dictionaries predict worse than a pooled model and give regret ratios $11.63$ and $16.46$ against uniform verified TS. In the 32-judge study, clipping nearly every fitted signal entry makes the action design collapse to a tie, and selective verification changes acquisition time without improving regret over its matched uniform-state control.

A postdiagnostic 32-judge study separates reliability-set coverage from action-model expansion. Retaining two fitted signals while completing the reliability class improves all four settings relative to expanding both classes. The two-state reliability model has ratio $.581$ against uniform verified TS, whereas the one-state model has ratio $1.143$. This contrast identifies action-model constraints as a concrete source of the earlier loss, but it is fitted-table development evidence and does not validate a learned selective gate. Appendix~\ref{app:experiments} gives the complete comparisons, offline replays, calibration uncertainty, and data-access provenance. The evidence supports the exact-model mechanism, while practical gains remain sensitive to model estimation and acquisition time.

\clearpage
\bibliographystyle{unsrt}
\bibliography{reference}

\clearpage
\appendix

\section{Proofs}\label{app:proofs}

The proofs follow the order of the main statements. The final three proofs establish the supplementary results in Appendix~\ref{app:extensions}.

\subsection{Proof of Proposition~\ref{prop:separation}}\label{proof:separation}
Fix $e=(h,z)$ and $e'=(h',z')$ with $a_z^\star\ne a_{z'}^\star$. Let
\[
H=C_h\cap C_{h'},\qquad A=C_h\setminus C_{h'},\qquad B=C_{h'}\setminus C_h.
\]
The constant $1/2$ in the conditional means cancels. Independence of the action and state under the product design therefore gives
\begin{align*}
&\E_{s\sim\bm p,\,a\sim\bm{\lambda}^\star}
 \bigl[(m_e(a,s)-m_{e'}(a,s))^2\bigr]\\
&\quad=p(H)\E_{a\sim\bm{\lambda}^\star}\bigl[(u_z(a)-u_{z'}(a))^2\bigr]
 +p(A)\E_{a\sim\bm{\lambda}^\star}\bigl[u_z(a)^2\bigr]
 +p(B)\E_{a\sim\bm{\lambda}^\star}\bigl[u_{z'}(a)^2\bigr]\\
&\quad\ge V_+\{p(H)+p(A)+p(B)\}
 =V_+p(C_h\cup C_{h'})
 \ge V_+p(C_h)
 \ge \frac{V_+}{\tau^\star}.
\end{align*}
The first inequality uses all three classes of constraints in \eqref{eq:dplus}. The last inequality follows from the feasibility of $\bm p$ in \eqref{eq:tau}. This proves the proposition.\hfill$\square$

\subsection{Proof of Proposition~\ref{prop:fixedsandwich}}\label{proof:fixedsandwich}
Taking $\bm{\nu}=\bm{\lambda}^\star\otimes\bm p$ in the definition of $J^\star$ and applying Proposition~\ref{prop:separation} gives $J^\star\ge V_+/\tau^\star$.

For the reverse inequality, let $\mathcal P=\{(z,z'):a_z^\star\ne a_{z'}^\star\}$. Finite-game duality yields distributions $\bm q$ on $\mathcal H$ and $\bm\xi$ on $\mathcal P$ such that
\[
\sum_hq_h\ind\{s\in C_h\}\le\frac1{\tau^\star}\quad\text{for every }s,
\qquad
\sum_{(z,z')\in\mathcal P}\xi_{z,z'}[u_z(a)-u_{z'}(a)]^2
\le V_\Delta\quad\text{for every }a.
\]
Fix an arbitrary joint design $\bm{\nu}$. Restricting the alternatives to $h=h'$ and then averaging them with $\bm q\otimes\bm\xi$ gives
\begin{align*}
&\min_{h,h',\,(z,z')\in\mathcal P}
 \E_{\bm{\nu}}\bigl[(m_{h,z}(X,S)-m_{h',z'}(X,S))^2\bigr]\\
&\quad\le\sum_{s,a}\nu(a,s)
 \left(\sum_hq_h\ind\{s\in C_h\}\right)
 \left(\sum_{(z,z')\in\mathcal P}\xi_{z,z'}[u_z(a)-u_{z'}(a)]^2\right)\\
&\quad\le\frac{V_\Delta}{\tau^\star}\sum_{s,a}\nu(a,s)
 =\frac{V_\Delta}{\tau^\star}.
\end{align*}
Maximizing over $\bm{\nu}$ proves the upper bound. When $V_+=V_\Delta$, the lower and upper bounds agree.\hfill$\square$

\subsection{Proof of Theorem~\ref{thm:generalupper}}\label{proof:generalupper}
\textbf{Accepted observations and waiting times.} Fix the true environment $e=(h,z)$ and write $\chi=\chi(\bm p)$. Extend the design phase to an infinite sequence for the purpose of defining its stopping time. In each round, let $I_t$ denote the acceptance indicator generated by the rejection rule. The actual policy uses this sequence only until $M$ acceptances or the horizon. In one design round,
\[
\Pr_e\{X_t=a,S_t=s,I_t=1\}
=\lambda_a^\star\mu(s)\frac{p_s}{\chi\mu(s)}
=\frac{\lambda_a^\star p_s}{\chi}.
\]
Thus $I_t$ are independent Bernoulli$(1/\chi)$ variables and
\[
\Pr_e\{X_t=a,S_t=s\mid I_t=1\}=\lambda_a^\star p_s.
\]
Write $\tau_M=\inf\{t:\sum_{j=1}^t I_j=M\}$. The actual number of audits is at most $M$ on every sample path. Moreover,
\begin{align*}
\E_e[\min\{T,\tau_M\}]
&=\sum_{t=1}^T\Pr_e\{\tau_M\ge t\}\\
&=\sum_{t=0}^{T-1}\Pr\{\operatorname{Bin}(t,1/\chi)<M\}
=\mathsf N_{T,M}(\chi).
\end{align*}

\textbf{Regret during acquisition.} Let $\mathcal F_{t-1}$ contain all variables generated before design round $t$, and define the target gap
$\Delta_z(a)=\rho[u_z(a_z^\star)-u_z(a)]$. The event $\{\tau_M\ge t\}$ is $\mathcal F_{t-1}$-measurable, while $X_t$ has law $\bm{\lambda}^\star$ independently of that history. Hence the regret incurred before completion satisfies
\begin{align*}
\E_e\!\left[\sum_{t=1}^T\ind\{\tau_M\ge t\}\Delta_z(X_t)\right]
&=\sum_{t=1}^T\Pr_e\{\tau_M\ge t\}
 \E_{a\sim\bm{\lambda}^\star}[\Delta_z(a)]\\
&\le\frac{C_\times}{\chi}\mathsf N_{T,M}(\chi).
\end{align*}

\textbf{Likelihood comparison.} It remains to control the recommendation error. Let $Q_e$ be the law of one accepted triple $(X,S,Y)$, so that
\[
Q_e(a,s,y)=p_s\lambda_a^\star
 m_e(a,s)^y[1-m_e(a,s)]^{1-y},\qquad y\in\{0,1\}.
\]
The accepted triples are independent with law $Q_e$. For any $e'=(h',z')$ with a different designated optimum, let $L_M(e')/L_M(e)$ be the likelihood ratio of their first $M$ accepted triples. Markov's inequality and independence imply
\begin{align*}
\Pr_e\{L_M(e')\ge L_M(e)\}
&\le\E_e\!\left[\sqrt{L_M(e')/L_M(e)}\right]\\
&=\left(\sum_{s,a,y}\sqrt{Q_e(a,s,y)Q_{e'}(a,s,y)}\right)^M\\
&=\left(1-\E_{s\sim\bm p,\,a\sim\bm{\lambda}^\star}
 [H^2(P_e^{a,s},P_{e'}^{a,s})]\right)^M\\
&\le\exp\!\left\{-\frac M2
 \E_{s\sim\bm p,\,a\sim\bm{\lambda}^\star}[(m_e(a,s)-m_{e'}(a,s))^2]\right\}\\
&\le\exp\!\left(-\frac{MV_+}{2\tau^\star}\right).
\end{align*}
Here $H^2(P,Q)=1-\sum_y\sqrt{P(y)Q(y)}$. The penultimate line uses $1-x\le e^{-x}$ and the Bernoulli inequality $H^2(\operatorname{Bern}(x),\operatorname{Bern}(y))\ge(x-y)^2/2$. The final line is Proposition~\ref{prop:separation}.

Let $E$ be the event that the maximum-likelihood recommendation has a different designated optimum. On $E$, some such $e'$ has likelihood at least that of $e$, including ties. A union bound gives
\[
\Pr_e(E)\le|\mathcal H||\mathcal Z|
 \exp\!\left(-\frac{MV_+}{2\tau^\star}\right).
\]
\textbf{Stopping and exploitation.} To combine this error probability with the remaining horizon, let $G_j$ be the waiting time for the $j$th accepted triple $(\bar X_j,\bar S_j,\bar Y_j)$. The repeated independent rejection rule gives, for every integer $g\ge1$ and measurable set $D$ of triples,
\[
\Pr_e\{G_j=g,\ (\bar X_j,\bar S_j,\bar Y_j)\in D\}
=(1-1/\chi)^{g-1}\chi^{-1}Q_e(D).
\]
The pairs indexed by $j$ are independent. Thus $\tau_M=\sum_{j=1}^M G_j$ is independent of the accepted sample and of $E$. Since each target gap is at most $2\rho U$, exploitation regret is at most
\begin{align*}
2\rho U\E_e[(T-\tau_M)_+\ind\{E\}]
&=2\rho U\{T-\mathsf N_{T,M}(\chi)\}\Pr_e(E)\\
&\le2\rho U\{T-\mathsf N_{T,M}(\chi)\}
 |\mathcal H||\mathcal Z|e^{-M/(2\tau^\star\mathfrak D_+)}.
\end{align*}
Adding the design and exploitation terms proves the regret bound. Finally, $\E_e[\tau_M]=M\chi$ gives $\mathsf N_{T,M}(\chi)\le\min\{T,M\chi\}$. Substituting the stated audit threshold makes the last exponential at most $1/(2|\mathcal H||\mathcal Z|T)$, so the exploitation term is at most $\rho U$.\hfill$\square$

\subsection{Proof of Proposition~\ref{prop:jointdesign}}\label{proof:jointdesign}
Write
\[
m_a=\max_{s\in\Sset}\frac{\nu(a,s)}{\mu(s)},\qquad
\Lambda=\sum_{a\in\A}m_a.
\]
Actions with $m_a=0$ are never selected. For $m_a>0$, define
\[
q_a=\frac{m_a}{\Lambda},
\qquad
\alpha_{a,s}=\frac{\nu(a,s)}{m_a\mu(s)}.
\]
Because $m_a\ge\nu(a,s)/\mu(s)$, we have $0\le\alpha_{a,s}\le1$. A design round draws $X_t\sim\bm q$, observes $S_t$, and audits with probability $\alpha_{X_t,S_t}$. Hence
\begin{align*}
\Pr\{X_t=a,S_t=s,B_t=1\}
&=q_a\mu(s)\alpha_{a,s}
=\frac{\nu(a,s)}{\Lambda},\\
\Pr\{B_t=1\}&=\frac1\Lambda,
\qquad
\Pr\{X_t=a,S_t=s\mid B_t=1\}=\nu(a,s).
\end{align*}
Independent repetition therefore produces i.i.d. accepted pairs with law $\bm{\nu}$.

\textbf{Causal lower bound.} We next prove the causal lower bound. Consider any policy that produces one audited pair with law $\bm{\nu}$. Let $\mathcal F_{t-1}$ contain the history before action $X_t$ is chosen, let $\tau$ be the first audited round, and define
\[
N_a=\sum_{t\ge1}\ind\{\tau\ge t,X_t=a\}.
\]
The event $\{\tau\ge t,X_t=a\}$ is measurable with respect to $\sigma(\mathcal F_{t-1},X_t)$. Since $S_t\sim\mu$ independently of $\mathcal F_{t-1}$ and $X_t$, Tonelli's theorem gives, for every $(a,s)$,
\begin{align*}
\E\!\left[\sum_{t\ge1}\ind\{\tau\ge t,X_t=a,S_t=s\}\right]
&=\sum_{t\ge1}\E\!\left[\ind\{\tau\ge t,X_t=a\}
 \Pr\{S_t=s\mid\mathcal F_{t-1},X_t\}\right]\\
&=\mu(s)\E[N_a].
\end{align*}
The audited occurrence $(X_\tau,S_\tau)$ is one term in this sum and has law $\bm{\nu}$. Therefore
\[
\nu(a,s)\le\mu(s)\E[N_a],
\qquad
\E[N_a]\ge\max_s\frac{\nu(a,s)}{\mu(s)}=m_a.
\]
Summing over $a$ yields
\[
\E[\tau]=\sum_a\E[N_a]\ge\sum_am_a=\Lambda.
\]
Under environment $e$, weighting the same inequality by $\Delta_e(a)$ yields the sampling-regret bound
\[
\E_e\!\left[\sum_{t=1}^{\tau}\Delta_e(X_t)\right]
=\sum_a\Delta_e(a)\E_e[N_a]
\ge\sum_a\Delta_e(a)m_a=C_e(\bm{\nu}).
\]
For the stationary construction, $\E[\tau]=\Lambda$ and $\E[N_a]=q_a\Lambda=m_a$, so both inequalities are equalities. The argument applies conditionally at each stage when the next design law is measurable with respect to the accepted history.

\textbf{Finite-horizon regret.} It remains to prove the finite-horizon regret bound. Before the $M$th audit, one design round has expected regret
\[
\sum_aq_a\Delta_e(a)=\frac{C_e(\bm{\nu})}{\Lambda}\le\frac{C(\bm{\nu})}{\Lambda}.
\]
The accept indicators are i.i.d. Bernoulli$(1/\Lambda)$. Repeating the stopping-time calculation in Appendix~\ref{proof:generalupper}, the expected design length is $\mathsf N_{T,M}(\Lambda)$ and the expected exploitation length is $T-\mathsf N_{T,M}(\Lambda)$.

Let $Q_e$ denote the law of one accepted triple. For a wrong-decision environment $e'$, its Bhattacharyya affinity with $Q_e$ is
\begin{align*}
\sum_{a,s}\nu(a,s)\int\sqrt{dP_e^{a,s}dP_{e'}^{a,s}}
&=1-\sum_{a,s}\nu(a,s)H^2(P_e^{a,s},P_{e'}^{a,s})\\
&\le1-I(\bm{\nu})\le e^{-I(\bm{\nu})}.
\end{align*}
For the first $M$ accepted triples, Markov's inequality applied to the square root of the likelihood ratio and independence give
\[
\Pr_e\{L_M(e')\ge L_M(e)\}
\le e^{-MI(\bm{\nu})}.
\]
A maximum-likelihood decision with the wrong optimum implies this event for at least one of the at most $|\mathcal E|-1$ alternatives. A union bound therefore gives error probability at most $(|\mathcal E|-1)e^{-MI(\bm{\nu})}$. The accepted triples are independent of their geometric waiting times, so this event is independent of the exploitation length. Multiplying its probability by the expected exploitation length and the largest one-round gap $G$, then adding the design regret, proves the displayed bound. If fewer than $M$ audits arrive by time $T$, the policy remains in the design phase and has no exploitation segment.\hfill$\square$

\subsection{Proof of Proposition~\ref{prop:safeadaptive}}\label{proof:safeadaptive}
Fix the true environment $e$ and a different-optimum alternative $e'$. Let $\mathcal F^{\rm aud}_{m-1}$ be the observed history before stage $m$, including internal randomization, and let $L_m(e')/L_m(e)$ be the likelihood ratio after $m$ accepted observations. The law $\bm\nu_m$ is measurable with respect to $\mathcal F^{\rm aud}_{m-1}$ and is selected by the same rule under both environments. Therefore the design factors cancel from the likelihood ratio. Conditional on $\mathcal F^{\rm aud}_{m-1}$, we have
\begin{align*}
&\E_e\!\left[\left.\sqrt{\frac{L_m(e')}{L_m(e)}}\,\right|\mathcal F^{\rm aud}_{m-1}\right]\\
&\quad=\sqrt{\frac{L_{m-1}(e')}{L_{m-1}(e)}}
\sum_{a,s}\nu_m(a,s)\int\sqrt{dP_e^{a,s}dP_{e'}^{a,s}}\\
&\quad=\sqrt{\frac{L_{m-1}(e')}{L_{m-1}(e)}}
\left[1-\sum_{a,s}\nu_m(a,s)H^2(P_e^{a,s},P_{e'}^{a,s})\right].
\end{align*}
By nonnegativity and the mixture definition,
\[
\sum_{a,s}\nu_m(a,s)H^2(P_e^{a,s},P_{e'}^{a,s})
\ge\eps\sum_{a,s}\nu_0(a,s)H^2(P_e^{a,s},P_{e'}^{a,s})
\ge\eps I_0.
\]
Iterating conditional expectations, using $1-x\le e^{-x}$, and applying Markov's inequality gives
\[
\Pr_e\{L_M(e')\ge L_M(e)\}
\le \E_e\!\left[\sqrt{\frac{L_M(e')}{L_M(e)}}\right]
\le (1-\eps I_0)^M\le e^{-\eps M I_0}.
\]
A maximum-likelihood recommendation with the wrong optimum requires this event for at least one of at most $|\mathcal E|-1$ alternatives. Hence its probability is at most
\[
\min\!\left\{1,(|\mathcal E|-1)e^{-\eps M I_0}\right\}.
\]

\textbf{Acquisition cost and the finite horizon.} Extend the design process beyond $T$ until it collects all $M$ audits. Conditional on $\mathcal F^{\rm aud}_{m-1}$, Proposition~\ref{prop:jointdesign} gives expected waiting time $\Lambda(\bm\nu_m)$ and expected design-stage regret $C_e(\bm\nu_m)$. The tower property and Tonelli's theorem give
\[
\E_e[\tau_M]=\E_e\!\left[\sum_{m=1}^M\Lambda(\bm\nu_m)\right],\qquad
\E_e[R_{\rm design}^{\rm ext}]=\E_e\!\left[\sum_{m=1}^M C_e(\bm\nu_m)\right].
\]
The horizon-$T$ design regret is at most $R_{\rm design}^{\rm ext}$ pathwise. If all audits arrive by $T$, erroneous exploitation implies an error of the extended $M$-audit decoder and costs at most $GT$. If they do not arrive, there is no exploitation segment. Adding the two terms proves the regret bound. This coupling does not assume independence between completion time and decoding error. The algorithm checks the remaining budget before every purchase, so the cap holds pathwise.\hfill$\square$

\subsection{Proof of Corollary~\ref{cor:caltransfer}}\label{proof:caltransfer}
Let $E=\{e_\star\in\widehat{\mathcal E}(\mathcal C)\}$ be the exact-coverage event. Because $E$ is measurable with respect to the independent calibration sample, the conditional online guarantee and the pathwise bound $0\le R_T\le G_{\rm all}T$ give
\begin{align*}
\E[R_T]
&=\E[R_T\ind\{E\}]+\E[R_T\ind\{E^c\}]\\
&\le \E[B_T(\mathcal C)\ind\{E\}]+G_{\rm all}T\Pr(E^c)\\
&\le \E[B_T(\mathcal C)]+\delta G_{\rm all}T.
\end{align*}
The calibration phase consumes $n_{\rm cal}$ trusted labels before the online audit cap $M$, so the total trusted-label endowment is $n_{\rm cal}+M$.\hfill$\square$

\subsection{Proof of Theorem~\ref{thm:generallower}}\label{proof:generallower}
Fix an arbitrary adaptive policy with at most $M$ audits on every sample path. The dual of \eqref{eq:tau} supplies a distribution $\bm q$ over $h$ such that
\begin{equation}\label{eq:reliabilitydual}
\Pr_{h\sim\bm q}\{s\in C_h\}\le1/\tau^\star\qquad\forall s.
\end{equation}
Fix $\pi\in\Pi_0$ and put the product prior $\bm q\otimes\pi$ on $E=(h,u)$. For each $e=(h,u)$, let $P_e$ be the policy transcript law under the conditional mean in \eqref{eq:generalmodel}. Let $P_0$ be the transcript law obtained by replacing every audited outcome with an independent $\operatorname{Bern}(1/2)$ variable while leaving the state process and the policy's internal randomization unchanged.

Let $\mathcal F_{t-1}$ contain the transcript before $X_t$ is chosen, and let $B_t$ be the audit indicator selected after observing $S_t$. Under $P_0$, the latent variable $E$ is independent of $\mathcal F_{t-1}$, $X_t$, $S_t$, and $B_t$. By \eqref{eq:reliabilitydual}, the zero-barycenter condition, and the definition of $W_\pi$, every audited query satisfies
\[
\E_{E}\!\left[\ind\{S_t\in C_h\}u(X_t)^2
\mid\mathcal F_{t-1},X_t,S_t,B_t=1\right]
\le\frac{W_\pi}{\tau^\star}.
\]
For $|x|\le1/4$,
\begin{align*}
\operatorname{kl}\!\left(\operatorname{Bern}(1/2),
 \operatorname{Bern}(1/2+x)\right)
&=-\frac12\log(1-4x^2)\\
&\le\frac{8}{3}x^2,
\end{align*}
where the inequality holds because $4x^2\le1/4$ and $-\log(1-y)\le4y/3$ for $0\le y\le1/4$. The adaptive KL chain rule, followed by the preceding conditional bound and the pathwise constraint $\sum_{t=1}^TB_t\le M$, yields
\begin{align*}
\E_E[\operatorname{KL}(P_0\Vert P_E)]
&=\E_{0,E}\!\left[
 \sum_{t=1}^TB_t\operatorname{kl}\!\left(
 \operatorname{Bern}(1/2),
 \operatorname{Bern}\{1/2+\ind(S_t\in C_h)u(X_t)\}
 \right)\right]\\
&\le\frac83\E_0\!\left[
 \sum_{t=1}^TB_t\frac{W_\pi}{\tau^\star}\right]
\le\frac{8MW_\pi}{3\tau^\star}.
\end{align*}

For a fixed environment $e=(h,u)$ and transcript, define the target regret functional
\[
L_e=\rho\sum_{t=1}^T\{\max_a u(a)-u(X_t)\}.
\]
It satisfies $0\le L_e\le2\rho UT$. Under $P_0$, the policy actions are independent of $u$, so \eqref{eq:zerobary} gives
\begin{align*}
\E_E[\E_0[L_E]]
&=\rho\sum_{t=1}^T
 \left\{\E_{u\sim\pi}[\max_a u(a)]
 -\E_0[\E_{u\sim\pi}[u(X_t)]]\right\}\\
&=\rho G_\pi T.
\end{align*}
For each $e$, the variational characterization of total variation gives
\[
\E_e[L_e]\ge\E_0[L_e]-2\rho UT\operatorname{TV}(P_0,P_e).
\]
Averaging over $E$, using Pinsker's inequality, Jensen's inequality, and the KL bound above, we obtain
\begin{align*}
\E_{e\sim\bm q\otimes\pi}[\E_{P_e}[L_e]]
&\ge\rho G_\pi T-2\rho UT\E_E[\operatorname{TV}(P_0,P_E)]\\
&\ge\rho G_\pi T-2\rho UT
 \sqrt{\frac12\E_E[\operatorname{KL}(P_0\Vert P_E)]}\\
&\ge\rho T\left(G_\pi-4U\sqrt{\frac{MW_\pi}{3\tau^\star}}\right).
\end{align*}
The left-hand side is the Bayes regret of the fixed policy, and Bayes regret lower-bounds its worst-environment regret. Since regret is nonnegative, taking the positive part and then infimizing over policies proves the first claim. If $W_\pi=0$, every action has $u(a)=0$ almost surely under $\pi$, so $G_\pi=0$ and the bound is trivial. Otherwise, if $M\le3\tau^\star G_\pi^2/(64U^2W_\pi)$, the square-root term is at most $G_\pi/2$, which proves the second claim.\hfill$\square$

\subsection{Coordinate specialization of the design bounds}\label{proof:coordinate}
For a design $\bm{\lambda}$ on $[d]\times\{\pm1\}$, write
\[
r_i=\bm{\lambda}(i,+1)+\bm{\lambda}(i,-1),
\qquad \sum_{i=1}^dr_i=1.
\]
The sign assigned within a coordinate does not affect any squared signal. For $u_{J,\theta}$,
\[
\E_{(i,a)\sim\bm{\lambda}}[u_{J,\theta}(i,a)^2]=\Delta^2r_J.
\]
Hence the energy part of $v_+(\bm{\lambda})$ is $\Delta^2\min_Jr_J\le\Delta^2/d$. Under the uniform coordinate design $r_i=1/d$. If two signals use distinct coordinates, their squared contrast is $\Delta^2(r_J+r_{J'})=2\Delta^2/d$. If they use the same coordinate with opposite signs, it is $4\Delta^2r_J=4\Delta^2/d$. All pairs with different designated optima fall into one of these cases. The energy constraint is therefore the active one and $V_+=\Delta^2/d$.

The functions $g_J(i,a)=\Delta a\ind\{i=J\}$ satisfy
$\E_{\bm{\lambda}}[g_J^2]=\Delta^2r_J$, so the identical max--min calculation in \eqref{eq:vminus} gives $V_-=\Delta^2/d$. Finally, if $d\ge2$, the pure pairwise value $V_\Delta$ is at most $2\Delta^2/d$ because the two smallest coordinate masses sum to at most $2/d$. Uniform coordinate mass attains this bound, as the two contrast cases above show. Thus $V_\Delta=2\Delta^2/d$. When $d=1$, only the opposite-sign case remains and $V_\Delta=4\Delta^2$.\hfill$\square$

\subsection{Proof of Theorem~\ref{thm:diffuse}}\label{proof:diffuse}

For the all-$k$-subset class, uniform weights $x_s=1/k$ are feasible in \eqref{eq:tau}. Conversely, averaging any feasible constraint over a uniformly random $k$-subset gives $(k/L)\sum_sx_s\ge1$. Hence $\tau^\star=L/k=1/\rho$.

For $M<d$, the stated upper bound exceeds the maximal regret $2\rho\Delta T$, so any recommendation suffices. For $M\ge d$, audit the first $M$ rounds and assign at least $n=\lfloor M/d\rfloor$ sign-$+1$ queries to every coordinate. Marginalizing the i.i.d. state gives, independently across queries,
\[
Y\mid I=i,A=+1\sim
\operatorname{Bern}\!\left(\tfrac12+\rho\theta\Delta\ind\{i=J\}\right).
\]
Choose the coordinate with largest absolute centered empirical mean and choose its sign from that mean. The active coordinate has centered expectation $\theta\rho\Delta$, while all other coordinates have expectation zero. If every empirical mean differs from its expectation by strictly less than $\rho\Delta/2$, the active coordinate and sign are recovered. For $n$ Bernoulli samples, Hoeffding's inequality and a union bound give
\[
\Pr\{\widehat J\ne J\text{ or }\widehat\theta\ne\theta\}
\le2d\exp\!\left(-\frac{n\rho^2\Delta^2}{2}\right).
\]
Each exploration round and each round after an incorrect recommendation incurs at most $2\rho\Delta$. The first $M$ rounds therefore contribute at most $2\rho\Delta M$, and the remaining rounds contribute at most the second term in the theorem.

\textbf{Diffuse-reliability lower bound.} For the converse, let $P_0$ be the transcript law with every audited outcome fair. For each coordinate $j$, let $Q_j$ mix the sign-$+1$ environments uniformly over all $k$-subsets. Define $Q_j^\dagger$ by using the same state sequence and policy randomization as $Q_j$, but returning a fair outcome whenever a state label repeats. Before a repeat, after $t-1$ distinct labels have appeared, a newly observed label has posterior active probability at most
\[
\frac{k}{L-t+1}\le2\rho.
\]
This remains true after conditioning on adaptive actions and post-state audit decisions because the unobserved labels are exchangeable and at most $k$ active labels remain. Conditional on a first-seen state, an audit of coordinate $j$ under $Q_j^\dagger$ has mean $1/2+\sigma p_t\Delta$, where $\sigma\in\{\pm1\}$ is the played sign and $0\le p_t\le2\rho$. Thus its absolute displacement from $1/2$ is at most $2\rho\Delta\le1/4$. Audits of other coordinates are fair. For $|x|\le1/4$,
\[
\operatorname{kl}\{\operatorname{Bern}(1/2),\operatorname{Bern}(1/2+x)\}
\le\frac83x^2.
\]
Let $N_j$ be the number of audits on coordinate $j$. The adaptive KL chain rule, with all conditionals taken under $P_0$, and the pathwise budget yield
\[
\sum_{j=1}^d\operatorname{KL}(P_0\Vert Q_j^\dagger)
\le\frac{32}{3}\rho^2\Delta^2\E_0[\sum_jN_j]
\le\frac{32}{3}\rho^2\Delta^2M.
\]
The coordinate play counts $T_j$ satisfy $\sum_jT_j=T$. If $j$ is uniform, Markov's inequality shows that each of the events $\{\E_0[T_j]>4T/d\}$ and $\{\operatorname{KL}(P_0\Vert Q_j^\dagger)>128\rho^2\Delta^2M/(3d)\}$ contains fewer than one quarter of the coordinates. Hence at least one $j$ satisfies both
\[
\E_0[T_j]\le\frac{4T}{d},\qquad
\operatorname{KL}(P_0\Vert Q_j^\dagger)
\le\frac{128\rho^2\Delta^2M}{3d}\le\frac1{32}.
\]
For $E_j=\{T_j\le T/2\}$, Markov's inequality gives $P_0(E_j^c)\le8/d\le1/4$. The budget condition makes the displayed KL at most $1/32$, so Pinsker's inequality gives
\[
Q_j^\dagger(E_j)\ge P_0(E_j)-\sqrt{\operatorname{KL}(P_0\Vert Q_j^\dagger)/2}
\ge\frac34-\frac18=\frac58.
\]
The state sequence is common under $Q_j$ and $Q_j^\dagger$. Their transcripts can differ only after a repeated state, whose probability is at most $T(T-1)/(2L)\le1/16$. Hence $Q_j(E_j)\ge9/16\ge1/2$. On $E_j$, at least $T/2$ actions use another coordinate, and every such action incurs target regret $\rho\Delta$. The Bayes regret under $Q_j$ is therefore at least $\rho\Delta(T/2)(1/2)=\rho\Delta T/4$. Since $Q_j$ is a uniform mixture over fixed $k$-subsets, at least one fixed $C$ attains this lower bound. The sign-$+1$ subfamily lies inside the stated signed class.\hfill$\square$

\subsection{Proof of Proposition~\ref{prop:feedbackmisspec}}\label{proof:feedbackmisspec}
Fix the true environment $e$ and a wrong-decision $e'$. For one accepted query $x=(a,s)\sim\bm{\nu}$, put $d(x)=m_e(x)-m_{e'}(x)$ and
\[
W=\{Y-m_{e'}(x)\}^2-\{Y-m_e(x)\}^2
=2d(x)\left\{Y-\frac{m_e(x)+m_{e'}(x)}2\right\}.
\]
Writing $D_{e,e'}^2=\E_{\bm{\nu}}[d(x)^2]$ and $r_e(x)=\widetilde m_e(x)-m_e(x)$, Cauchy--Schwarz gives
\[
\overline W:=\E_e[W]
=D_{e,e'}^2+2\E_{\bm{\nu}}[d(x)r_e(x)]
\ge D_{e,e'}(D_{e,e'}-2\delta_{\rm fb}).
\]
When $D_{e,e'}\le2\delta_{\rm fb}$ the displayed upper bound is trivial. Otherwise, $|W|\le2|d(x)|$, $\E[W^2]\le4D_{e,e'}^2$, $\overline W\le D_{e,e'}^2+2\delta_{\rm fb}D_{e,e'}<2D_{e,e'}^2$, and $|W-\overline W|\le4$. Bernstein's inequality yields
\[
\Pr_e\!\left\{\sum_{m=1}^MW_m\le0\right\}
\le\exp\!\left\{-\frac{M\overline W^2}
{2(4D_{e,e'}^2+4\overline W/3)}\right\}
\le\exp\!\left\{-\frac{3M}{40}
(D_{e,e'}-2\delta_{\rm fb})^2\right\}.
\]
A union bound over wrong-decision environments, followed by the same schedule calculation as in Proposition~\ref{prop:jointdesign}, proves the nominal-regret term. For every action, replacing $f_e$ by $\widetilde f_e$ changes its gap by at most $2\eps_{\rm tar}$, which adds at most $2\eps_{\rm tar}T$.

\textbf{Target-preserving converse.} For the converse, take actions $a\in\{-1,+1\}$, signs $\theta\in\{-1,+1\}$, and a known set $C$ with $\mu(C)=\rho$. The nominal and true conditional means are
\[
m_\theta(a,s)=\tfrac12+\ind\{s\in C\}\theta a\Delta,
\qquad
\widetilde m_\theta(a,s)=\tfrac12+\rho\theta a\Delta.
\]
Both have target $f_\theta(a)=\tfrac12+\rho\theta a\Delta$, so $\eps_{\rm tar}=0$, while their uniform conditional discrepancy is $(1-\rho)\Delta$. The true feedback is state independent. Let $P_+$ and $P_-$ be the complete transcript laws under the two signs. Conditional on any audited action, the outcome laws are $\operatorname{Bern}(1/2+\rho\Delta)$ and $\operatorname{Bern}(1/2-\rho\Delta)$, possibly in the opposite order. Unaudited rounds have identical conditional laws. The adaptive KL chain rule and the pathwise constraint therefore give
\[
\operatorname{KL}(P_+\Vert P_-)
\le \E_+\!\left[\sum_{t=1}^TB_t\right]\kappa_{\rho\Delta}
\le M\kappa_{\rho\Delta}.
\]
Apply Bretagnolle--Huber to the event $\{X_t=+1\}$ using the transcript available before $X_t$ is chosen. Data processing bounds its KL by the full-transcript KL, so
\[
P_+\{X_t=-1\}+P_-\{X_t=+1\}
\ge\frac12e^{-M\kappa_{\rho\Delta}}.
\]
The two one-round regret gaps are $2\rho\Delta$. Summing over $t$ and averaging the two signs yields
\[
\frac{\E_+[R_T]+\E_-[R_T]}{2}
=\rho\Delta\sum_{t=1}^T
\bigl[P_+\{X_t=-1\}+P_-\{X_t=+1\}\bigr]
\ge\frac{\rho\Delta T}{2}e^{-M\kappa_{\rho\Delta}}.
\]
Finally, $\kappa_x=4x\operatorname{artanh}(2x)\le16x^2$ for $x\le1/4$. The maximum regret is at least the two-point average, which proves the converse.\hfill$\square$

\subsection{Proof of Proposition~\ref{prop:momentbarrier}}\label{proof:momentbarrier}
Index environments by leaves $h\in\{0,1\}^d$ and put $\theta_h=(-1)^{\sum_i h_i}$. Besides decision actions $\{+1,-1\}$, include one diagnostic $q_v$ for every internal binary-tree node. Decision outcomes have law $\operatorname{Bern}(1/2+a\theta_h\Delta)$.

\textbf{Matching the feedback summaries.} Fix distinct $x_1,\ldots,x_{4p+5}\in(0,1/2)$ and map each point into $\R^{2p+1}$ by
\[
\bm F(x)=\bigl(x,\psi_1(x),\ldots,\psi_p(x),
\psi_1(1-x),\ldots,\psi_p(1-x)\bigr).
\]
Tverberg's theorem partitions these $4p+5=2\{(2p+1)+1\}+1$ points into three nonempty sets $I_0,I_1,I_2$ whose convex hulls intersect \cite{baranysoberon}. Hence there are probability weights $w_{j,i}$ supported on $I_j$ with $\sum_{i\in I_j}w_{j,i}\bm F(x_i)$ independent of $j$. Define
\[
Q_j=\frac12\sum_{i\in I_j}w_{j,i}
  \bigl(\delta_{x_i}+\delta_{1-x_i}\bigr).
\]
The three laws have pairwise disjoint supports, common mean $1/2$, and identical expectations of every $\psi_r$: the latter is half the matched $\psi_r(x)$ coordinate plus half the matched $\psi_r(1-x)$ coordinate. Their combined support has at most $8p+10$ points. Choosing $\psi_r(y)=y^r$ recovers finite raw-moment matching; the equal-weight Prouhet construction gives a separate explicit instance \cite{prouhetreview}.

\textbf{Equal targets and different information.} In the coded dictionary, $q_v$ returns $Q_{h_{|v|+1}}$ when $v$ is a prefix of $h$, and $Q_2$ otherwise. In the uninformative dictionary every diagnostic returns $Q_0$. Their common target is $f_h(a)=1/2+a\theta_h\Delta$ on decisions and $f_h(q_v)=1/2$ on diagnostics. With $\phi(+1)=1$, $\phi(-1)=-1$, and $\phi(q_v)=0$, the centered target is $\theta_h\Delta\phi$ under the rank-one kernel $k(x,x')=\phi(x)\phi(x')$.

For the uninformative dictionary, fix opposite-parity leaves and let $P_+,P_-$ be their transcript laws. Only audited decision actions distinguish them. The adaptive KL chain rule and the pathwise budget give
\[
\operatorname{KL}(P_+\|P_-)\le M\kappa_\Delta,\qquad
\kappa_\Delta=\operatorname{kl}\!\left\{\operatorname{Bern}(1/2+\Delta),\operatorname{Bern}(1/2-\Delta)\right\}.
\]
\textbf{Achievable regret in the informative dictionary.} The coded policy starts at the root. At depth $r$, it audits $q_v$ for the prefix $v$ already recovered. The resulting sample belongs to the disjoint support of $Q_0$, $Q_1$, or $Q_2$. Because the queried node is on the true path, only $Q_0$ or $Q_1$ can occur, and its support identifies bit $h_{r+1}$ from one sample. After $d$ audits the leaf and its parity are known. Each diagnostic action has target value $1/2$ while the optimal decision has value $1/2+\Delta$, so these $d$ rounds incur exactly $d\Delta$ regret. The policy then plays the optimal sign.

\textbf{Lower bound in the uninformative dictionary.} For the uninformative dictionary, map decision actions to their signs and map every diagnostic action to an independent fair sign. Under either environment, the expected instantaneous regret is exactly $2\Delta$ times the probability that this mapped sign is wrong. For each $t$, apply Bretagnolle--Huber to the mapped sign based on the pre-action transcript. Data processing and the preceding KL bound give
\[
P_+\{\widehat\theta_t=-1\}+P_-\{\widehat\theta_t=+1\}
\ge\frac12e^{-M\kappa_\Delta}.
\]
Consequently,
\[
\frac{\E_+[R_T]+\E_-[R_T]}{2}
=\Delta\sum_{t=1}^T
\bigl[P_+\{\widehat\theta_t=-1\}+P_-\{\widehat\theta_t=+1\}\bigr]
\ge\frac{\Delta T}{2}e^{-M\kappa_\Delta}.
\]
Thus one of the two leaves attains the stated lower bound. Finally,
$\kappa_\Delta=4\Delta\operatorname{artanh}(2\Delta)\le16\Delta^2$ for $\Delta\le1/4$. Under $M=d$, $T=K$, and $\Delta=1/(4\sqrt K)$, we have $M\kappa_\Delta\le d/K\le1/2$. Since $e^{-1/2}/2\ge1/4$, the uninformative regret is at least $\Delta K/4$, whereas the coded regret is $d\Delta$. Their ratio is at least $K/(4d)=K/(4\log_2K)$.\hfill$\square$

\subsection{Proof of Theorem~\ref{thm:det}}\label{proof:det}
For the deterministic class, let $z_{s,j}\ge0$ be a feasible fractional hitting weight on $\Sset\times[d]$. Fixing $j$ gives $\sum_{s\in C_h}z_{s,j}\ge1$ for every $h$, so $\sum_s z_{s,j}\ge\tau^\star$. Summing over $j$ gives mass at least $d\tau^\star$. Conversely, copying an optimizer of \eqref{eq:tau} into every coordinate slice gives feasible mass $d\tau^\star$. Thus the product set system has fractional hitting number exactly $d\tau^\star$.

Normalize these weights to a distribution on $(s,j)$ and draw $M$ pairs independently, retaining their set of distinct values. Every $C_h\times\{J\}$ receives probability mass at least $1/(d\tau^\star)$, so its hit probability is at least
\[
1-\left(1-\frac1{d\tau^\star}\right)^M.
\]
For the reverse bound, linear-program duality supplies a distribution on $(h,J)$ such that every single pair $(s,j)$ belongs to $C_h\times\{J\}$ with probability at most $1/(d\tau^\star)$. For any fixed set $Q$ of at most $M$ pairs, the union bound gives hit probability at most $M/(d\tau^\star)$. Averaging over randomized $Q$ and truncating at one proves \eqref{eq:coveragebounds}.

\textbf{Deterministic lower bound.} For the regret lower bound, fix a policy and run it on the null transcript $Y_t=1/2$. Its audited state-coordinate support $Q$ is a random set of size at most $M$. The finite zero-sum game in \eqref{eq:coverage} has a dual prior on $(h,J)$ under which every distribution of $Q$ misses $C_h\times\{J\}$ with probability at least $1-c_M^{(d)}$. Draw $\theta$ independently and uniformly. On the miss event, every audited outcome equals $1/2$, so the true transcript agrees pathwise with the null transcript. Conditional on that transcript, the policy cannot depend on $\theta$. Averaging over $\theta$, every played action has target value $1/2$, while the optimal action has value $1/2+\gamma\rho$. Therefore
\[
\E[R_T]\ge\gamma\rho T\Pr\{Q\cap(C_h\times\{J\})=\varnothing\}
\ge\gamma\rho T(1-c_M^{(d)}).
\]

\textbf{Matching deterministic policy.} For the upper bound, draw a support $Q$ from an optimizer of \eqref{eq:coverage}. Test its pairs in any fixed order. To test $(s,i)$, repeatedly play coordinate $i$ with an independent uniform sign until state $s$ appears, then audit. The deterministic residual is nonzero exactly when $s\in C_h$ and $i=J$, in which case its sign together with the played sign reveals $\theta$. The policy then plays the optimal action forever. If every test is uninformative, it continues with independent uniform signs on an arbitrary coordinate. Before identification, uniform signs have expected target value $1/2$ and incur $\gamma\rho$ regret per round.

Let $L_Q$ be the total number of rounds required to complete all tests. Each waiting time has expectation at most $1/\mu_{\min}$, so $\E[L_Q]\le M/\mu_{\min}$. If $Q$ hits $C_h\times\{J\}$, all regret is charged within these test rounds. If it misses, at most $\gamma\rho T$ additional regret is charged. Hence, uniformly over $(h,J,\theta)$,
\[
\E[R_T]\le\gamma\rho\E[L_Q]
+\gamma\rho T\Pr\{Q\cap(C_h\times\{J\})=\varnothing\}
\le\frac{\gamma\rho M}{\mu_{\min}}
+\gamma\rho T(1-c_M^{(d)}).
\]
Dividing both bounds by $T$ and letting $T\to\infty$ proves the limit.\hfill$\square$

\subsection{Proof of Proposition~\ref{prop:actionstate}}\label{proof:actionstate}
\begin{proof}
The definitions give $0\le\alpha_{a,s}\le1$. For one stationary design round,
\[
\Pr\{X_t=a,S_t=s,B_t=1\}
=q_a\mu_a(s)\alpha_{a,s}
=\frac{\nu(a,s)}{\Lambda_{\rm ad}(\bm\nu)}.
\]
Summing over $(a,s)$ proves both the acceptance probability and the conditional accepted law. Independent fresh rounds therefore give i.i.d. accepted pairs.

For the causal lower bound, let $\mathcal F_{t-1}$ contain the history before $X_t$ is selected and set $N_a=\sum_{t\ge1}\ind\{\tau\ge t,X_t=a\}$. Conditional on $\mathcal F_{t-1}$ and $X_t=a$, the state law is $\mu_a$. Tonelli's theorem therefore gives
\[
\E\!\left[\sum_{t\ge1}\ind\{\tau\ge t,X_t=a,S_t=s\}\right]
=\mu_a(s)\E[N_a].
\]
The audited occurrence $(X_\tau,S_\tau)$ is one term in this sum and has probability $\nu(a,s)$. Hence $\nu(a,s)\le\mu_a(s)\E[N_a]$, so $\E[N_a]\ge m_a$. Summing over $a$ proves the waiting-time bound, and weighting by $\Delta_e(a)$ proves the sampling-regret bound. Under the stationary compiler, $\E[\tau]=\Lambda_{\rm ad}(\bm\nu)$ and $\E[N_a]=q_a\E[\tau]=m_a$, proving equality. Conditioning on the accepted history gives the stagewise predictable statement.
\end{proof}

\section{Supplementary theoretical results}\label{app:extensions}
This appendix gives the deterministic coverage specialization, the finite-summary counterexample, and the acquisition rule for known action-dependent diagnostic laws. These results refine the main guarantees without changing the action-independent assumptions used in the primary experiments. Their proofs are collected in Appendix~\ref{app:proofs}.

\subsection{Deterministic coverage and audit-only comparison}\label{app:deterministic}
The deterministic coordinate model has a sharper coefficient. Let $\mathcal Q_M=\{Q\subseteq\Sset\times[d]:|Q|\le M\}$. Define
\begin{equation}\label{eq:coverage}
c_M^{(d)}=\max_{\bm\omega\in\Delta(\mathcal Q_M)}\min_{h,J}\Pr_{Q\sim\bm\omega}\{Q\cap(C_h\times\{J\})\ne\varnothing\}.
\end{equation}
The product dictionary has fractional hitting number $d\tau^\star$ and
\begin{equation}\label{eq:coveragebounds}
1-\left(1-\frac1{d\tau^\star}\right)^M\le c_M^{(d)}
\le\min\left\{1,\frac{M}{d\tau^\star}\right\}.
\end{equation}

\noindent\begin{minipage}{\linewidth}
\begin{theorem}[Exact deterministic specialization]\label{thm:det}
Write the coordinate action as $X_t=(I_t,A_t)\in[d]\times\{\pm1\}$. For $0<\gamma\le1/2$, replacing \eqref{eq:generalmodel} by
$Y_t=\tfrac12+\gamma\theta A_t\ind\{I_t=J,S_t\in C_h\}$ gives
\[
\gamma\rho T(1-c_M^{(d)})\le R_T^\star\le
\gamma\rho T(1-c_M^{(d)})+\frac{\gamma\rho M}{\mu_{\min}}.
\]
For fixed $M$, $\mu$, and reliability class, $\lim_{T\to\infty}R_T^\star/T=\gamma\rho(1-c_M^{(d)})$.
\end{theorem}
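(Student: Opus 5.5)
The plan is to prove the two inequalities separately and then take the limit. Both directions go through the coverage game \eqref{eq:coverage}. The key structural fact is that in the deterministic model every audited outcome equals exactly $1/2$ unless the audited pair $(S_t,I_t)$ lies in $C_h\times\{J\}$. If it does, a single audit reveals $\theta$ once the played sign $A_t$ is known. Information is therefore binary per query: a query either hits the unknown product set or returns nothing. Regret is governed by the probability that the audited state--coordinate support misses $C_h\times\{J\}$.

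For the lower bound, I would fix any causal policy and run it on the null transcript, in which every audited outcome is $1/2$. I would let $Q$ be the random set of audited pairs $(S_t,I_t)$, which has $|Q|\le M$ by the pathwise cap. Then I would invoke finite zero-sum duality for \eqref{eq:coverage}. This gives a prior on $(h,J)$ under which every law of $Q$ misses $C_h\times\{J\}$ with probability at least $1-c_M^{(d)}$. I would draw $\theta$ uniformly and independently.

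On the miss event, the true transcript coincides pathwise with the null transcript, so the actions are independent of $\theta$. Averaging over $\theta$, each played action has target value $1/2$, while the optimum is $1/2+\gamma\rho$. This yields Bayes regret at least $\gamma\rho T(1-c_M^{(d)})$.

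The subtle step is the coupling between the real and null runs. The miss event must be defined on the null run, using the same state sequence and internal randomization. Agreement then holds up to the first hit, and on the miss event there is no hit at all. This handles adaptivity cleanly.

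For the upper bound, I would draw $Q$ from an optimizer of \eqref{eq:coverage} and test its pairs sequentially. To test $(s,i)$, the policy plays coordinate $i$ with a fresh uniform sign until $S_t=s$ occurs, then audits. A nonzero residual identifies $\theta$, after which the policy plays $(J,\theta)$ forever. If no test is informative, the policy plays uniform signs.

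Uniform signs give expected target value $1/2$, so every pre-identification round costs $\gamma\rho$. Each wait is geometric with mean at most $1/\mu_{\min}$, so the total test length $L_Q$ satisfies $\E[L_Q]\le M/\mu_{\min}$. On a hit, all regret falls within the test rounds. On a miss, at most $\gamma\rho T$ is added, and this occurs with probability at most $1-c_M^{(d)}$ uniformly over $(h,J)$ by optimality of $\bm\omega$.

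One point to check is that a hit is actually audited before the horizon; if the horizon ends first, the regret remains charged to the test rounds, so the bound still holds. Dividing both bounds by $T$ and letting $T\to\infty$ gives the limit, since $M/\mu_{\min}$ is fixed.

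The main obstacle is the duality step in the lower bound. The set $\mathcal Q_M$ is finite, so von Neumann's minimax theorem applies. However, the policy's support $Q$ is adaptively generated, whereas $c_M^{(d)}$ is defined over open-loop set distributions. The resolution is that on the null transcript the policy's $Q$ is a fixed random set whose law does not depend on $(h,J,\theta)$. It is therefore an admissible $\bm\omega$, and the dual prior bounds its hit probability.
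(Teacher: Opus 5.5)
Your proposal is correct and follows the paper's proof essentially step for step. The lower bound runs the policy on the null transcript, applies the dual prior of the coverage game \eqref{eq:coverage}, and draws an independent uniform $\theta$; the upper bound tests pairs drawn from an optimal $\bm\omega$, waiting for each state under uniform signs so that $\E[L_Q]\le M/\mu_{\min}$. Your explicit coupling argument, and your remark that the null-run support is an admissible open-loop law in $\Delta(\mathcal Q_M)$, spell out what the paper states more tersely.
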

\end{minipage}\par
See Appendix~\ref{proof:det} for the mixed-coverage duality argument and the matching audit policy.

For deterministic outcome contamination bounded by $\eps<\gamma/2$, threshold decoding remains valid and target comparison adds at most $2\eps T$. This does not cover overlapping informative and uninformative ranges. Sequential audit-only policies remain a necessary comparison because they use trusted outcomes without a reliability model \cite{cbm}. The pooled policy in Theorem~\ref{thm:diffuse} gives one such guarantee, and Section~\ref{sec:experiments} compares the conditional policies with marginal-dictionary and GP controls.

Standard GP information gain cannot determine the sharper class-dependent frontier. With the same rank-$d$ coordinate kernel, a known active coordinate gives $\mathfrak D_+=1/\Delta^2$, while an unknown coordinate gives $d/\Delta^2$; the kernel information-gain curve is unchanged. Extending the result to a full RKHS ball requires a separate packing or localization argument.

\subsection{Information absent from finite feedback summaries}\label{app:summarybarrier}
The next result allows bounded outcomes $Y\in[0,1]$ and a singleton diagnostic state. It therefore concerns general conditional feedback laws rather than only the Bernoulli family in \eqref{eq:generalmodel}.

\noindent\begin{minipage}{\linewidth}
\begin{proposition}[Finite-statistic distributional barrier]\label{prop:momentbarrier}
Fix any integer $p\ge1$ and bounded measurable statistics $\psi_1,\ldots,\psi_p:[0,1]\to\R$. For every integer $d\ge1$, $K=2^d$, $T\ge d$, and $0<\Delta\le1/4$, there are two finite verified-feedback dictionaries with identical target functions, the same rank-one centered target kernel, zero target misspecification, and identical conditional expectations of all $\psi_i(Y)$ for every environment and action. One admits a causal $M=d$ policy with $R_T=d\Delta$. In the other, every causal policy using at most $M$ audits satisfies
\[
\sup_e\E_e[R_T]\ge\frac{\Delta T}{2}e^{-M\kappa_\Delta},
\qquad
\kappa_\Delta=2\Delta\log\frac{1+2\Delta}{1-2\Delta}\le16\Delta^2.
\]
At $M=d$, $T=K$, and $\Delta=1/(4\sqrt K)$ with $K\ge2d$, the regret ratio is at least $K/(4\log_2K)$.
\end{proposition}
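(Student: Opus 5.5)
The plan is to build two dictionaries over the same $K=2^d$ environments, indexed by leaves $h\in\{0,1\}^d$ of a depth-$d$ binary tree, with parity sign $\theta_h=(-1)^{\sum_i h_i}$. The action set contains two decision actions $\pm1$ with outcome law $\operatorname{Bern}(1/2+a\theta_h\Delta)$, and one diagnostic action $q_v$ for each internal node $v$. Diagnostics will have target value exactly $1/2$. With features $\phi(\pm1)=\pm1$ and $\phi(q_v)=0$, the centered target is $\theta_h\Delta\phi$ under the rank-one kernel $\phi(x)\phi(x')$. Both dictionaries therefore share targets and kernel, and target misspecification is zero. They will differ only in the \emph{laws} returned by diagnostics. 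These laws must carry identical values of every $\E\psi_i(Y)$ and mean $1/2$, yet in one dictionary they are perfectly informative and in the other they carry no information.

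The key construction is three probability laws $Q_0,Q_1,Q_2$ on $[0,1]$ with pairwise disjoint supports, common mean $1/2$, and equal $\E\psi_i$ for all $i$. I would obtain them from Tverberg's theorem. First map points $x\in(0,1/2)$ into $\R^{2p+1}$ via $\bm F(x)=(x,\psi_i(x),\psi_i(1-x))$. Then take $4p+5$ distinct points and split them into three parts whose convex hulls share a point. Finally symmetrize each part by averaging $\delta_x$ and $\delta_{1-x}$. This symmetrization forces mean $1/2$ and makes the $\psi$-expectations equal half the matched coordinates. Disjointness of the supports follows because the parts are disjoint and all points lie in $(0,1/2)$, so their reflections lie in $(1/2,1)$.

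In the coded dictionary, $q_v$ returns $Q_{h_{|v|+1}}$ when $v$ is a prefix of $h$, and $Q_2$ otherwise. In the uninformative dictionary, every diagnostic returns $Q_0$. In the coded case, a descent policy audits the node on the recovered path and reads the next bit from the support of one sample. After $d$ audits it knows the parity and plays optimally, so the regret is exactly $d\Delta$.

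For the uninformative dictionary, I would pick two opposite-parity leaves. Diagnostics have identical laws under both, so only audited decision rounds contribute KL. The adaptive chain rule with the pathwise cap then gives $\operatorname{KL}\le M\kappa_\Delta$, where $\kappa_\Delta$ is the Bernoulli KL at means $1/2\pm\Delta$. To handle diagnostic rounds in the regret, map each one to an independent fair sign. The instantaneous regret is then exactly $2\Delta$ times the probability of a wrong mapped sign. Bretagnolle--Huber combined with data processing on the pre-action transcript yields the bound $\frac{\Delta T}{2}e^{-M\kappa_\Delta}$, exactly as in the converse of Proposition~\ref{prop:feedbackmisspec}. The inequality $\kappa_\Delta\le16\Delta^2$ follows from $4\Delta\operatorname{artanh}(2\Delta)$ and convexity.

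The ratio is then arithmetic. Set $M=d$, $T=K$ and $\Delta=1/(4\sqrt K)$, so that $M\kappa_\Delta\le d/K\le1/2$. The lower bound becomes at least $\Delta K/4$, while the coded policy pays $d\Delta$. The ratio is therefore at least $K/(4\log_2K)$. I expect the main obstacle to be the Tverberg step. The dimension count $4p+5=2((2p+1)+1)+1$ has to be right. I also need to check that the symmetrized laws match on every $\psi_i$ and not only on moments, and that the supports remain disjoint after reflection.
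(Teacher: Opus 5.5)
Your proposal is correct and follows the paper's own proof essentially step for step. It uses the same Tverberg partition of $4p+5$ points mapped into $\R^{2p+1}$ with reflection-symmetrized laws $Q_0,Q_1,Q_2$, the same prefix-coded versus $Q_0$-only diagnostic dictionaries with a $d$-audit descent, and the same KL chain rule plus Bretagnolle--Huber argument with diagnostic rounds mapped to independent fair signs. One inessential remark: the first coordinate $x$ of $\bm F$ is redundant, because the reflection $\tfrac12(\delta_x+\delta_{1-x})$ already forces mean $1/2$, but keeping it, as the paper does, is harmless.
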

\end{minipage}\par
The construction and testing reduction are proved in Appendix~\ref{proof:momentbarrier}.

The construction combines Tverberg's convex-partition theorem \cite{baranysoberon} with active testing. It shows why misspecifying the conditional feedback law cannot be covered by the $2\eps T$ target-comparison term or by any fixed finite summary vector on an unrestricted law class: full-law distinguishability changes while every displayed statistic and target quantity remains fixed. It does not apply when the statistics identify a restricted conditional-law family; full laws or a characteristic representation can still distinguish the construction.

\subsection{Action-dependent diagnostic laws}\label{app:actionstate}
The structured model in Section~\ref{sec:problem} uses a common state law so that the reliable-set mass and the target reward separate. The rejection-sampling compiler has a broader scope. Suppose instead that, conditional on choosing action $a$ and on the past, the next state has a known law $\mu_a$ that is common across candidate environments. Let $\bm\nu\in\Delta(\A\times\Sset)$ be supported only on cells with $\mu_a(s)>0$, and define
\[
m_a=\max_{s:\mu_a(s)>0}\frac{\nu(a,s)}{\mu_a(s)},\qquad
\Lambda_{\rm ad}(\bm\nu)=\sum_am_a,\qquad
q_a=\frac{m_a}{\Lambda_{\rm ad}(\bm\nu)},\qquad
\alpha_{a,s}=\frac{\nu(a,s)}{m_a\mu_a(s)},
\]
where $\alpha_{a,s}$ is defined by the formula when $m_a\mu_a(s)>0$ and set to zero otherwise. Zero rows are never selected.

\noindent\begin{minipage}{\linewidth}
\begin{proposition}[Action-dependent compiler]\label{prop:actionstate}
Selecting $X_t\sim\bm q$ and auditing after state $s$ with probability $\alpha_{X_t,s}$ produces accepted action--state pairs that are i.i.d. $\bm\nu$, with acceptance probability $1/\Lambda_{\rm ad}(\bm\nu)$ per round. Moreover, every causal rule whose first audited pair has law $\bm\nu$ satisfies
\[
\E[N_a]\ge m_a,\qquad
\E[\tau]\ge\Lambda_{\rm ad}(\bm\nu),\qquad
\E_e\!\left[\sum_{t=1}^{\tau}\Delta_e(X_t)\right]
\ge C_e^{\rm ad}(\bm\nu):=\sum_a\Delta_e(a)m_a,
\]
where $N_a$ counts plays of action $a$ through the first audit time $\tau$. The stationary compiler attains all three bounds. The identities also hold conditionally for a predictable sequence of accepted-query laws, each held fixed until the next audit.
\end{proposition}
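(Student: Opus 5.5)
The plan mirrors the proof of Proposition~\ref{prop:jointdesign}, with $\mu$ replaced by the known action-conditional law $\mu_a$. It has three parts: a one-round calculation for the stationary compiler, an occupation-measure argument for the causal lower bound, and a stagewise conditioning step for predictable designs.

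\textbf{Stationary compiler.} First I check that the rule is well defined. Since $m_a\ge\nu(a,s)/\mu_a(s)$ on cells with $\mu_a(s)>0$, we have $0\le\alpha_{a,s}\le1$. Rows with $m_a=0$ are never selected, and cells with $\mu_a(s)=0$ receive no $\bm\nu$ mass by assumption. For one fresh round,
\[
\Pr\{X_t=a,S_t=s,B_t=1\}=q_a\mu_a(s)\alpha_{a,s}=\nu(a,s)/\Lambda_{\rm ad}(\bm\nu).
\]
Summing over $(a,s)$ gives the acceptance probability $1/\Lambda_{\rm ad}(\bm\nu)$, and dividing by it gives the conditional accepted law $\bm\nu$. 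Each round uses fresh randomization and a fresh state drawn from $\mu_{X_t}$ given the past, so the accept indicators are i.i.d.\ Bernoulli$(1/\Lambda_{\rm ad})$. Consequently the accepted pairs are i.i.d.\ $\bm\nu$.

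\textbf{Causal lower bound.} Fix any causal rule whose first audited pair has law $\bm\nu$. Let $\tau$ be the first audit time and $N_a=\sum_t\ind\{\tau\ge t,X_t=a\}$. The event $\{\tau\ge t,X_t=a\}$ is $\sigma(\mathcal F_{t-1},X_t)$-measurable, because $\{\tau\ge t\}$ depends only on audits before round $t$. Given that sigma-field, $S_t$ has law $\mu_a$. Tonelli's theorem then gives
\[
\E\!\left[\sum_t\ind\{\tau\ge t,X_t=a,S_t=s\}\right]=\mu_a(s)\E[N_a].
\]
The audited occurrence $(X_\tau,S_\tau)$ is one term of this sum, so $\nu(a,s)\le\mu_a(s)\E[N_a]$. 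Maximizing over $s$ with $\mu_a(s)>0$ gives $\E[N_a]\ge m_a$. From this:
\begin{itemize}
\item summing over $a$ gives $\E[\tau]=\sum_a\E[N_a]\ge\Lambda_{\rm ad}(\bm\nu)$;
\item weighting by $\Delta_e(a)\ge0$ gives the sampling-regret bound $\E_e[\sum_{t\le\tau}\Delta_e(X_t)]\ge C_e^{\rm ad}(\bm\nu)$.
\end{itemize}
For attainment, the stationary compiler has a geometric $\tau$ with mean $\Lambda_{\rm ad}(\bm\nu)$. By Wald's identity, $\E[N_a]=q_a\Lambda_{\rm ad}(\bm\nu)=m_a$, so all three bounds hold with equality.

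\textbf{Predictable designs.} For a design law chosen from the accepted history and held fixed until the next audit, I apply the same arguments conditionally on $\mathcal F^{\rm aud}_{m-1}$. The post-stage process is again a fresh stationary compiler, or an arbitrary causal rule, given that history. This conditioning is the one step needing care. What makes it valid is the standing assumption that $\mu_a$ is known and common across environments, and that the state law depends only on the current action. Hence conditioning on past accepted outcomes cannot distort the state distribution at the next round.
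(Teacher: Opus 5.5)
Your proposal is correct and follows the paper's proof essentially step for step: the one-round rejection-sampling computation, the Tonelli occupation-measure bound $\nu(a,s)\le\mu_a(s)\E[N_a]$, summing or weighting by $\Delta_e(a)$ for the waiting-time and regret bounds, $\E[N_a]=q_a\E[\tau]$ for attainment, and stagewise conditioning on the accepted history. Your appeal to Wald's identity, and your remark that the conditioning is valid because the state law depends only on the current action, make explicit what the paper leaves implicit.
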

\end{minipage}\par

See Appendix~\ref{proof:actionstate} for the construction and causal lower bound.

This proposition changes only the acquisition compiler. If a separate exact finite dictionary supplies outcome laws $P_e^{a,s}$ and action-dependent targets $f_e(a)=\sum_s\mu_a(s)\E_e[Y\mid a,s]$, the accepted-history Hellinger argument in Proposition~\ref{prop:safeadaptive} carries over after replacing $\Lambda$ and $C_e$ by $\Lambda_{\rm ad}$ and $C_e^{\rm ad}$. Proposition~\ref{prop:actionstate} does not extend the reliable-set product factorization or adaptive converse, and it does not cover estimated state laws or a learned outcome dictionary.

\section{Experimental details and additional evidence}\label{app:experiments}
We first specify the synthetic comparisons and uncertainty calculations, then examine conditional-model errors and AI evaluation. Each study retains its full comparison family. The AI results distinguish offline identification, fitted-table policy evaluation, and independently tested synthetic policies. Numerical checks of the mathematical identities appear at the end of this appendix.

\subsection{Synthetic environments, policies, and statistical analysis}
Policy choices were fixed using separate development seeds. Implementation checks enforce action-before-state timing, posterior normalization, and the pathwise audit cap (Appendix~\ref{app:verification}). Each cell uses 256 paired seeds and horizon 8,192. The primary grid crosses three action geometries (rank-two and rank-eight signed coordinates and a periodic RBF kernel on 16 actions), reliability dictionary sizes one and four, known cold-state mass zero and $.75$, and budgets 128, 512, and 2,048. This gives 36 cells, seven methods, and 64,512 trajectories. All cells are retained. Reported cumulative regret is target regret over calendar time; offline simple regret is evaluated from the terminal recommendation. We also record audits, cheap state observations, completion time, and runtime.

\texttt{Pooled TS} uses the exact state-marginal signal dictionary and purchases every early outcome until the cap. \texttt{Conditional TS} maintains the joint posterior over $(h,z)$ and purchases every outcome in the union of candidate reliable sets until the cap; the audit rule does not learn. Both sample a target model for each action while budget remains, then exploit the maximum-likelihood model. \texttt{Direct UCB} purchases early outcomes and uses empirical-best exploitation after exhaustion. Joint and pooled likelihood controls use the same frozen action LP as Algorithm~\ref{alg:design}, but audit every early round. The known-$h$ design is an oracle. States are generated by the environment, never chosen by these controls.

The paired intervals are descriptive, unadjusted per-setting comparisons across 256 shared environment/noise streams. They do not give simultaneous 95\% coverage over the grid. Aggregate figure intervals use 2,000 paired bootstrap resamples within each cell, conditional on the fixed grid; cells are not resampled as random tasks. \texttt{Conditional TS} versus \texttt{pooled TS} changes both likelihood and audit gate; this is not an isolated kernel-sharing ablation. They do not establish the performance of learned audit allocation. The offline application replays evaluate correction estimators, while the JudgeBench and 32-judge policy studies below execute fitted finite-dictionary policies.

The overlapping suite replaces singleton reliable states by cyclic blocks of sizes two and four on eight states. It crosses two reward geometries, two state-arrival regimes, and three budgets (24 cells; 30,720 trajectories). Thirty-eight construction checks verify every target mean, overlap mass, posterior likelihood, and absence of unaudited outcomes. Paired 95\% intervals favor the conditional policy in 19 settings, are unresolved in five, and favor the pooled policy in none; its budget-wise geometric regret ratios are $.904,.522,.392$.

\begin{figure}[t]
\centering
\includegraphics[width=.92\linewidth]{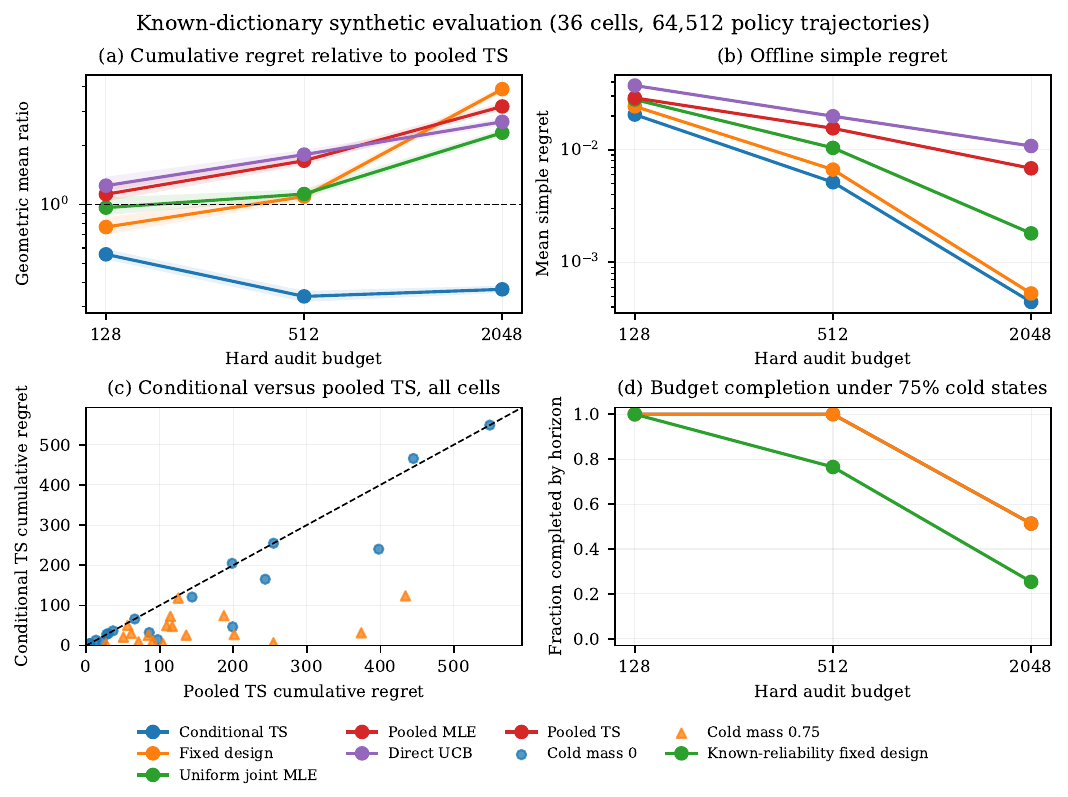}
\caption{Complete primary synthetic comparison. Panels (a) and (b) show cumulative-regret ratios and terminal simple regret. Panel (c) compares all 36 settings with \texttt{pooled TS}. Circles and triangles indicate candidate-state probabilities $1$ and $.25$, and points below the diagonal favor \texttt{conditional TS}. Panel (d) reports the fraction reaching the audit cap by the horizon when 75\% of states are uninformative. There are 256 paired replications per setting. Shading in (a) is a paired-bootstrap 95\% interval. The red curve denotes pooled maximum likelihood in (a) and pooled TS in (b).}
\label{fig:direct-full}
\end{figure}

\FloatBarrier
\subsection{Adaptive verification: regret and acquisition tradeoffs}
The theory-aligned adaptive suite uses eight actions, four squared-exponential kernel sections, and three prespecified reliability families: four singleton sets, six cyclic adjacent pairs, and four disjoint adjacent pairs. It crosses two horizon--cap pairs $(512,32)$ and $(2048,64)$, giving 24 settings. The floor $\eps=.25$, Chernoff alternative rule, analysis, and 512 paired replications were selected on a separate 24-system development study before the fresh suite was measured. Every environment, setting, and outcome is retained.

The fresh suite contains 36,864 trajectories. Safe adaptive versus the fixed robust joint design has geometric regret ratio $.933$ and nine lower-regret, 15 unresolved, and no higher-regret paired comparisons; the three reliability-family ratios are $.937,.934,.929$. Pure adaptive allocation has ratio $.915$ and 13 lower-regret and 11 unresolved comparisons, but does not preserve the robust information floor. One safe-adaptive point estimate is adverse, with ratio $1.021$, and remains unresolved. Independent reconstruction from every-round action counts and accepted success/failure counts reproduces total regret and every final likelihood for all trajectories; the maximum errors are $6.14\times10^{-12}$ and $8.53\times10^{-14}$.

A supplementary study was specified after the preceding outcomes and then run on new paired streams. It retains all 24 settings and matches the exact dictionary, zero-label calibration endowment, horizon, cap, and maximum-likelihood commit rule. Every trajectory has exactly $T$ action deployments and $M$ trusted labels. Regret charges all $T$ actions, including unaudited waiting rounds. Table~\ref{tab:safeadaptive} reports all comparators and separately measures the round on which the cap is reached. The audit-only controls either sample an environment from the exact posterior or choose the posterior-mean-best action, and purchase every early trusted outcome until the cap.

\begin{table}[!htbp]
\centering\footnotesize
\setlength{\tabcolsep}{3.2pt}
\begin{tabular}{@{}p{.31\linewidth}rrrrrr@{}}
\toprule
Comparator & \shortstack{Regret\\ratio} & \shortstack{Time\\ratio} & \shortstack{Error\\(pp)} & Lower & Unclear & Higher\\\midrule
Fixed robust joint design & .916 & 2.554 & $-7.06$ & 13 & 11 & 0\\
Pure adaptive design & 1.006 & .827 & $+0.66$ & 0 & 24 & 0\\
Product design & .916 & 4.764 & $-6.89$ & 12 & 12 & 0\\
Uniform action and verification & .907 & 4.764 & $-7.71$ & 11 & 13 & 0\\
Optimized action, uniform verification & .921 & 4.764 & $-6.96$ & 11 & 12 & 1\\
Audit-only dictionary TS & .943 & 4.764 & $-4.92$ & 6 & 18 & 0\\
Audit-only posterior greedy & .966 & 4.764 & $-4.21$ & 3 & 21 & 0\\
\bottomrule
\end{tabular}
\caption{Supplementary certified-adaptive comparisons on 512 paired replications per setting. Regret and time are safe/comparator geometric ratios across all 24 settings; time is the round of the $M$th audit. Error is the mean difference in final wrong-decision rate, in percentage points, so negative values favor safe adaptive. Lower, unclear, and higher count settings in which the paired 95\% interval places safe-adaptive regret below, across, or above the comparator. The intervals are unadjusted descriptive comparisons, so unclear does not establish equivalence. Comparator choices followed the fresh suite and are not independent confirmation.}
\label{tab:safeadaptive}
\end{table}

The one resolved adverse comparison is against optimized-action uniform verification for the disjoint-pair family at $(T,M)=(512,32)$ and the smoothest signal: ratio $1.078$ and paired regret difference $.738$ with interval $[.026,1.451]$. At the level of sample means, safe adaptive has higher regret in 2 settings against fixed robust design, 12 against pure adaptive, 1 against each product or uniform design, 3 against audit-only dictionary TS, and 5 against audit-only posterior greedy. Safe adaptive takes $4.764\times$ as many rounds geometrically as every-round controls (setting range $3.238$--$6.803$), so this is a trusted-label comparison rather than a general cost advantage. The supplementary study comprises 98,304 trajectories. Its acquisition accounting uses every action and every purchased label. Proposition~\ref{prop:safeadaptive}'s raw union-bound terms range from $14.47$ to $22.82$, so clipping makes the guaranteed error upper bound one in all 24 settings. The result proves a positive information floor but does not quantitatively explain the observed finite-budget gain. These exact-model experiments do not establish learned-dictionary validity, calibration coverage, a kernel effect, monetary savings, or real-task benefit.

\subsection{Gaussian-process and factorial controls}
A development screen for the Gaussian-process controls was specified before execution. It crossed coordinate and periodic-RBF actions, one and four reliability groups, cold-state masses zero and $.75$, budgets 64 and 256, horizon 1,024, and 24 repetitions. GP and identity controls received the same prior-scale grid, and UCB controls received the same exploration-width grid. Each method minimized mean cumulative regret over the complete development grid. The selected settings and the defaults were then frozen before an independently seeded measured comparison.

The measured grid adds budget 1,024, uses horizon 4,096 and 128 paired repetitions, and contains 24 cells, 13 methods, and 39,936 trajectories. \texttt{GP-UCB} and \texttt{GP-TS} use the target kernel and a Gaussian working likelihood with variance $1/4$ for bounded Bernoulli outcomes. \texttt{Identity-UCB} and \texttt{identity-TS} use the same updates and hyperparameters without cross-action covariance. The prior mean is $1/2$ and covariance is $(s_0\rho U)^2\bm K$, with $s_0\in\{.5,1,2\}$ in the screen. The UCB multiplier is chosen from $\{1,2,3\}$. Defaults use $s_0=1$ and multiplier two. Selection gives $(s_0,\beta)=(2,3)$ for \texttt{GP-UCB}, $s_0=2$ for \texttt{GP-TS}, $(.5,3)$ for \texttt{identity-UCB}, and $s_0=2$ for \texttt{identity-TS}. Both types audit every round until the cap and then exploit a posterior-mean maximizer. \texttt{Candidate-gated pooled TS} uses audited mean $1/2+u_z(a)/g$, where $g$ is the number of candidate reliability groups. \texttt{All-round pooled TS} instead uses $1/2+\rho u_z(a)$. These controls are computational GP baselines, not exact Bayesian inference for the finite signal dictionary. Every method observes 4,096 cheap states and respects the hard audit cap.

\begin{table}[!htbp]
\centering\small
\begin{tabular}{lrrrr}
\toprule
Policy & \shortstack{Cumulative\\regret $\downarrow$} & \shortstack{Standard\\error} & \shortstack{Simple\\regret $\downarrow$} & Audits\\\midrule
\texttt{Conditional TS} & \textbf{92.24} & 2.58 & \textbf{.0198} & 446.14\\
\texttt{Candidate-gated pooled TS} & 99.92 & 2.58 & .0219 & 446.14\\
\texttt{All-round joint TS} & 106.09 & 2.62 & .0241 & 448.00\\
\texttt{GP-UCB}, default & 119.27 & 2.64 & .0275 & 448.00\\
\texttt{GP-UCB}, selected & 122.34 & 2.65 & .0276 & 448.00\\
\texttt{Identity-UCB}, selected & 123.17 & 2.72 & .0284 & 448.00\\
\texttt{All-round pooled TS} & 124.30 & 2.66 & .0287 & 448.00\\
\texttt{Identity-UCB}, default & 124.83 & 2.78 & .0286 & 448.00\\
\texttt{GP-TS}, default & 130.40 & 2.65 & .0292 & 448.00\\
\texttt{GP-TS}, selected & 136.68 & 2.70 & .0310 & 448.00\\
\texttt{Reliability-Kernel-Design} & 138.46 & 2.74 & .0226 & 446.14\\
\texttt{Identity-TS}, default & 144.58 & 2.76 & .0325 & 448.00\\
\texttt{Identity-TS}, selected & 149.37 & 2.76 & .0337 & 448.00\\
\bottomrule
\end{tabular}
\caption{Independent Gaussian-process control suite. Means average the 24 fixed settings equally. SE is $[\sum_{c=1}^{24}s_c^2/128]^{1/2}/24$, where $s_c^2$ is the within-setting sample variance. Bold denotes the lowest mean regret. ``Selected'' denotes the setting chosen on the separate development grid.}
\label{tab:gpcontrols}
\end{table}

Against default \texttt{GP-UCB}, \texttt{conditional TS} has 15 lower-regret, seven unresolved, and two higher-regret paired comparisons and geometric-mean regret ratio $.663$. Against default \texttt{GP-TS} 15 comparisons favor the conditional policy and nine are unresolved, with ratio $.612$. Both \texttt{GP-UCB} losses occur in coordinate cells. The selected GP settings have higher aggregate regret than their defaults and are retained in Table~\ref{tab:gpcontrols}.

Matched comparisons separate the ingredients. Kernel versus identity covariance favors \texttt{GP-UCB} in three settings and \texttt{GP-TS} in six, with respective regret ratios $.925$ and $.886$. Joint versus pooled likelihood favors the joint model in six settings under the candidate gate and nine under all-round auditing, with ratios $.901$ and $.800$. Candidate-state auditing is favored in ten settings under either likelihood, with ratios $.820$ for the joint model and $.728$ for the pooled model. All remaining comparisons are unresolved, and none favor the comparator. Thus the complete conditional policy has a nominal advantage in these families, while the isolated kernel effect is smaller and often unresolved. The intervals are unadjusted descriptive 95\% normal intervals for paired seed differences within each cell.

\subsection{Feedback-law and target misspecification}\label{app:robustness}
The supported robustness grid has 36 cells, five methods, and 46,080 trajectories. Write the nominal mean as $1/2+\ind\{s\in C_h\}u_z(a)$. The feedback perturbation uses $1/2+[\ind\{s\in C_h\}(1-\alpha)+\ind\{s\notin C_h\}\alpha\rho/(1-\rho)]u_z(a)$, preserving the target exactly. Here $\rho=1/16$, $|u_z(a)|\le1/4$, and the conditional-error radius is $\alpha/4$. Exact flattening requires $\alpha=1-\rho$. The target perturbation instead adds a state-independent cosine residual $r(a)$, of radius $.05$ or $.10$, to both the conditional outcome mean and the scoring target. This tests target/dictionary error and also shifts the conditional means; it does not leave the full feedback model correct. On the finite full-rank RBF grid, it is not evidence that a specified RKHS norm bound is violated.

All reported perturbation experiments use the outcome laws above, with target rewards obtained by marginalizing those same laws. Policies, audit budgets, horizons, kernels, and perturbation magnitudes were specified before evaluation. Exact checks verified the marginal targets, bounded Bernoulli probabilities, and state independence in the flattened model. Table~\ref{tab:robustness} reports every specified perturbation, including settings in which conditional inference increases regret.
\begin{table}[!htbp]
\centering\small
\begin{tabular}{lrrr}
\toprule
Scenario & Conditional regret & Pooled regret & Geometric ratio\\\midrule
Nominal & 54.8 & 88.1 & .572\\
Feedback error $.10$ & 78.0 & 88.4 & .875\\
Feedback error $.20$ & 90.8 & 89.2 & 1.027\\
Feedback flattened & 92.4 & 87.4 & 1.068\\
Target perturbation $.05$ & 263.1 & 199.1 & 1.398\\
Target perturbation $.10$ & 350.7 & 204.5 & 1.799\\
\bottomrule
\end{tabular}
\caption{Robustness suite. Means are final cumulative target regret; ratios are geometric means over six cells per scenario.}
\label{tab:robustness}
\end{table}

\begin{figure}[!htbp]
\centering
\includegraphics[width=.88\linewidth]{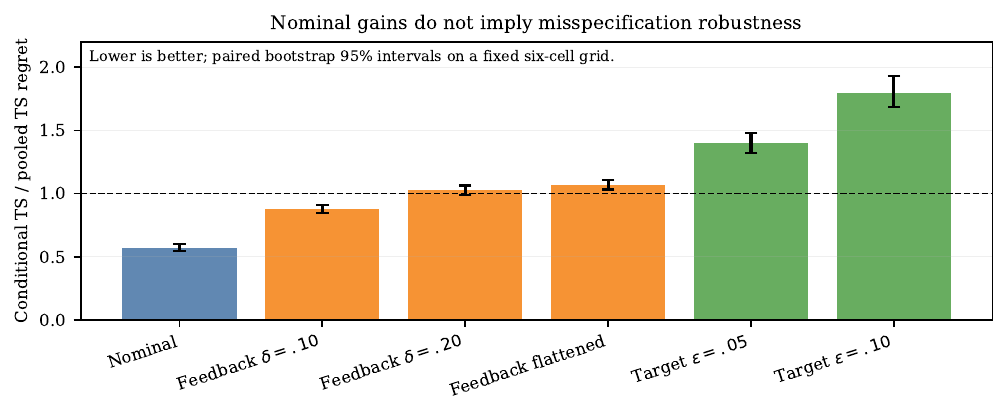}
\caption{Misspecification defeats the nominal advantage. Feedback perturbations preserve target rewards; target perturbations shift both rewards and outcome means. Bars show geometric mean ratios with paired-bootstrap 95\% intervals conditional on six cells.}
\label{fig:robustness}
\end{figure}

\subsection{Offline AI-system identification}\label{app:offline}
SAID RAG Evaluation contains 24 configurations and 15 requests per dataset, with three human ratings per output and four archived judge dimensions. The RAG method uses two-fold multivariate cross-fitting and a frozen additive generator--pipeline kernel. HotpotQA is development; PubMedQA remains unopened. AgentRewardBench development froze five binary judges and coefficient $.9$, the request split, audit design, 40,000-trial count, $.02$ simple-regret tolerance, and 90\% success endpoint. The confirmation program verified the frozen protocol hash before reading targets.

\textbf{Selection endpoint and evaluation costs.}
All selection experiments report the first audit budget attaining 90\% probability of simple regret at most $.02$. SAID uses 10,000 fixed-seed trials and AgentRewardBench uses 40,000. Audit-only methods receive the same trusted samples as corrected methods. Kernel smoothers are frozen before confirmation. Judge calls are counted even when archived; no missing ground truth is replaced by a prediction.

RAGTruth transfer provides no gain over its strongest kernel audit-only baselines: QA ties at 48 ratings, data-to-text ties at 6, and summarization uses 18 versus 12. SummEval's kernel baseline saturates the endpoint. No claim is made for the unevaluated PubMedQA split.

A separate 32-judge Civil Comments development screen is a negative control. Its best oracle proxy predicts only 1.07$\times$ mean audit savings, and no tested method reaches the endpoint by 160 trusted outcomes per judge. Its reserved 248-item split is excluded from the reported comparison.

For AgentRewardBench, the frozen judge set averages two priced GPT-4o-mini variants and three self-hosted Llama/Qwen variants. Logged API costs cover only the priced judges. The table therefore makes no monetary-saving claim. Raw or row-level derivatives are not redistributed.

\subsection{JudgeBench: conditional correction and dictionary fit}\label{app:judgebench}
\textbf{Conditional-correction policies.}
JudgeBench is analyzed only on the frozen calibration and development groups. The simulator draws states with calibration frequencies and scores each action by its development conditional means under that same state law. The retained data loaders skip the 106 reserved rows before decoding (see the provenance statement below). Policies, horizons, caps, repetitions, and seeds are unchanged from the frozen protocol; only the inconsistent regret estimand and acquisition ledger were repaired.

\begin{table}[!htbp]
\centering\small
\caption{JudgeBench conditional-correction development replay. Lower cumulative target regret is better. Calibration-informed methods share 144 offline labels and 1,584 archived judge outputs; audit-only receives neither. Negative paired differences favor state-aware audits; all 95\% intervals include zero. Bold marks the lowest mean, not significance.}
\label{tab:judgebench-all}
\setlength{\tabcolsep}{5pt}
\renewcommand{\arraystretch}{1.05}
\begin{tabular*}{\linewidth}{@{\extracolsep{\fill}}lrrrr@{}}
\toprule
& \multicolumn{2}{c}{$T=512$} & \multicolumn{2}{c}{$T=2048$}\\
\cmidrule(lr){2-3}\cmidrule(lr){4-5}
Method / comparison & $M=16$ & $M=64$ & $M=16$ & $M=64$\\
\midrule
\multicolumn{5}{@{}l}{\emph{Mean cumulative regret: shared calibration endowment}}\\
Calibration greedy & \textbf{0.000} & \textbf{0.000} & \textbf{0.000} & \textbf{0.000}\\
Calibration only & 9.327 & 9.242 & 37.190 & 37.015\\
Uniform audits & 9.090 & 8.252 & 35.885 & 33.249\\
State-aware audits & 9.040 & 8.566 & 36.315 & 34.649\\
\midrule
Audit only (no calibration) & 97.036 & 82.003 & 382.906 & 322.982\\
\midrule
\multicolumn{5}{@{}l}{\emph{Paired comparison: state-aware minus uniform audits}}\\
Mean difference & $-0.050$ & $0.314$ & $0.430$ & $1.400$\\
95\% interval & $[-0.305,0.189]$ & $[-0.039,0.691]$ & $[-0.518,1.360]$ & $[-0.147,3.020]$\\
\bottomrule
\end{tabular*}
\end{table}

\begin{table}[!htbp]
\centering\small
\caption{JudgeBench online acquisition. ``Reveals'' counts trusted-label requests. ``Unique'' counts distinct development items, and ``reuse'' is their difference. Values are Monte Carlo means. Offline calibration costs are additional.}
\label{tab:judgebench-acq}
\setlength{\tabcolsep}{5pt}
\begin{tabular}{@{}rrrrrrrr@{}}
\toprule
$T$ & $M$ & \multicolumn{3}{c}{State-aware audits} & \multicolumn{3}{c}{Uniform audits}\\
\cmidrule(lr){3-5}\cmidrule(lr){6-8}
 & & Reveals & Unique & Reuse & Reveals & Unique & Reuse\\\midrule
512 & 16 & 14.6 & 13.5 & 1.1 & 14.4 & 13.4 & 1.0\\
512 & 64 & 61.0 & 44.4 & 16.6 & 61.3 & 45.6 & 15.8\\
2048 & 16 & 14.5 & 13.5 & .9 & 14.3 & 13.4 & .9\\
2048 & 64 & 60.8 & 44.2 & 16.6 & 60.5 & 45.1 & 15.4\\\bottomrule
\end{tabular}
\end{table}

The policy Monte Carlo intervals condition on the fixed tables. Separately, 20,000 paired item bootstraps resample development items within each state while retaining calibration state weights. Calibration greedy is best or tied in 90.625\% of those resamples; its regret against the resampled best action has mean $.00211$ and percentile interval $[0,.03041]$. This is a finite-item sensitivity check, not population-level uncertainty or policy re-learning. Historical price, token, labeling-time, and latency fields are absent from the release.

\textbf{The finite-dictionary online policy.}
We separately execute the online policy in Algorithm~\ref{alg:design}. The reliability dictionary is constructed only from the 144 calibration items: deterministic two-fold validation selects two candidate sets for each prespecified cardinality $k\in\{1,2\}$, signals are refit on all calibration items, and the dictionary and both designs are then frozen. This construction uses 144 trusted labels and 1,584 archived judge outputs before online interaction. Dictionary learning is outside the fixed-dictionary theorem. The online policies share this endowment, observe one archived judge output per round, and obey the same pathwise trusted-label cap. All four $(T,M)$ cells, both dictionary sizes, matched uniform-state and uniform-action ablations, calibration controls, and audit-only TS were retained.

The development Bernoulli NLL is $.64475$ for the pooled calibration model and $.65311$ for a saturated action--state calibration model, versus $.68828$ and $.67361$ for the $k=1$ and $k=2$ dictionaries. Thus neither selected dictionary predicts the development table as well as the pooled model. Table~\ref{tab:judgebench-actual} shows the corresponding policy result.

\begin{table}[!htbp]
\centering\small
\caption{Non-confirmatory JudgeBench execution of Algorithm~\ref{alg:design}. RK denotes \texttt{Reliability-Kernel-Design}. Entries are mean cumulative regret across 512 repetitions. Lower is better. Both prespecified dictionary sizes and all four cells are retained.}
\label{tab:judgebench-actual}
\setlength{\tabcolsep}{4.0pt}
\begin{tabular}{rrrrrr}
\toprule
$T$ & $M$ & \shortstack{Calibration\\greedy} & \shortstack{Uniform\\verified TS} & \shortstack{RK design\\$k=1$} & \shortstack{RK design\\$k=2$} \\
\midrule
512 & 16 & 0.000 & 5.469 & 58.801 & 80.364 \\
512 & 64 & 0.000 & 4.921 & 62.836 & 113.122 \\
2048 & 16 & 0.000 & 21.758 & 232.605 & 288.367 \\
2048 & 64 & 0.000 & 19.090 & 238.288 & 312.919 \\
\bottomrule
\end{tabular}
\end{table}

The geometric-mean regret ratios of the RK designs to uniform verified TS are $11.63$ and $16.46$. Against the same action design with uniform-state auditing, the $k=1$ design is worse in all four cells; the $k=2$ design is better in one cell and worse in three. The selective gate approximately doubles the rounds needed to collect $M$ audits for $k=1$ and quadruples them for $k=2$. At $(T,M)=(2048,64)$, the two RK designs use 30.0 and 24.6 unique development items for 64 reveal calls, versus 43.6 under uniform verification. This is adverse, non-confirmatory development evidence: the 100 development labels had been opened by earlier analyses, while retained loaders skip the 106 reserved rows before decoding.

\subsection{Thirty-two LLM judges: signal clipping and reliability coverage}\label{app:panel32}
\textbf{Task, feedback, and calibration.}
A fresh replay couples all 32 archived LLM judges from Li et al.~\cite{panel32} to ChaosNLI human-majority labels~\cite{chaosnli}. An action chooses a judge before the task family is realized; the three action-independent states are MNLI-matched, SNLI, and $\alpha$NLI; the cheap output is the selected judge's archived class; and an audit reveals the majority of 100 human annotations. The hash split contains 591 calibration and 1,248 development items. The 1,140-row reserved partition is excluded from the reported study. All ten policies, both prespecified dictionary sizes, four horizon--budget settings, and 512 repetitions were fixed before outcome decoding for this study. Calibration-informed policies share an offline endowment of 591 trusted labels and 18,912 judge outputs, observe one archived output per online round, and obey the same pathwise reveal cap. The study comprises 20,480 trajectories.

The saturated action--state model has development Bernoulli NLL $.48280$, compared with $.52020$ for the pooled model and $.59460$ and $.53301$ for the $k=1$ and $k=2$ dictionaries. All 64 refit signal entries for $k=1$ and 63 of 64 for $k=2$ equal the clipping ceiling $.24$. Consequently, both action designs collapse to a one-point design chosen from a clipped tie. The RK policies have lower fixed-table regret than their uniform-action and uniformly verified controls, but their regret is exactly equal to the matched uniform-state ablation in every cell and repetition. The state gate therefore changes only acquisition time: at $(T,M)=(4096,128)$, the mean time to collect the cap is 192.5 rounds for $k=1$ and 386.5 for $k=2$, versus 128 under uniform-state verification. This adverse result shows that high-accuracy judge rewards can lie outside the useful range of the centered clipped dictionary. It supplies no positive evidence for state-aware auditing; historical price, token, latency, and human-labeling costs are unavailable and are not imputed.

\textbf{Separating reliability coverage from action-model expansion.}
Expanding both the reliability class and the signal dictionary changes two parts of the policy simultaneously. A postdiagnostic study separates their effects. For each $k\in\{1,2\}$, the reliability class contains all three $k$-subsets of the three task states, while the signal dictionary retains only the two baseline-offset signals selected by calibration-only cross-validation; the environment class is their Cartesian product. The reliability LP therefore gives a uniform state gate, while the action LP is unchanged from the two-signal design. Because the common fitted baseline cancels from every likelihood contrast, the product-separation guarantee applies conditional on these fitted signals. It does not cover calibration or population uncertainty.

\begin{table}[!htbp]
\centering\small
\caption{Thirty-two-judge development replay: cumulative regret, mean $\pm$ Monte Carlo standard error from 512 paired replications. Each method keeps the same two fitted signals and uniform state gate. The full-class method includes all reliability sets of size $k$ in its decoder, while the matched control includes only the two fitted sets. Bold denotes the lower mean in each row, not a significant difference.}
\label{tab:panel32decoupled}
\setlength{\tabcolsep}{10pt}
\begin{tabular}{@{}rrrr@{}}
\toprule
$T$ & $M$ & Full reliability class & Two fitted reliability sets\\\midrule
\multicolumn{4}{@{}l}{\textit{One reliable state ($k=1$)}}\\
1,024 & 32  & $32.20\pm.92$   & $\mathbf{31.79}\pm.93$\\
1,024 & 128 & $36.09\pm.82$   & $\mathbf{35.65}\pm.83$\\
4,096 & 32  & $127.10\pm3.79$ & $\mathbf{125.76}\pm3.79$\\
4,096 & 128 & $140.88\pm3.63$ & $\mathbf{139.90}\pm3.64$\\\midrule
\multicolumn{4}{@{}l}{\textit{Two reliable states ($k=2$)}}\\
1,024 & 32  & $\mathbf{16.00}\pm.24$ & $16.32\pm.24$\\
1,024 & 128 & $\mathbf{20.38}\pm.21$ & $20.57\pm.20$\\
4,096 & 32  & $\mathbf{61.38}\pm.98$ & $62.60\pm.97$\\
4,096 & 128 & $\mathbf{69.61}\pm.91$ & $70.03\pm.90$\\
\bottomrule
\end{tabular}
\end{table}

Closing the reliability class changes mean regret little relative to the matched two-set model: the geometric ratios are $1.011$ for $k=1$ (all four comparisons unresolved) and $.986$ for $k=2$ (one lower-regret and three unresolved comparisons). These intervals do not establish equivalence. Relative to full closure, which also inserts every fitted signal into the action LP, the decoupled design improves all four cells for each $k$, with ratios $.802$ and $.862$. Thus the earlier closure penalty came from expanded action-signal constraints, not uniform state coverage. The $k=2$ design also improves all four cells against a pure hitting gate (ratio $.829$), uniform verified TS ($.581$), and calibration greedy ($.464$). The adverse $k=1$ result is retained: it is worse than uniform verified TS in all four cells (ratio $1.143$) and unresolved against pure hitting ($.990$).

The study comprises 24,576 trajectories over 48 policy--cell groups. Independent checks verify the full reliability class, unchanged two-signal action design, hard caps, acquisition ledgers, and regret rescoring; the largest rescore error is $1.38\times10^{-11}$. Calibration-informed policies share 591 trusted labels and 18,912 archived judge outputs, whereas audit-only TS does not. Monetary, token, latency, and labeling-time prices are unavailable and are not imputed. Protected confirmation rows were skipped before decoding. This is development evidence selected after diagnosing the initial Panel32 failure, not fresh confirmation. It neither learns the audit gate nor establishes population-level coverage.

\textbf{Calibration uncertainty.} An aggregate-only calibration analysis sharpens this boundary. Simultaneous two-sided Clopper--Pearson intervals use 95\% familywise coverage over all 224 pooled and state-subset action means considered by the selection family. None of the four fitted targets certifies a unique best action: their point-estimate margins are $.00779,.000216,.00919,0$, whereas the corresponding lower-bound versus strongest-competitor-upper-bound margins are $-.108,-.156,-.0959,-.134$. A conservative nearest-candidate Hoeffding calculation, holding these point gaps fixed, requires respectively 300,053, 391,088,570, and 215,323 observations per controlled quantity; the zero-gap target cannot certify at any finite size under this rule. These are sufficient sizes, not power estimates or collection recommendations. Failure to certify does not show that a selected action is wrong, but it prevents this aggregate evidence from supplying a useful $\delta$ in Corollary~\ref{cor:caltransfer}; exact dictionary coverage is stronger still. The calculation assumes independent calibration requests within each action and subset and does not cover dictionary misspecification or population shift.

\textbf{Reserved-data provenance.} Reserved groups are excluded from the reported development fitting, policy selection, and performance evaluation, but their access histories differ. Retained JudgeBench loaders skip the 106 reserved rows before decoding. RAGTruth reserved labels were processed by upstream preparation and included in aggregate summaries, without use in the reported development comparisons. The LPHB confirmation rows were exposed and subsequently quarantined. SummEval reserved rows were decoded during processing but not used for development. These groups therefore do not constitute a uniformly untouched confirmation pool. No study above is reclassified as independent confirmation on the basis of those reserved groups.

\subsection{Numerical checks of the theoretical identities}\label{app:verification}
These checks assess the implementation of algebraic identities, optimization programs, and trajectory accounting. They supplement the proofs in Appendix~\ref{app:proofs} and are not an independent proof review or evidence of practical performance. Table~\ref{tab:numericalchecks} groups the checks by the mathematical claim they assess. Full generators, tolerances, and outputs are supplied with the reproducibility code.

\begin{table}[!htbp]
\centering\small
\caption{Numerical verification by scientific purpose. Errors are maximum absolute discrepancies unless the entry specifies a ratio or a violation count. Identity checks and synthetic theorem diagnostics are distinct from the policy comparisons reported in Section~\ref{sec:experiments}.}
\label{tab:numericalchecks}
\setlength{\tabcolsep}{5pt}
\begin{tabular}{@{}>{\raggedright\arraybackslash}p{.25\linewidth}>{\raggedright\arraybackslash}p{.32\linewidth}>{\raggedright\arraybackslash}p{.36\linewidth}@{}}
\toprule
Identity or guarantee & Evaluation systems & Observed discrepancy or check\\\midrule
Product separation & 180 random dictionaries; 40 coordinate cases & Minimum normalized margin $.9999999999999989$; coordinate error $1.74\times10^{-18}$\\[3pt]
Deterministic coverage & 1,440 LP cases from 240 set systems & Hitting error $8.88\times10^{-16}$; coverage violation $2.22\times10^{-16}$\\[3pt]
Bernoulli testing bounds & 2,000 separation cases; 100,000 KL and 251,001 Hellinger comparisons & No inequality violation; observed separation at least twice the conservative threshold\\[3pt]
Causal acquisition & 6,300 stationary programs; 5,000 occupancy programs; 45,157 adaptive histories & Acceptance-law error $3.34\times10^{-16}$; causal cost error $1.14\times10^{-13}$\\[3pt]
Finite-horizon accounting & 1,000 joint designs; independent forward recursion & Design-length error $4.27\times10^{-14}$; regret error $2.85\times10^{-14}$\\[3pt]
Adaptive information floor & 144 development systems and measured adaptive laws & Minimum information divided by $\eps I_0$ equals one\\[3pt]
Diffuse reliability & 44 hitting LPs; 210,000 KL comparisons; 19,200 decoder trajectories & Hitting error $8.88\times10^{-16}$; no five-standard-deviation upper-bound violation\\[3pt]
Misspecification bounds & 100,000 random systems; 10,000 flattening cases; 72,000 decoder trajectories & No constant violation; target-preservation error $1.11\times10^{-16}$\\[3pt]
Finite-summary construction & Tree depths through 12; moment orders through eight & 8,188 adaptive paths checked; zero requested-moment discrepancies on 29,520 points\\
\bottomrule
\end{tabular}
\end{table}

\textbf{Trajectory reconstruction.} The adaptive study records every action and every accepted outcome. Independent reconstruction of all 36,864 primary trajectories reproduces cumulative regret and terminal likelihoods to $6.14\times10^{-12}$ and $8.53\times10^{-14}$, respectively. The 98,304 supplementary trajectories use the same reconstruction. Timing checks ensure that actions precede the current state, audit decisions follow it, and every path respects the hard cap. These checks validate accounting rather than the accuracy of a learned outcome model.

\textbf{Optimization and acquisition checks.} Additional product-design comparisons verify the exact $\chi(\bm p)$ cost on 1,000 cases and the prior-mixture and fixed-design dual identities on 500 cases each. The largest prior-mixture error is $9.02\times10^{-11}$, and the largest fixed-design inequality violation is $1.19\times10^{-15}$. The finite-horizon design/exploitation partition has error $1.57\times10^{-13}$. These independent formulations detect coding or normalization errors without replacing the analytical arguments.

\FloatBarrier
\section{Detailed comparisons with prior work}\label{app:comparators}

\textbf{Verified and prediction-powered bandits.}
\texttt{PP-LUCB} studies best-arm identification with LLM judges, selective audits, and residual-aware allocation \cite{aolucb}. PPAI estimates prediction weights under adaptive sampling \cite{ppai}. Limited-verification bandits quantify the value of trusted observations under attacks \cite{rangi}. These works establish that auditing and correction are not new primitives. Our objective is cumulative true-reward regret under a hard audit budget, and our new parameter is an unknown conditional-reliability class.
\texttt{MLA-UCB} uses machine-learning-generated surrogate rewards for cumulative-regret reduction \cite{mlaucb}, while confidence--budget matching treats optional costly observations in sequential learning \cite{cbm}. They provide controls for judge-assisted and audit-scarce regimes, respectively. Neither models a latent post-action reliability set coupled to kernel action geometry.

\textbf{Kernel and multi-fidelity optimization.}
Auxiliary-feedback \texttt{IGP-UCB} uses parameterized correlated feedback \cite{verma}. \texttt{PA-GP-UCB} models prediction and reward jointly and observes both during online selection \cite{pagp}. Robust multi-fidelity BO guards against unreliable low-fidelity sources \cite{mikkola}. Kernel experimental design and misspecified GP bandits already control target approximation error \cite{bogunovic,camilleri}. ASB permits post-surrogate acquisition: its fixed-comparison theorem gives PRE/POST variance-optimal allocations, its main guarantees cover finite policy classes and a hard-budget linear construction, and its learned-refinement proposition charges residual-model error \cite{asb}. Chandak et al. optimize context-conditioned collection policies and use exact multi-proposal rejection sampling to reuse adaptively collected data \cite{chandak}. These results establish post-surrogate timing, residual-based allocation, kernel modeling, and rejection sampling as existing components.

\textbf{Controlled sensing and experimental design.}
Controlled multihypothesis testing optimizes Chernoff information over sensing controls, and adaptive causal controls can improve on open-loop designs with three or more hypotheses \cite{nitinawarat,mukherjee}. Bayesian decision-region determination already selects tests only to resolve the action-relevant equivalence class, including correlated noisy tests \cite{javdani,cheneced}. Transductive linear best-arm identification separates measurement and decision sets through experimental design \cite{fiez}; recent hybrid-feedback GLB identification optimizes a joint query space and includes heterogeneous acquisition costs \cite{zeng}. These methods supply decision-focused likelihood tests and query designs with exponentially decreasing error. Proposition~\ref{prop:safeadaptive} composes their predictable adaptive allocation principle with a fixed robust information floor and the exact action-before-state cost identity. The composition is scoped to this protocol and is not a new generic adaptive-testing principle. Our reliability--action factorization, adaptive converse, and deterministic coverage formula specialize this approach to unknown reliable-state sets; they do not characterize general adaptive optimality.

\textbf{Mixture testing and the diffuse regime.} Adaptive mixture testing already shows that repeated allocation can overcome rare-component dilution when the same latent item can be queried repeatedly \cite{leevaliant}. Our diffuse construction instead makes state repetitions negligible over the decision horizon. After the negligible-collision coupling, marginalizing the reliability set gives a standard $d$-armed Bernoulli identification problem with gap $\rho\Delta$; classical best-arm lower bounds and fixed-budget conversions already yield the $d/(\rho^2\Delta^2)$ scale up to logarithms \cite{garivierkaufmann,fc2fb}. The resulting comparison shows that rank and hitting number alone are insufficient to determine audit complexity.

\textbf{Partial monitoring.} Kernelized linear partial monitoring already supplies information-directed sequential policies, observability-based regret classifications, and RKHS observation operators \cite{kirschnerpm}. Its contextual extension also plans against a known context distribution, so a sequential information ratio, context-weighted observability coefficient, or kernelization is not new here. A finite verified-feedback dictionary can be unfolded into a constrained partial-monitoring game over contingent audit maps; our protocol-specific remainder is the hard pathwise audit cap together with the exact action-before-state compilation cost. This is why the paper states a structured finite-family factorization rather than a new generic sequential complexity.

\textbf{Nonparametric conditional feedback.} Nonparametric models provide another way to represent conditional feedback laws. \texttt{CME-UCB} already estimates conditional mean embeddings while sequentially optimizing an unknown conditional expectation \cite{raychowdhurycme}; distributionally-aware kernel bandits model probability kernels through kernel mean embeddings and prove UCB guarantees and lower bounds \cite{takemorikme}; and active cost-aware streaming labeling chooses expensive labels after inputs arrive and gives matching discrete and continuous-RKHS rates \cite{caikandasamy}. These models do not impose the action-before-state protocol with a hard audit cap. Extending their estimators to that protocol would require accounting for the acquisition costs in Proposition~\ref{prop:jointdesign}.

\textbf{Post-action contexts.} Post-serving-context bandits also observe informative features only after action selection; \texttt{poLinUCB} learns a pre-to-post context map and attains tight linear-contextual regret \cite{wangpost}. The trusted outcome here is optional and subject to a hard pathwise budget. The row-max formula is the exact domination constant obtained when classical rejection sampling is restricted to proposals of the form $q(a)\mu(s)$; it quantifies acquisition cost under this proposal restriction.

\textbf{Cost-sensitive acquisition.} Costly-observation bandits provide complementary cost-sensitive guarantees. Schur et al. give two-phase GP algorithms, cost-adjusted information-gain complexities, and matching finite-arm rates for observation-priced objectives \cite{schur}. Gong and Zhang establish hard free-observation-budget phase transitions for contextual bandits with paid observations \cite{gong}. Harding et al. choose a costly source before an unobserved group is revealed and obtain matching budgeted minimax rates by effective-sample-size design and post-stratification \cite{harding}; Reyzin et al. give tight cost-adaptive rates for heterogeneous expert queries \cite{reyzin}. Progressive-feedback bandits quantify early-surrogate value by conditional mutual information, but eventually observe the primary outcome for every play \cite{impatient}. Cost-aware acquisition, hard feedback budgets, and rejection-sampled reuse are therefore prior methods. Harding et al. do not select labels after seeing the group or study sequential decision regret, while Chandak et al. do not study hard audit budgets, sampling regret, or latent reliability. The remaining scoped distinction is the finite-dictionary reliability--kernel theorem and its adaptive converse; Schur et al. explicitly leave general-kernel optimality open, so Proposition~\ref{prop:jointdesign} is not presented as a matching kernel converse.

\textbf{LLM routing with restricted feedback.} Recent LLM-routing studies consider related feedback restrictions. Wei et al. give linear-contextual upper and lower bounds under a limited reward-feedback budget \cite{weillm}. Sridhar et al. combine context-dependent graph feedback with ML surrogate rewards and use expert aggregation to recover a reward-only guarantee when those surrogates are misspecified \cite{cabs}. Unlike both settings, trusted outcomes here are optional after observing a post-action state, and the unknown object is the state-conditional reliability law.

\textbf{AI evaluation.}
ARES uses prediction-powered inference for RAG evaluation \cite{ares}. HbBoPs applies deep-kernel Bayesian optimization to prompt selection \cite{hbbops}. NAOD designs preference queries while accounting for judge nuisance \cite{naod}. Our model is narrower: fixed configurations, i.i.d. requests, bounded outcomes, and a fixed kernel. It excludes contextual routing, nonstationarity, and within-agent-step credit assignment.

\begin{table}[!htbp]
\centering\footnotesize
\begin{tabular}{>{\raggedright\arraybackslash}p{.22\linewidth}>{\raggedright\arraybackslash}p{.31\linewidth}>{\raggedright\arraybackslash}p{.37\linewidth}}
\toprule
Comparator & Established capability & Distinction tested here\\\midrule
ASB \cite{asb} & post-surrogate acquisition, learned residual error, hard budgets & latent reliability dictionary and kernel decision geometry\\
Multi-rejection \cite{chandak} & exact reuse under multiple collection policies & hard-budget regret and latent reliability learning\\
Cost-aware GP \cite{schur} & optional labels and cost-adjusted kernel information gain & post-state reliability and hard pathwise audits\\
Controlled sensing \cite{nitinawarat,mukherjee} & adaptive control and likelihood testing & latent reliability cost under action-before-state timing\\
Auxiliary \texttt{IGP-UCB} \cite{verma} & kernelized parameterized auxiliary feedback & audit scarcity and latent reliable-state dictionary\\
\texttt{PP-LUCB} \cite{aolucb} & selective audits for identification & cumulative regret and kernel action geometry\\
Robust / misspecified BAI \cite{altschuler,reda} & contaminated or structurally misspecified identification & separate target and audit-law error costs\\
Audit-only / CBM \cite{cbm} & valid without an informative judge & benchmark that proposed feedback must improve\\
\bottomrule
\end{tabular}
\caption{Comparison of feedback protocols, guarantees, and modeling assumptions.}
\label{tab:boundary}
\end{table}

\end{document}